\documentclass{article}

\usepackage{hyperref}
\usepackage[dvipsnames]{xcolor}
 \hypersetup{
    colorlinks=true,
    linkcolor=MidnightBlue,
    citecolor=Green
 }

\usepackage[accepted]{icml2025}

\usepackage{amsmath}
\usepackage{amssymb}
\usepackage{mathtools}
\usepackage{amsthm}
\usepackage{amscd,mathrsfs,amsfonts,bbm}

\usepackage{enumitem}

\usepackage[capitalize,noabbrev]{cleveref}

\usepackage{crossreftools}
\pdfstringdefDisableCommands{%
    \let\Cref\crtCref
    \let\cref\crtcref
}

\usepackage{xargs} 
\usepackage{xstring}
\usepackage{etoolbox}

\usepackage{etoc}
\usepackage[bibliography=common]{apxproof}

\usepackage{tcolorbox}
\newtcolorbox[auto counter, number within=section,crefname={insight}{insights}]{insightbox}[1][]{%
    colback=blue!5!white,
    colframe=blue!10!white,
    sharp corners,
    before upper={{\bfseries Insight~\thetcbcounter}.\ },
    #1
}

\newtcolorbox[auto counter, number within=section,crefname={insight}{insights}]{recommendationbox}[1][]{%
    colback=green!5!white,
    colframe=green!15!white,
    sharp corners,
    before upper={{\bfseries Recommendation~\thetcbcounter}.\ },
    #1
}

\newtheoremrep{thm}{Theorem}[section]
\newtheoremrep{cor}[thm]{Corollary}
\newtheoremrep{lemma}[thm]{Lemma}
\newtheoremrep{fact}[thm]{Fact}
\newtheoremrep{prop}[thm]{Proposition}

\newtheorem{assumption}[thm]{Assumption}

\theoremstyle{remark}
\newtheorem{remark}[thm]{Remark}

\crefname{thm}{theorem}{theorems}
\crefname{assumption}{assumption}{assumptions}
\crefname{cor}{corollary}{corollaries}
\crefname{prop}{proposition}{propositions}
\crefname{lemma}{lemma}{lemmas}
\crefname{fact}{fact}{facts}

\usepackage{mdframed}
\newmdtheoremenv{algo}{Algorithm}
\newmdtheoremenv{procedure}{Decision process}

\newlist{assnum}{enumerate}{1} %
\setlist[assnum]{label=(\roman*), ref=\theassumption(\roman*)}
\crefalias{assnumi}{assumption} 

\newlist{lemnum}{enumerate}{1} %
\setlist[lemnum]{label=(\roman*), ref=\thelemma(\roman*)}
\crefalias{lemnumi}{lemma} 

\newlist{thmnum}{enumerate}{1} %
\setlist[thmnum]{label=(\roman*), ref=\thethm(\roman*)}
\crefalias{thmnumi}{thm} 

\newlist{cornum}{enumerate}{1} %
\setlist[cornum]{label=(\roman*), ref=\thecor(\roman*)}
\crefalias{cornumi}{cor} 

\newlist{definitionnum}{enumerate}{1} %
\setlist[definitionnum]{label=(\roman*), ref=\thedefinition(\roman*)}
\crefalias{definitionnumi}{definition} 

\newlist{propnum}{enumerate}{1} %
\setlist[propnum]{label=(\roman*), ref=\theproposition(\roman*)}
\crefalias{propnumi}{prop}

\newlist{examplenum}{enumerate}{1} %
\setlist[examplenum]{label=(\roman*), ref=\theexample(\roman*)}
\crefalias{examplenumi}{example}

\usepackage{dsfont}
\usepackage{nicefrac}

\DeclareMathOperator*{\argmax}{arg\,max}
\DeclareMathOperator*{\argmin}{arg\,min}

\newcommand{\R}{\mathbb{R}}

\newcommand{\norm}[1]{\left\Vert{#1}\right\Vert}

\usepackage{tcolorbox}
\newtcolorbox{defbox}{colback=black!5!white,colframe=black!75!black}
\newtcolorbox{asmbox}{colback=black!5!white,colframe=black!75!black}
\newtcolorbox{thmbox}{colback=red!5!white,colframe=red!75!black}

\usepackage{braket}

\newcommand{\diag}{\operatorname{diag}}
\newcommand{\lmo}{\operatorname{lmo}}
\newcommand{\sign}{\operatorname{sign}}
\newcommand{\RMS}{\mathrm{RMS}}

\newcommandx{\QC}[2][1={},2={}]{\ifstrempty{#1}{Q#2}{Q_{#1}\ifstrempty{#2}{}{(#2)}}}
\newcommandx{\PC}[2][1={},2={}]{\ifstrempty{#1}{P#2}{P_{#1}\ifstrempty{#2}{}{(#2)}}}
\newcommandx{\HC}[2][1={},2={}]{\ifstrempty{#1}{H#2}{H_{#1}\ifstrempty{#2}{}{(#2)}}}
\newcommandx{\MC}[2][1={},2={}]{\ifstrempty{#1}{M#2}{M_{#1}\ifstrempty{#2}{}{(#2)}}}

\newcommandx{\EF}[2][1={k},2={}]{\mathbb E\ifstrempty{#1}{}{_{#1}}#2}

\usepackage{algorithm}
\usepackage[noend]{algpseudocode}
\newcommand{\algfont}[1]{\textbf{#1}}

\newcommand{\blue}[1]{{\color[rgb]{0,0,1}#1}}

\usepackage{adjustbox}

\usepackage{array,multirow,hhline,graphicx}
\usepackage{colortbl}

\usepackage{pifont}

\newcommand{\fmin}{f^\star}

\usepackage{xspace}
\newcommand{\Scion}{{\sc Scion}\xspace}
\newcommand{\uScion}{{\sc Unconstrained Scion}\xspace}

\icmltitlerunning{Training Deep Learning Models with Norm-Constrained LMOs}

\begin{document}

\twocolumn[
\icmltitle{Training Deep Learning Models with Norm-Constrained LMOs}

\icmlsetsymbol{equal}{}

\begin{icmlauthorlist}
\icmlauthor{Thomas Pethick}{epfl}
\icmlauthor{Wanyun Xie}{epfl}
\icmlauthor{Kimon Antonakopoulos}{epfl} \\
\icmlauthor{Zhenyu Zhu}{epfl}
\icmlauthor{Antonio Silveti-Falls}{yyy}
\icmlauthor{Volkan Cevher}{epfl}
\end{icmlauthorlist}

\icmlaffiliation{epfl}{LIONS, EPFL}
\icmlaffiliation{yyy}{CVN, Université Paris-Saclay} %

\icmlcorrespondingauthor{Thomas Pethick}{thomas.pethick@epfl.ch\vspace{-0.5em}}

\icmlkeywords{Non-Euclidean, Linear Minimization Oracle, Deep Learning, Spectral Norm, Frank-Wolfe}

\vskip 0.3in
]

\printAffiliationsAndNotice{}  %

\begin{abstract}
In this work, we study optimization methods that leverage the linear minimization oracle ($\lmo$) over a norm-ball. We propose a new stochastic family of algorithms that uses the $\lmo$ to adapt to the geometry of the problem and, perhaps surprisingly, show that they can be applied to unconstrained problems. The resulting update rule unifies several existing optimization methods under a single framework. Furthermore, we propose an explicit choice of norm for deep architectures, which, as a side benefit, leads to the transferability of hyperparameters across model sizes. Experimentally, we demonstrate significant speedups on nanoGPT training using our algorithm, Scion, without any reliance on Adam. The proposed method is memory-efficient, requiring only one set of model weights and one set of gradients, which can be stored in half-precision. 
The code is available at \url{https://github.com/LIONS-EPFL/scion}.
\vspace{-1em}
\end{abstract}

\etocdepthtag.toc{mtchapter}
\etocsettagdepth{mtchapter}{subsection}
\etocsettagdepth{mtappendix}{none}

\vspace{-0.5em}
\section{Introduction}
\vspace{-0.5em}

\begin{table*}[t]
\caption{Special instantiations of \ref{eq:uSCG} according to different choices of norm. 
The reduced SVD is given as $d=U\diag(\sigma) V^\top$.
Weight decay is captured by \ref{eq:SCG}, which provides explicit control on the norm of the parameters.
}
\label{tbl:lmo}
\bgroup
\def\arraystretch{1.2}
\resizebox{\textwidth}{!}{
\begin{tabular}{|l|c|c|c|c|c|c|}
\hline
Method & $\alpha_k$ & Problem & $\lmo$ constraint set $\mathcal D$ & $\lmo$ & Reference \\
\hline
\hline
Normalized SGD & $1$ & Unconstrained  & Euclidean $\|\cdot\|_2$-ball & $-\rho \tfrac{d}{\|d\|_2}$ & \citep{hazan2015beyond} \\
Momentum Normalized SGD & $[0,1]$ & Unconstrained  & Euclidean $\|\cdot\|_2$-ball & $-\rho \tfrac{d}{\|d\|_2}$ & \citep{cutkosky2020momentum}\\
\hline
SignSGD & $1$ & Unconstrained  & Max-norm $\|\cdot\|_\infty$-ball & $-\rho \sign(d)$ & \citep[Thm. 1]{bernstein2018signsgd}$\blue{^2}$ \\
Signum & $[0,1]$ & Unconstrained  & Max-norm $\|\cdot\|_\infty$-ball & $-\rho \sign(d)$ & \citep[Thm. 3]{bernstein2018signsgd}$\blue{^2}$ \\
\hline
\ref{eq:Muon}$\blue{^1}$ & $[0,1]$ & Unconstrained & Spectral $\|\cdot\|_{\mathcal{S}_\infty}$-ball & $-\rho UV^\top$ & \citep{jordan2024muon} \\
\hline
\end{tabular}
}
\egroup
\footnotesize 
$\blue{^1}$ With non-Nesterov based momentum. 
$\blue{^2}$ The theoretical guarantee relies on increasing batch size. \\
\end{table*}

Deep learning has greatly benefited from adaptive optimization methods such as RMSProp \citep{hinton2012neural}, AdaGrad \citep{JMLR:v12:duchi11a,mcmahan2010}, and Adam \citep{kingma2014adam}, which dynamically change the geometry of the problem based on gradients encountered on-the-fly during training. 
While these methods have demonstrated remarkable success, they fundamentally treat neural networks (NNs) as optimization problems where we lack any prior knowledge about their particular setting. 

\looseness=-1However, NNs are far from being
 black boxes—their structure is not only known but they are deliberately designed. 
This simple observation raises directly the question: 

\begin{center}
\emph{Is it more beneficial to adapt the optimizer a priori, instead of exploring their respective geometries on-the-fly?}
\end{center}

Adaptation on-the-fly has been the defacto standard in this setting, with adaptive algorithms, such as Adam \cite{kingma2014adam}, dominating the deep learning model training. 

One possible way for adaptation a priori, which we focus on in this work, is to modify the underlying norm used to measure distances in the parameter space. There is precedence to our proposal, as the 
early work by \citet{carlson2015stochastic,carlson2015stochasticb,carlson2015preconditioned} 
introduced the stochastic spectral descent method (SSD), which performs steepest descent in the spectral norm, and demonstrated that the method can substantially accelerate deep learning training.

The significance of the SSD approach has been very recently brought back to attention by \citet{bernstein2024old}, who showed that the Shampoo optimizer \citep{gupta2017unified}—winner of the external tuning track at the 2024 AlgoPerf: Training Algorithms competition \citep{dahl2023benchmarking}—can be viewed as SSD when a certain accumulation is disabled. 
Moreover, \citet{bernstein2024old} introduced an efficient Newton-Schultz iteration to replace the approximate SVD calculations previously required. 
\Citet{jordan2024muon} incorporated the Newton-Schultz iteration with additional momentum into SSD under the name Muon to achieve impressive results on the nanoGPT architecture by applying it to the hidden layers.

\vspace{-10pt}
\paragraph{Contributions}
This work focuses on developing an algorithmic framework that can exploit an appropriate choice of norm for the entire neural network with particular emphasis on hyperparameter transfer across model sizes \citep{yang2021tensor}, convergence and practical performance.

To adapt to the geometry a priori, we will build on a classical (but unexpected) family of algorithms in contrast to the steepest descent methods, namely the ones involving the linear minimization oracle ($\lmo$) over a norm-ball constraint known as the Conditional Gradient (CG) methods.

\looseness=-1 While classically being used for constrained problems, we take the slightly unusual approach by showing that the $\lmo$s can be used even for unconstrained problems.
The algorithm, dubbed as the unconstrained Stochastic Conditional Gradient method (\ref{eq:uSCG}), shows improvements both theoretically and practically when the norm-ball constraint matches the natural geometry of the problem.%

In particular, we build on the Stochastic Conditional Gradient (SCG) method of \citet{mokhtari2020stochastic} from the constrained setting, which provides explicit control on the norm of NN weight matrices. This is particularly relevant for robust image classification \citep{cisse2017parseval}, generalization bounds \citep{GenBound17}, Lipschitz control of generative adversarial networks \citep{arjovsky2017wasserstein,miyato2018spectral}, diffusion models \citep[Sec. 2.3]{karras2024analyzing}, and for ensuring Lipschitz continuity of NNs \citep{large2024scalable}.

Concretely, we make the following contributions:

\emph{Theoretical rates:} 
    We introduce a new, stochastic $\lmo$ based family of algorithms \ref{eq:uSCG}, which can exploit the specific geometry of the problem.
    In doing so we achieve the $O(n^{-1/4})$ order optimal convergence rate under general nonconvexity and stochasticity for \ref{eq:uSCG} \citep{arjevani2022lowerboundsnonconvexstochastic}.
    Moreover, we provide a new analogous guarantee for the constrained case for \ref{eq:SCG}.
    A major benefit of both methods is that their stepsize is agnostic to the Lipschitz constant, in contrast to steepest descent which requires the stepsize to be taken small enough.
    
    \emph{Unification}: 
    Our $\lmo$-based approach provides a unifying framework for various popular algorithms, based on the norm choice (see \Cref{tbl:lmo}); as a byproduct we establish the first provable rate for the Muon optimizer with and without weight decay. 
    More importantly, this generality allows us to design a new method for deep learning based on operator norms called \Scion (\Cref{alg:scion}), which enjoys zero-shot hyperparameter transferability \citep{yang2022tensor},
    and {can be implemented storing only one set of parameters and one gradient (stored in half-precision), economizing on memory in large-scale training}.

    \emph{Numerical validation}:
     We carry out exhaustive numerical evaluation of \Scion ranging from small scale experiments on MLPs and CNNs to ViT on ImageNet and NanoGPT models with up to 3B parameters.
     We consistently observe the transferability properties across all settings for \Scion.
     The scheme is more tolerant to large batch sizes and exhibits superior performance due to the a priori adaptation.

An additional $\lmo$-based algorithm ALMOND can be found in \Cref{subsec:almond}, generalizing the Normalized SGD based method of \citet{zhao2020stochastic}, for training with large-batches. 
Key differences of ALMOND with \ref{eq:uSCG} and \ref{eq:SCG} are discussed to further motivate our algorithms.

\begin{toappendix}
\section{Preliminaries}\label{app:prelim}
\subsection{Relationship between steepest descent and \ref{eq:uSCG}}\label{sec:fenchel}
There are two prominent families of norm-based non-Euclidean method, namely the ones based on the $\lmo$ and the ones based on the sharp operator, both of which can be expressed in the terms of the Fenchel conjugate.
The Fenchel conjugate of a proper, convex, and lower semicontinuous function $h: \mathcal X \to \R \cup \{\infty\}$ is defined as:

\begin{equation*}
h^*(s) = \sup_{x \in \mathcal X} \left\{ \langle s, x \rangle - h(x) \right\},
\end{equation*}
where $s \in \mathcal X$. 
The subdifferential $\partial h^*$ is equivalent to the argmax of the conjugate operation, i.e.,
\begin{equation*}
\partial h^*(s) = \operatorname*{argmax}_{x \in \mathcal X} \left\{ \langle s, x \rangle - h(x) \right\}.
\end{equation*}
This follows from the Fenchel-Young inequality (see e.g. \citet{bauschke2012fenchel}).

\paragraph{LMO}
The $\lmo$ is a special case when $h$ is an indicator function of a convex set, i.e.,
\begin{equation*}
\begin{split}
&\lmo(s) = \partial h^*(-s) \\
&\quad \text{with} \quad
h(x)=\iota_{\mathcal D}(x):=\begin{cases}0 &x \in \mathcal D \\ +\infty & \mathrm{otherwise}\end{cases}
\end{split}
\end{equation*}
The $\lmo$ is commonly used for constrained minimization in e.g., \ref{eq:CG} since the operator ensure feasibility on the constrained set $\mathcal D$.
When $\mathcal D := \{ x \mid \|x\| \leq \rho \}$, the $\lmo$ satisfies $\braket{s,\lmo(s)}=-\rho\|s\|_*$, which is central to the convergence proof of \ref{eq:uSCG} (see \Cref{lem:uSCGtemplate1}).

\paragraph{Sharp operator}
Another important example is the sharp operator \citep{nesterov2012efficiency,kelner2014almost} defined as
\begin{equation*}
s^\sharp \in \argmax_{x \in \mathcal X} \{ \braket{s,x} - \tfrac{1}{2}\|x\|^2 \}
\end{equation*}
for some norm $\|\cdot\|$, which can equivalently be written as
\begin{equation*}
s^\sharp \in \partial h^*(s)
\quad \text{with} \quad
h(x)=\tfrac{1}{2}\|x\|^2.
\end{equation*}
The sharp operator satisfies $\braket{s,s^\sharp}=\|s^\sharp\|^2 =\|s\|_*^2$ \citep[App. A.1]{kelner2014almost}.

The sharp operator and $\lmo$ can be defined in terms of each other when $\mathcal D := \{ x \mid \|x\| \leq \rho \}$, specifically
\begin{equation}\label{eq:lmo:sharp}
s^\sharp = -\tfrac{1}{\rho}\|s\|_*\lmo(s)
\end{equation}
From \eqref{eq:lmo:sharp} we see a clear distinction between the $\lmo$ and the sharp operator, namely that, while the $\lmo$ is scale invariant (i.e. $\lmo(a\cdot s)=\lmo(s)$ for $a>0$) the sharp operator is not (since $[a\cdot s]^\sharp=a[s]^\sharp$ for $a\in \R$).

\paragraph{Steepest descent}
Steepest descent in a normed space can be written in terms of the sharp operator as follows
\begin{equation*}
x^{k+1} = x^k - \gamma [\nabla f(x^k)]^\sharp
\end{equation*}
with a stepsize $\gamma > 0$.
From \eqref{eq:lmo:sharp} it becomes apparent that \ref{eq:uSCG} can be seen as a normalized variant of steepest descent with momentum.

\end{toappendix}

\vspace{-10pt}
\section{Preliminaries}
\vspace{-5pt}
We are interested in solving the following general (possibly nonconvex) optimization problem
\begin{equation}\label{eq:min}
\min_{x \in \mathcal X} f(x)\,,
\end{equation}
where $f$ is smooth in some not necessarily Euclidean norm and the problem is either unconstrained (e.g., $\mathcal X = \R^d$) or constrained to $\mathcal X = \mathcal D$ where $\mathcal D$ is the norm-ball defined as
\begin{equation*}
\mathcal D := \{ x \mid \|x\| \leq \rho \}.
\end{equation*} 

The central primitive in the algorithms considered in this work is the linear minimization oracle ($\lmo$) defined as
\begin{equation}\label{eq:lmo}
\lmo(s) \in \argmin_{x \in \mathcal D} \braket{s,x},
\end{equation}
where we are particularly interested in the special case where the constraint set is a norm constraint $\|x\|\le \rho$, 
for some $\rho > 0$ and some norm $\|\cdot\|$, which does not have to be the Euclidean norm.
Examples of norm-constrained $\lmo$s are provided in \Cref{tbl:lmo} and \Cref{tbl:operatornorms} regarding operator norms.
An important property of the $\lmo$ is that the operator is scale invariant, i.e., $\lmo(a\cdot s)=\lmo(s)$ for $a>0$, and in fact we have by construction under the norm constraints that $\|\lmo(s)\|\leq\rho$.
Thus, it is only the direction of the input $s$ that matters and not the magnitude.

A classical method for solving the constrained variant of problem~\ref{eq:min}, when the $\lmo$ is available, is the Conditional Gradient method (CG) \citep{frank1956algorithm,ken-fw,jaggi2013revisiting}, which proceeds as follows with $\gamma_k\in(0,1)$
\begin{equation*}\label{eq:CG}
\tag{CG}
x^{k+1} = (1-\gamma_k) x^k + \gamma_k \lmo(\nabla f(x^k)),
\end{equation*}
ensuring the feasibility of $x^k$ via simplicial combination.

Usually, the \ref{eq:CG} is attractive when the constraint set is an atomic set (e.g., the $\ell_1$-norm ball) in which case each update may be efficiently stored.
Our focus  lies in the more unconventional cases of the vector $\ell_\infty$-norm ball and spectral norm ball for which the updates are in contrast dense.
Furthermore, we are interested in the unconstrained case in addition to the constrained problem which \ref{eq:CG} solves.

In the stochastic regime, the analyzing $\lmo$-based algorithms is involved. 
Even when the stochastic oracle $\nabla f(x, \xi)$ is unbiased, the direction of the updates, as defined by $\lmo(\nabla f(x, \xi))$, is not unbiased.
To help overcome this difficulty, we will employ a commonly used trick of averaging past gradients with $\alpha_k \in (0,1]$ (aka momentum),
\begin{equation}\label{eq:mom}
d^k = (1-\alpha_k)d^{k-1} + \alpha_k \nabla f(x^k, \xi_k),
\end{equation}
which will rigorously help with algorithmic convergence.

\begin{toappendix}
\section{Method}\label{app:method}
\begin{algorithm}[t]
\caption{(Unconstrained) Scion}
\label{alg:scion}
\textbf{Input:} Horizon $n$, init. $x^1 = (W_1^1,...,W_L^1)$, $d^0 = 0$, momentum $\alpha_k \in (0,1]$, stepsize $\gamma \in (0,1)$, radii $\rho_i\in \R_+$.
\begin{algorithmic}[1]
    \For{$k = 1, \dots, n-1$}
        \State Sample $\xi_{k}\sim \mathcal P$
        \State $d^{k} \gets \alpha_{k} \nabla f(x^{k}, \xi_{k}) + (1 - \alpha_{k})d^{k-1}$
        \State $x^{k+1}_\ell \gets \begin{cases}
            x^k_\ell + \gamma \rho_\ell\lmo_{\|\cdot\|_{\alpha_\ell \rightarrow \beta_\ell}}(d^k_\ell)  & \textbf{if}\ \mathrm{unconstrained} \\
            (1-\gamma)x^k_\ell + \gamma \rho_\ell\lmo_{\|\cdot\|_{\alpha_\ell \rightarrow \beta_\ell}}(d^k_\ell)  & \textbf{else} \\
        \end{cases}
        \quad \forall \ell \in [L]$
    \EndFor
    \item[\algfont{Return}] $x^n$
\end{algorithmic}
\footnotesize \ref{eq:SCG} and \ref{eq:uSCG} with the layerwise norm choice from \eqref{eq:norm:NN}. For simplicity we ignore biases.
\end{algorithm}

\subsection{Input radius scaling}\label{app:input-radius}

Based on the spectral norm perspective \citep{yang2023spectral}, which requires that $\|W_\ell\|_{\mathcal{S}_\infty}=\Theta (\sqrt{\nicefrac{d_\mathrm{out}}{d_\mathrm{in}}})$, one might be inclined to pick the initialization such that $\|W_\ell\|_{\mathcal{S}_\infty}=\sqrt{\nicefrac{d_\mathrm{out}}{d_\mathrm{in}}}$ is ensured exactly.
This argument is indeed valid asymptotically, since input dimension is kept fixed.
However, when the input dimension is larger than the output dimension, this does not lead to constant preactivations as demonstrated through a coordinate check \citep{yang2021tensor} carried out in \Cref{fig:init:coordinate-check}.

Kaiming initialization \citep{he2015delving} fortunately circumvents this problem.
From random matrix theory we have that $\|A\|_{\mathcal{S}_\infty}\approx \sigma (\sqrt{d_\mathrm{in}} + \sqrt{d_\mathrm{out}})$ for $A_{ij} \sim \mathcal N(0,\sigma^2)$ \citep{vershynin2018high}.
So the Kaiming initialization, $[W_\ell]_{ij} \sim N(0,\nicefrac{1}{d_\mathrm{in}})$, leads to $\|W_\ell\|_{\mathcal{S}_\infty}\approx 1 + \sqrt{\nicefrac{d_\mathrm{out}}{d_\mathrm{in}}}$, which prevents the preactivation from going to zero as $d_\mathrm{out} \rightarrow 0$.
Alternatively, one can simply choose $\|W_\ell\|_{\mathcal{S}_\infty}=\max(1,\sqrt{\nicefrac{d_\mathrm{out}}{d_\mathrm{in}}})$.

Ensuring a correct norm scaling is particularly important for \ref{eq:SCG} and \ref{eq:uSCG}, since the scaling not only affects initialization but also the update rule itself.
Specifically, if the methods were run with the norm bound choice $\|W_\ell\|_{\mathcal{S}_\infty}\leq \sqrt{\nicefrac{d_\mathrm{out}}{d_\mathrm{in}}}$ for the input layer, then the issue in \Cref{fig:init:coordinate-check} persists, due the $\lmo$ always lying on the boundary of the norm ball.
The choice $\|W_\ell\|_{\mathcal{S}_\infty}=\max(1,\sqrt{\nicefrac{d_\mathrm{out}}{d_\mathrm{in}}})$ resolves this issue.

\begin{figure}
\centering
\includegraphics[width=0.7\textwidth]{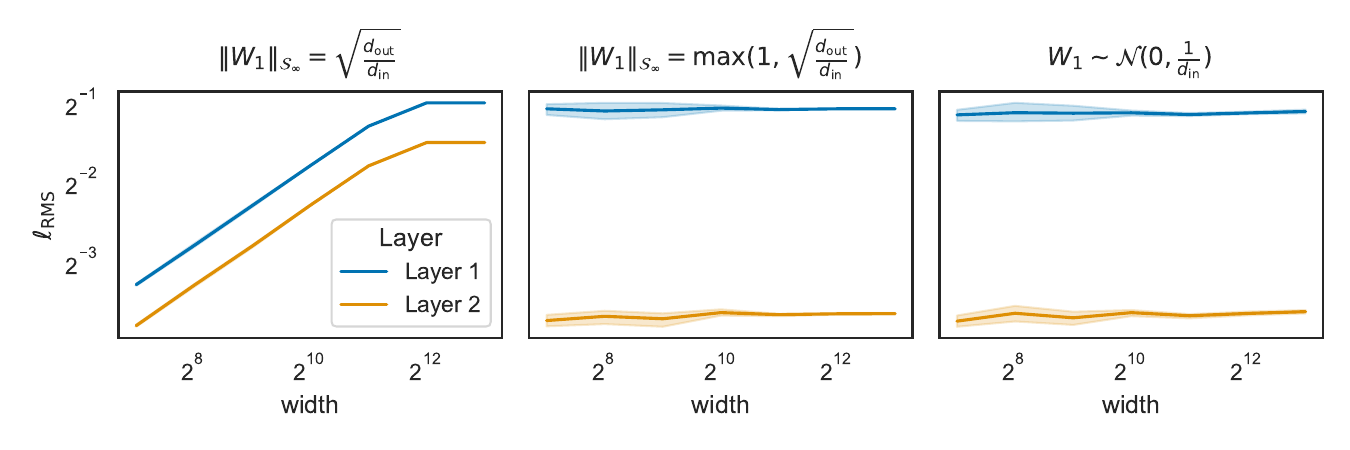}
\caption{Coordinate check at initialization. Preactivations are not constant with the spectral scaling $\sqrt{\tfrac{d_\mathrm{out}}{d_\mathrm{in}}}$, when $d_\mathrm{in}>d_\mathrm{out}$.}
\label{fig:init:coordinate-check}
\end{figure}

\subsection{Alternative norm choices} 

The primary argument in \Cref{sec:normchoice} for the norm choice is based on the invariance provided by the $\RMS \rightarrow \RMS$ operator norm: i.e., the RMS norm of the output of layer $\ell$ is bounded, so the input of next layer $\ell+1$ is also bounded in the RMS norm. %
Since the $\lmo$ of $\|\cdot\|_{\RMS \rightarrow \RMS}$ can be computed efficiently, we can directly use this norm choice for our update rule.

However, it is possible to choose another norm such as $\|\cdot\|_{1 \rightarrow \RMS}$, as long as the RMS norm guarantee on the output of $W_\ell$ is converted into a guarantee on the $\ell_1$-norm of the input of layer $\ell+1$.
Specifically, we have that $\|\cdot\|_{\RMS \rightarrow \RMS}\leq d_\mathrm{in}\|\cdot\|_{1 \rightarrow \RMS}$ through \Cref{lem:operator:bound}. 
Alternatively, we can rely on the invariance provided by $\|\cdot\|_{\infty \rightarrow \infty}$, for which \Cref{lem:operator:bound} tells us that $\|\cdot\|_{\infty \rightarrow \infty} \leq \sqrt{d_\mathrm{in}}\|\cdot\|_{\RMS \rightarrow \infty}$ and $\|\cdot\|_{\infty \rightarrow \infty} \leq d_\mathrm{in}\|\cdot\|_{1 \rightarrow \infty}$.
The resulting $\lmo$ choices for the three norm choices across all layers are summarized in \Cref{tbl:parameter:lmo:same-norm}.

\begin{table*}[h]
\centering
\caption{
  It is possible to use the same norm throughout the network if scaled appropriately.
  By not treating the network as a flattened vector, hyperparameter can transfer across model sizes (cf. \Cref{fig:GPT:shakespeare:sign}).
  We have made use of \Cref{lem:operator:bound} and \eqref{eq:vectornorm:bounds} to derive the correct layerwise scaling.
  We assume that the image input dimension is smaller than the $d_\mathrm{out}$ of the first layer (otherwise see \Cref{app:input-radius}).
}
\label{tbl:parameter:lmo:same-norm}
\bgroup
\def\arraystretch{1.2}
\resizebox{\textwidth}{!}{
\begin{tabular}{|c|c|c|c|c|c|c|c|}
\hline
\multicolumn{3}{|c|}{Weight norm (bias norm) }
    & $W_1$ (1-hot encoded) 
    & $W_1$ (image domain) 
    & $(W_\ell)_{\ell \in [2,...,L-1]}$
    & $W_L$
    & $b_\ell$ \\
\hline
\hline
Spectral 
   & $\RMS \rightarrow \RMS$ ($\RMS$) & $\lmo$
   & $-\sqrt{d_\mathrm{out}} UV^\top$
   & \multicolumn{3}{c|}{$-\sqrt{\nicefrac{d_\mathrm{out}}{d_\mathrm{in}}} UV^\top$}
   & $-\tfrac{b_\ell}{\|b_\ell\|_\RMS}$
 \\
 \hline
ColNorm 
  & $1 \rightarrow \RMS$ ($\RMS$) & $\lmo$
  & $\operatorname{col}_j(W_1)\mapsto -\sqrt{d_\mathrm{out}}\tfrac{\operatorname{col}_j(W_1)}{\|\operatorname{col}_j(W_1)\|_2}$
  & \multicolumn{3}{c|}{$\operatorname{col}_j(W_\ell)\mapsto -\tfrac{\sqrt{d_\mathrm{out}}}{d_\mathrm{in}}\tfrac{\operatorname{col}_j(W_\ell)}{\|\operatorname{col}_j(W_\ell)\|_2}$}
  & $-\tfrac{b_\ell}{\|b_\ell\|_\RMS}$ 
  \\
\hline
RowNorm 
    & $\RMS \rightarrow \infty$ ($\RMS$) & $\lmo$
   & $\operatorname{row}_i(W_1)\mapsto -\tfrac{\operatorname{row}_i(W_1)}{\|\operatorname{row}_i(W_1)\|_2}$
   & \multicolumn{3}{c|}{$\operatorname{row}_i(W_\ell)\mapsto -\tfrac{\operatorname{row}_i(W_\ell)}{\sqrt{d_\mathrm{in}}\|\operatorname{row}_i(W_\ell)\|_2}$}
   & $-\tfrac{b_\ell}{\|b_\ell\|_\RMS}$
 \\
 \hline
Sign
    & \multirow{1}{*}{$1 \rightarrow \infty$ ($\infty$)}
  & $\lmo$
  & $-\sign(W_1)$
  & \multicolumn{3}{c|}{$-\tfrac{1}{d_\mathrm{in}}\sign(W_\ell)$}
  & $-\sign(b_\ell)$
\\
\hline
\end{tabular}
}
\egroup
\end{table*}

\subsection{Boundary initialization} 

\paragraph{Semi-orthogonal}
Following \citet{saxe2013exact}, perform QR decomposition of a random matrix
\begin{equation*}
\begin{split}
G_{ij} &\sim \mathcal{N}(0, 1), \quad \forall i, j \\
G &= QR
\end{split}
\end{equation*}
Use \(Q'=Q\sign(\diag(R))\) as the semi-orthogonal matrix as the initialization.

\paragraph{Column-wise normalized Gaussian}
As proposed in \citet{large2024scalable}, initialize each column as follows
\begin{equation*}
\begin{split}
W_{ij} &\sim \mathcal{N}(0, 1), \quad \forall i, j \\
\operatorname{col}_j(W) &= \frac{\operatorname{col}_j(W)}{\|\operatorname{col}_j(W)\|_2}, \quad \forall i
\end{split}
\end{equation*}

\paragraph{Row-wise normalized Gaussian} Initialize each row as follows
\begin{equation*}
\begin{split}
W_{ij} &\sim \mathcal{N}(0, 1), \quad \forall i, j \\
\operatorname{row}_i(W) &= \frac{\operatorname{row}_i(W)}{\|\operatorname{row}_i(W)\|_2}, \quad \forall j
\end{split}
\end{equation*}

\paragraph{Random sign}
\[
W_{ij} = 
\begin{cases} 
+1 & \text{with probability } 0.5 \\
-1 & \text{with probability } 0.5
\end{cases} 
\quad \forall i, j
\]

Each initialization should be scaled by the corresponding scaling of the $\lmo$ elementwise.

\end{toappendix}

\vspace{-5pt}
\section{Our Methods}\label{sec:method}
\vspace{-5pt}
For the unconstrained case we introduce a new method, dubbed the unconstrained SCG method (uSCG): 
\begin{equation*}\label{eq:uSCG}
\tag{uSCG}
\begin{split}
x^{k+1} &= x^k + \gamma_k \lmo(d^k)
\end{split}
\end{equation*}
with stepsizes $\gamma_k \in (0,1)$. 
Instead of the convex combination in \ref{eq:CG}, the update rule simply sums the $\lmo$s.
In contrast with e.g., gradient descent, the update  always has the same magnitude regardless of the size of the gradient average $d^k$.
The final algorithm is presented in \Cref{alg:uSCG}.

\begin{algorithm}[t]
\caption{Unconstrained SCG (uSCG)}
\label{alg:uSCG}
\textbf{Input:} Horizon $n$, initialization $x^1 \in \mathcal X$, $d^0 = 0$, momentum $\alpha_k \in (0,1]$, and stepsize $\gamma_k \in (0,1)$
\begin{algorithmic}[1]
    \For{$k = 1, \dots, n$}
        \State Sample $\xi_{k}\sim \mathcal P$
        \State $d^{k} \gets \alpha_{k} \nabla f(x^{k}, \xi_{k}) + (1 - \alpha_{k})d^{k-1}$
        \State $x^{k+1} \gets x^k + \gamma_k \lmo(d^k)$
    \EndFor
    \State Choose $\bar{x}^n$ uniformly at random from $\{x^1, \dots, x^n\}$
    \item[\algfont{Return}] $\bar{x}^n$
\end{algorithmic}
\end{algorithm}

For the constrained case, we revisit the SCG method of \citet{mokhtari2020stochastic} and adopt it for the non-convex objectives typically encountered in deep learning model training.
This algorithm  (\Cref{alg:SCG}) proceeds as follows
\begin{equation}\label{eq:SCG}
\tag{SCG}
\begin{split}
x^{k+1} &= (1-\gamma_k) x^k + \gamma_k \lmo(d^k)
\end{split}
\end{equation}
with stepsizes $\gamma_k \in (0,1)$. %

\begin{algorithm}
\caption{Stochastic Conditional Gradient (SCG)}
\label{alg:SCG}
\textbf{Input:} Horizon $n$, initialization $x^1 \in \mathcal D$, $d^0 = 0$, momentum $\alpha_k\in (0,1]$, and stepsize $\gamma_k \in (0,1)$
\begin{algorithmic}[1]
    \For{$k = 1, \dots, n$}
        \State Sample $\xi_{k}\sim \mathcal P$
        \State $d^{k} \gets \alpha_{k} \nabla f(x^{k}, \xi_{k}) + (1 - \alpha_{k})d^{k-1}$
        \State $x^{k+1} \gets (1-\gamma_k) x^k + \gamma_k \lmo(d^k)$
    \EndFor
    \State Choose $\bar{x}^n$ uniformly at random from $\{x^1, \dots, x^n\}$
    \item[\algfont{Return}] $\bar{x}^n$
\end{algorithmic}
\end{algorithm}

\vspace{-5pt}
\paragraph{Connection to weight decay}
For \ref{eq:uSCG}, weight decay has a very precise interpretation,
 since the method reduces to \ref{eq:SCG}.
Consider the following variant of \ref{eq:uSCG} with weight decay $
x^{k+1} = x^k + \gamma_k\lmo(d^k) - \gamma_k \mu x^k.$

The weight decay parameter $\mu\in [0,1]$ interpolates between \ref{eq:uSCG} and \ref{eq:SCG}.
If the weight decay is in $(0,1)$ then the algorithm is still an instance of \ref{eq:SCG} and thus solve a constrained problem, but one with a larger radius of $\rho' = \tfrac{\rho}{\mu}$ with a stepsize chosen as $\gamma_k'=\gamma_k \mu$.

Therefore, all schemes in \Cref{tbl:lmo} guarantees a norm bound of $\tfrac{\rho}{\mu}$ on the parameters when combined with weight decay.
The connection between weight decay and constrained optimization, in the special case where $\lmo=\sign$ (when the norm-constraint in \eqref{eq:lmo} is the vector $\ell_\infty$-norm) has also been observed in \citet{xie2024implicit,d2023we}.
Due to the fixed magnitude of the $\lmo$ both methods provides a guarantee on the maximum norm of the parameters.

\begin{insightbox}[label={insight:weight-decay}]
Both \ref{eq:uSCG} and \ref{eq:SCG} provide explicit control on the norm of the parameters:
\begin{enumerate}[label=(\roman*)]
  \item \ref{eq:SCG} guarantees $\|x\| \leq \rho$.
  \item \ref{eq:uSCG} guarantees $\|x\| \leq \rho\sum_{k=1}^{n}\gamma_k$.
\end{enumerate}
\end{insightbox}

Norm control is particularly useful for long runs (\textit{cf}. \Cref{fig:nanoGPT:spectral-norm}) and to avoid overfitting in multi-epoch training (\textit{cf}. \Cref{fig:GSFW:hyperparam_sweep,fig:epoch_sweep} regarding CIFAR10 experiments).

\vspace{-5pt}
\subsection{Choice of Norm Constraint}\label{sec:normchoice}

To choose an appropriate norm for deep learning, we build on the operator norm perspective of \citet{large2024scalable,bernstein2024modular}. 
To simplify the presentation we will consider a linear MLP as a running example, but in \Cref{sec:transfer}, we point to our theoretical guarantees with activation functions.

\begin{table*}
\centering
\caption{Example operator norms and the associated $\lmo$s of a matrix $A \in \R^{d_\mathrm{out} \times d_\mathrm{in}}$. The reduced SVD is given as $A=U\diag(\sigma) V^\top$, $\sign$ acts elementwise, $\operatorname{col}_j(A):=A_{\cdot,j}$ and $\operatorname{row}_i(A):=A_{i,\cdot}$. Note that this table is not exhaustive. }
\label{tbl:operatornorms}
\bgroup
\def\arraystretch{1.2}
\resizebox{\textwidth}{!}{
\begin{tabular}{|c|c|c|c|c|}
\hline
& $1 \rightarrow \RMS$ (ColNorm) & $1 \rightarrow \infty$ (Sign) & $\RMS \rightarrow \RMS$ (Spectral) & $\RMS \rightarrow \infty$ (RowNorm) \\
\hline\hline
\textbf{Norm} & $\max_j \tfrac{1}{\sqrt{d_\mathrm{out}}}\|\operatorname{col}_j(A)\|_2$ & $\max_{i,j} |A_{i,j}|$ & $\sqrt{\nicefrac{d_\mathrm{in}}{d_\mathrm{out}}}\|A\|_{\mathcal{S}_{\infty}}$ & $\max_i \sqrt{d_\mathrm{in}}\|\operatorname{row}_i(A)\|_2$ \\
\hline
\textbf{LMO} & $\operatorname{col}_j(A)\mapsto -\sqrt{d_\mathrm{out}}\tfrac{\operatorname{col}_j(A)}{\|\operatorname{col}_j(A)\|_2}$ & $A\mapsto -\sign(A)$ & $A\mapsto-\sqrt{\nicefrac{d_\mathrm{out}}{d_\mathrm{in}}}UV^\top$ & $\operatorname{row}_i(A)\mapsto -\tfrac{1}{\sqrt{d_\mathrm{in}}}\tfrac{\operatorname{row}_i(A)}{\|\operatorname{row}_i(A)\|_2}$ \\
\hline
\end{tabular}
}
\egroup
\end{table*}

\begin{table*}
\centering
\caption{The choice of $\lmo$ can be different between layers and can depend on the assumptions on the input. For simplicity we overload notation and write the reduced SVD as $W_\ell = U\diag(\sigma)V^\top \in \R^{d_\mathrm{out} \times d_\mathrm{in}}$ for all $\ell \in [L]$. %
}
\label{tbl:parameter:lmo}
\bgroup
\def\arraystretch{1.2}
\resizebox{\textwidth}{!}{
\begin{tabular}{|c|c|c|c|c|c|c|}
\hline
\multicolumn{1}{|c|}{\textbf{Parameter}} & \multicolumn{1}{c|}{$W_1$ (image domain)} & \multicolumn{1}{c|}{$\{W_\ell\}_{\ell \in [2,...,L-1]}$} & \multicolumn{3}{c|}{$W_L$} & \multicolumn{1}{c|}{$b_\ell$} \\
\hline
\textbf{Norm} & $\RMS \rightarrow \RMS$ & $\RMS \rightarrow \RMS$ & $\RMS \rightarrow \RMS$ & $\RMS \rightarrow \infty$ & $1 \rightarrow \infty$ & $\RMS$ \\
\hline
\hline
\textbf{LMO} 
  & $-\max(1,\sqrt{\nicefrac{d_\mathrm{out}}{d_\mathrm{in}}}) UV^\top$
  & $-\sqrt{\nicefrac{d_\mathrm{out}}{d_\mathrm{in}}} UV^\top$
  & $-\sqrt{\nicefrac{d_\mathrm{out}}{d_\mathrm{in}}} UV^\top$
  & $\operatorname{row}_i(W_L)\mapsto -\tfrac{1}{\sqrt{d_\mathrm{in}}}\tfrac{\operatorname{row}_i(W_L)}{\|\operatorname{row}_i(W_L)\|_2}$
  & $-\tfrac{1}{d_\mathrm{in}} \sign(W_L)$
  & $-\tfrac{b_\ell}{\|b_\ell\|_\RMS}$ \\
\textbf{Init.} 
  & Semi-orthogonal
  & Semi-orthogonal
  & Semi-orthogonal
  & Row-wise normalized Gaussian
  & Random sign
  & 0
\\
\hline
\end{tabular}
}
\egroup
\end{table*}

\begin{table}
\centering
\caption{Example $\lmo$ choices for 1-hot encoded inputs.
}
\label{tbl:parameter:lmo:1hot}
\bgroup
\def\arraystretch{1.2}
\resizebox{0.49\textwidth}{!}{
\begin{tabular}{|c|c|c|c|}
\hline
\multicolumn{1}{|c|}{\textbf{Parameter}} & \multicolumn{3}{c|}{$W_1$ (1-hot encoded input)} \\
\hline
\textbf{Norm} & $2 \rightarrow \RMS$ & $1 \rightarrow \RMS$ & $1 \rightarrow \infty$ \\
\hline
\hline
\textbf{LMO} 
  & $-\sqrt{d_\mathrm{out}} UV^\top$
  & $\operatorname{col}_j(W_1)\mapsto -\sqrt{d_\mathrm{out}}\tfrac{\operatorname{col}_j(W_1)}{\|\operatorname{col}_j(W_1)\|_2}$
  & $-\sign(W_1)$ \\
\textbf{Init.} 
  & Semi-orthogonal
  & Column-wise normalized Gaussian
  & Random sign
\\
\hline
\end{tabular}
}
\egroup
\end{table}

Let us a consider a linear MLP with the initial hidden layer defined as $h_1(z) = W_1z + b_1$ and the remaining layers
\begin{equation*}
h_\ell(z) = W_\ell h_{\ell-1}(z) + b_\ell, \quad \forall \ell \in 2,..,L;
\end{equation*}
with $b_L=0$.
We denote the global loss as $\mathcal L(h_L(z),y)$ where $\mathcal L$ is the loss function and $y$ is a 1-hot encoded target vector.
We use the overloaded notation $W_\ell \in \R^{d_\mathrm{out} \times d_\mathrm{in}}$, where $d_\mathrm{out}$ and $d_\mathrm{in}$ implicitly have dependency on $\ell$ and can thus be distinct across different layers.

We need that none of the intermediary hidden states $h_\ell(z)$ blows up by requiring one of the following norm bounds:
\begin{enumerate}[label=(\roman*)]
    \item $\tfrac{1}{d_\mathrm{out}}\|h_\ell(z)\|_1 \leq 1$ \hfill (the average entry is bounded)
    \item $\|h_\ell(z)\|_\RMS \leq 1$ \hfill (the typical entry is bounded)
    \item $\|h_\ell(z)\|_\infty\leq 1$ \hfill (the maximum entry is bounded)
\end{enumerate}
where $\|z\|_\RMS:= \tfrac{1}{\sqrt{d}}\|z\|_2$ for $z\in \R^d$.
Assuming the input to any given layer is bounded in some norm $\|\cdot\|_\alpha$, this requirement corresponds to placing an operator norm constraint on the weight matrices $\{W_\ell\}_{\ell\in [L]}$ and a norm constraint on the biases $\{b_\ell\}_{\ell\in [L-1]}$.

The operator norm is in turn defined as follows 
\begin{equation}\label{eq:opnorm}
\|A\|_{\alpha \rightarrow \beta}
:=\max_{z \in \mathbb{R}^d, z \neq 0} \frac{\|Az\|_\beta}{\|z\|_\alpha}
= \sup_{\|z\|_\alpha = 1} \|Az\|_\beta.
\end{equation}
Directly from the definition, we have that if the input $z$ is bounded through $\|z\|_\alpha\leq 1$, then the output $\|Az\|_\beta$ will be bounded when $\|A\|_{\alpha \rightarrow \beta}$ is bounded.

A collection of operator norms and their resulting $\lmo$s is provided in \Cref{tbl:operatornorms}.
It will be convenient to convert between bounds on these different operator norm which the following fact makes precise.
\begin{fact}\label{lem:operator:bound}
The operator norm satisfies for some $\rho > 0$
\begin{lemnum}
\item $\|z\|_\beta \leq \rho \|z\|_c, \ \forall z\in \R^d$  $\Rightarrow$ $\|A\|_{\alpha\rightarrow \beta} \leq \rho \|A\|_{\alpha\rightarrow c}$.
\item $\|z\|_\alpha \geq \tfrac{1}{\rho} \|z\|_c,\ \forall z\in \R^d$ $\Rightarrow$ $\|A\|_{\alpha\rightarrow \beta} \leq \rho \|A\|_{c\rightarrow \beta}$.
\end{lemnum}
\end{fact}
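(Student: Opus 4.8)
The plan is to work directly from the variational definition of the operator norm in \eqref{eq:opnorm}, namely $\|A\|_{\alpha\rightarrow\beta} = \sup_{z\neq 0} \|Az\|_\beta / \|z\|_\alpha$, and in each case bound the ratio pointwise before passing to the supremum. Both claims follow from the same two-line template: replace one of the two vector norms appearing in the ratio by the comparable norm supplied in the hypothesis, factor out $\rho$, and invoke monotonicity of the supremum. The only point requiring any care is to notice which vector the scalar norm comparison must be instantiated at.

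For part (i), I would fix an arbitrary $z\neq 0$ and apply the hypothesis $\|w\|_\beta \leq \rho\|w\|_c$ to the \emph{output} vector $w = Az$, giving $\|Az\|_\beta \leq \rho\|Az\|_c$. Dividing by $\|z\|_\alpha$ yields $\|Az\|_\beta/\|z\|_\alpha \leq \rho\, \|Az\|_c/\|z\|_\alpha$, and taking the supremum over $z\neq 0$ on both sides produces $\|A\|_{\alpha\rightarrow\beta} \leq \rho\|A\|_{\alpha\rightarrow c}$. Here the $\alpha$-norm in the denominator is untouched, so only the codomain norm changes from $\beta$ to $c$, exactly as claimed.

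For part (ii), the comparison is instead used on the \emph{input} vector appearing in the denominator. Rewriting the hypothesis as $\|z\|_c \leq \rho\|z\|_\alpha$, equivalently $1/\|z\|_\alpha \leq \rho/\|z\|_c$ for $z\neq 0$, I would bound $\|Az\|_\beta/\|z\|_\alpha \leq \rho\,\|Az\|_\beta/\|z\|_c$ for every $z\neq 0$, and again pass to the supremum to obtain $\|A\|_{\alpha\rightarrow\beta}\leq \rho\|A\|_{c\rightarrow\beta}$; this time the codomain norm $\beta$ stays fixed and only the domain norm changes from $\alpha$ to $c$.

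There is no substantive obstacle here: the statement is a routine consequence of the variational definition of the operator norm together with monotonicity of the supremum. The one thing to double-check is the bookkeeping of which vector each scalar inequality is applied to — the output vector $Az$ in (i) versus the input vector $z$ in (ii) — since this is precisely what dictates whether the hypothesis relaxes the codomain norm or the domain norm in the conclusion.
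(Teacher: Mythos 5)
Your proof is correct, and it is exactly the routine argument the paper implicitly relies on: the paper states this result as a Fact with no written proof, treating it as immediate from the variational definition of the operator norm in \eqref{eq:opnorm}. Your two-step argument — instantiate the norm comparison at the output vector $Az$ for (i) and at the input vector $z$ for (ii), then pass to the supremum — fills in that omitted justification with the intended bookkeeping.
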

\Cref{lem:operator:bound} tells us that we can bound operator norms using bounds on the vector norms, i.e.,
\begin{equation}\label{eq:vectornorm:bounds}
\|z\|_\infty \leq \|z\|_2 \leq \|z\|_1 \leq \sqrt{d} \|z\|_2 \leq d \|z\|_\infty, 
\quad \forall z\in \R^d.
\end{equation}

We start by focusing on controlling the RMS norm, but will later consider other norms.
There are three example operator norms to consider for the MLP in consideration:
\begin{enumerate}[label=(\roman*)]
  \item Initial layer $h_1(z)$: $\|W_1\|_{\alpha_1 \rightarrow \RMS} \leq 1$.
  \item Intermediary layers $h_\ell(z)$: $\|W_\ell\|_{\RMS \rightarrow \RMS} \leq 1$\\$ \ \forall \ell \in \{2,..,L-1\}$.
  \item Last layer $h_L(z)$: $\|W_L\|_{\RMS \rightarrow \beta_L} \leq 1$.
\end{enumerate}
Note that the operator norm $\|\cdot\|_{\RMS \rightarrow \RMS}$ is a scaled spectral norm, i.e., $\|A\|_{\RMS \rightarrow \RMS}=\sqrt{\nicefrac{d_{\mathrm{in}}}{d_{\mathrm{out}}}}\|A\|_{2 \rightarrow 2}=\sqrt{\nicefrac{d_{\mathrm{in}}}{d_{\mathrm{out}}}}\|A\|_{\mathcal{S}_{\infty}}$ for $A\in \R^{d_\mathrm{out} \times d_\mathrm{in}}$.

To concisely write the layerwise norm constraints in terms of a norm constraint on the joint parameter $x=\{W_\ell,b_\ell\}_{\ell\in [L]}$, we can define the norm in the $\lmo$ 
\eqref{eq:lmo} as
\begin{equation}\label{eq:norm:NN}
\|x\| := \max_{\ell\in[L]} \tfrac{1}{\rho_\ell}\max \{\|W_\ell\|_{\alpha_\ell \rightarrow \beta_\ell}, \|b_\ell\|_{\beta_\ell} \}\leq 1
\end{equation}
where $\rho_\ell$ is a layerwise scaling factor of the constraint radius.
What is particularly convenient algorithmically for the $\ell_\infty$-norm is that the layerwise $\lmo$s can be computed separately (cf. \Cref{alg:scion}).
A norm choice across layers was made through the 
$\ell_1$-norm in \citet{flynn2017duality} and through the $\ell_\infty$-norm with the \emph{modular norm} \citep{large2024scalable} and block-normalization \citep{balles2020geometry,yu2017block,ginsburg2019stochastic}. 

A choice needs to be made for the input norm $\alpha_1$ and output norm $\beta_L$, which depends on the application:
\vspace{-5pt}
\paragraph{Input layer}
For image domains, usually the input is rescaled pixel-wise to e.g., ensure that $z \in [-1,1]$ in which case $\|z\|_\RMS\leq 1$ and the appropriate operator norm for the first layer becomes $\|W_1\|_{\RMS \rightarrow \RMS}=\sqrt{\nicefrac{d_\mathrm{in}}{d_\mathrm{out}}}\|W_1\|_{{\mathcal S}_\infty}$. In order to deal with the case where $d_\mathrm{in} > d_\mathrm{out}$, we choose the radius to be $\max(1,\sqrt{\nicefrac{d_\mathrm{out}}{d_\mathrm{in}}})$ (\textit{cf.}, \Cref{{app:input-radius}}).

\looseness=-1For language tasks, the input $z$ is usually a 1-hot encoded vector in which case $\|z\|_\infty=\|z\|_2=\|z\|_1=1$.
In turn, $\|W_1\|_{\infty \rightarrow \RMS}=\|W_1\|_{2 \rightarrow \RMS}=\|W_1\|_{1 \rightarrow \RMS}$ holds on this restricted domain, where we can freely pick the operator norm that leads to the simplest update rule (\Cref{tbl:operatornorms}).

The simplest form for the $\lmo$ is arguably induced by $\|\cdot\|_{1 \rightarrow \RMS}$ since the $\lmo$ can be computed exactly, while from $\|\cdot\|_{2 \rightarrow \RMS}$ we can observe a more aggressive scaling factor in the $\lmo$, $-\sqrt{d_{\mathrm{out}}}UV^\top$, than the $-\sqrt{\nicefrac{d_{\mathrm{out}}}{d_{\mathrm{in}}}}UV^\top$ used in  intermediate layers.
The norm choice $\|\cdot\|_{1 \rightarrow \RMS}$ was first proposed in \citet{large2024scalable} for 1-hot encoded input.
Through the above reasoning we see how the norm is equivalent to an appropriately scaled spectral norm.
\vspace{-10pt}
\paragraph{Output layer}
For the final layer, we are not restricted to bounding the output in $\ell_\RMS$ and can alternatively choose bounding the maximal entry through $\ell_\infty$.
Additionally, we can bound $\|A\|_{\RMS \rightarrow \infty} \leq \tfrac{1}{d_\mathrm{in}}\|A\|_{1 \rightarrow \infty}$, by using \eqref{eq:vectornorm:bounds} through \Cref{lem:operator:bound}, which leads to a dimension scaled sign update rule for the last layer.

We summarize the different norm choices and their resulting $\lmo$s in \Cref{tbl:parameter:lmo,tbl:parameter:lmo:1hot}.
\Cref{tbl:parameter:lmo} provides an overview of norm choices of output layers, while \Cref{tbl:parameter:lmo:1hot} provides choices for input layers under 1-hot encoded input.

Provided that the input is bounded as described, each of the hidden states $h_\ell(z)$ and logits $h_L(z)$ will be bounded in the RMS norm.
In order to ensure feasibility of the initialization in the constrained case when employing \ref{eq:SCG},
we propose to initialize on the boundary similar to \citet{large2024scalable} (see \Cref{app:method} for details).

\vspace{-5pt}
\begin{insightbox}[label={insight:norm-choice}]
\begin{enumerate}[label=(\roman*)]
  \item For 1-hot encoded input, ColNorm and Spectral are equivalent for the first layer, in which case ColNorm is favored since the $\lmo$ can be computed exactly.
  \item Sign can be used both for the first and last layer which is crucial for weight sharing.
  \item To transfer learning rate from proxy models when the width is smaller than the input dimension it is important to rescale the $\lmo$ as $\max(1,\sqrt{\nicefrac{d_\mathrm{out}}{d_\mathrm{in}}})$.
\end{enumerate}
\end{insightbox}
These observations leads to the recommendations below.
\begin{recommendationbox}[label={recomm:instanciation}]
We refer to the instantiation of \ref{eq:uSCG} and \ref{eq:SCG} using operator norms as \uScion and \Scion respectively (\textit{cf}. \Cref{alg:scion}),
which stands for \textbf{S}tochastic \textbf{C}onditional Grad\textbf{i}ent with \textbf{O}perator \textbf{N}orms.
We recommend the following configurations of the layer norms (First layer $\rightarrow$ Intermediary layers $\rightarrow$ Last layer):
\begin{enumerate}[label=(\roman*)]
\item image domains: \hfill Spectral $\rightarrow$ Spectral $\rightarrow$ Sign
\item 1-hot input: \hfill ColNorm $\rightarrow$ Spectral $\rightarrow$ Sign
\item weight sharing: \hfill Sign $\rightarrow$ Spectral $\rightarrow$ Sign
\end{enumerate}
The $\lmo$ names are defined in \Cref{tbl:operatornorms} and weight sharing refers to parameter sharing between the first and last layer.
Each layer should be scaled appropriately according to \Cref{tbl:parameter:lmo,tbl:parameter:lmo:1hot}. 
\end{recommendationbox}

\paragraph{Other norm choices}
So far our argument has been based on the invariance provided by $\|\cdot\|_{\RMS \rightarrow \RMS}$ for intermediary layers: i.e., the RMS norm of the output of layer $\ell$ is bounded, so the input of next layer $\ell+1$ is also bounded in the RMS norm. %
Since the $\lmo$ of $\|\cdot\|_{\RMS \rightarrow \RMS}$ can be computed efficiently we can directly use this norm choice for our update rule.
However, it is possible to choose another norm such as $\|\cdot\|_{1 \rightarrow \RMS}$, as long as the RMS norm guarantee on the output of $W_\ell$ is converted into a guarantee on the $\ell_1$-norm of the input of layer $\ell+1$.
Specifically, we have that $\|\cdot\|_{\RMS \rightarrow \RMS}\leq d_\mathrm{in}\|\cdot\|_{1 \rightarrow \RMS}$ through \Cref{lem:operator:bound}. 
Alternatively, we can rely on the invariance provided by $\|\cdot\|_{\infty \rightarrow \infty}$, for which \Cref{lem:operator:bound} tells us that $\|\cdot\|_{\infty \rightarrow \infty} \leq \sqrt{d_\mathrm{in}}\|\cdot\|_{\RMS \rightarrow \infty}$ and $\|\cdot\|_{\infty \rightarrow \infty} \leq d_\mathrm{in}\|\cdot\|_{1 \rightarrow \infty}$.
We obtain methods exclusively relying on RowNorm, ColNorm and Sign as summarized in \Cref{tbl:parameter:lmo:same-norm} of \Cref{app:method}.

\subsection{Hyperparameter Transfer}\label{sec:transfer}
The intuition behind why ({\sc Unconstrained}) \Scion may enjoy hyperparameter transfer is suggested by the spectral scaling rule of \citet{yang2023spectral}, which states that feature learning may be ensured by requiring that, for MLPs with weight matrices $W_\ell \in \R^{d_{\ell-1} \times d_\ell}$, the following holds:
\begin{equation*}
\textstyle \|W_\ell\|_{\mathcal{S}_\infty} = \Theta\left(\sqrt{\frac{d_\ell}{d_{\ell-1}}}\right)
\ \text{and} \
\|\Delta W_\ell\|_{\mathcal{S}_\infty} = \Theta\left(\sqrt{\frac{d_\ell}{d_{\ell-1}}}\right)
\end{equation*}
where $\|\cdot\|_{\mathcal{S}_\infty}$ denotes the spectral norm and $\Delta W_\ell$ is the update change.
For \ref{eq:uSCG}, the update change is given by $x^{k+1}-x^k=\gamma \lmo(d^k)$, so the requirement is automatically satisfied by the spectral norm choice from \Cref{tbl:parameter:lmo}.

We formalize this intuition in \Cref{lemma:mu_transfer} of \Cref{app:transfer:proof} following the proof technique of \citet{yang2023spectral}, which holds for losses including logistic regression and MSE, and activation functions including ReLU, GELU and Tanh.
Specifically, we show that the so-called maximal update learning rate $\gamma^*$ 
(i.e., the learning rate that enables the hidden layer preactivations to undergo the largest possible change in a single update step) is independent of width. Our theoretical analysis is conducted under a simplified setting with momentum \(\alpha_k = 1\) in \cref{alg:uSCG}, which we adopt to facilitate a clean derivation of the width-invariant property.
Thus, a learning rate tuned on a smaller model can be directly applied to a wider model without compromising the maximal update property.

\begin{toappendix}

\subsection{Notation and detailed problem setting}
\label{sec:notation_L_layer}

First, we introduce some notation and detailed problem settings. Consider an $L$-layer neural network with input dimension $d_0$, hidden layer widths $\{d_1, d_2, \dots, d_{L-1}\}$, and output dimension $d_L$. For any input data $\mathbf{z} \in \mathbb{R}^{d_0}$, the forward pass of the network is defined as
\begin{equation}
\begin{split}
    & \mathbf{f}^{(1)}(\mathbf{z}) = \mathbf{W}^{(1)} \mathbf{z}, 
    \qquad 
    \mathbf{h}^{(1)}(\mathbf{z}) = \sigma\bigl(\mathbf{f}^{(1)}(\mathbf{z})\bigr),\\
    &\mathbf{f}^{(\ell)}(\mathbf{z}) = \mathbf{W}^{(\ell)} \mathbf{h}^{(\ell-1)}(\mathbf{z}), 
    \qquad 
    \mathbf{h}^{(\ell)}(\mathbf{z}) = \sigma\bigl(\mathbf{f}^{(\ell)}(\mathbf{z})\bigr), 
    \quad \forall \ell = 2, \dots, L-1,\\
    &\mathbf{f}^{(L)}(\mathbf{z}) = \mathbf{W}^{(L)} \mathbf{h}^{(\ell-1)}(\mathbf{z}).
    \label{eq:L_layer_net}
\end{split}
\end{equation}

Here, $\mathbf{W}^{(\ell)} \in \mathbb{R}^{d_{\ell} \times d_{\ell-1}}$ are the weight matrices of the network, and $\sigma(\cdot)$ is an element-wise activation function. 
We denote $\mathbf{f}^{(\ell)}(\mathbf{z}) \in \mathbb{R}^{d_{\ell}}$ as the preactivation of the $l$-th layer and $\mathbf{h}^{(\ell)}(\mathbf{z}) \in \mathbb{R}^{d_{\ell}}$ as the corresponding postactivation.
The network output $\mathbf{f}^{(L)}(\mathbf{z})$ is calculated as a linear transformation of the final hidden layer representation.

Below, we provide a brief overview of the specific assumptions for the input data and initialization. Then we explicitly define the spectral norm choice used in our analysis:

\begin{assumption}[Input data]
\label{assumption:input_data}
Training samples $(\mathbf{z}, \mathbf{y})$ are drawn from a distribution $\mathcal{P}$, 
where $\mathbf{z} \in \mathbb{R}^{d_0}$ and $\mathbf{y} \in \mathbb{R}^{d_L}$. 
We assume $\mathbf{z}$ has bounded second moments, 
i.e., $\|\mathbf{z}\|^2_2< \infty$.
\end{assumption}

\begin{assumption}[Initialization schemes]
\label{assumption:initialization}
The initialization satisfy the following condition for the stability of the maximal update learning rate~\citep{yang2023spectral}:
\begin{equation*}
    \| \mathbf{W}^{(\ell)} \|_{\mathcal{S}_\infty} 
      = \Theta\Bigl(\sqrt{\tfrac{d_\ell}{d_{\ell-1}}}\Bigr).
\end{equation*}
Both semi-orthogonal initialization and (with high probability) 
Gaussian initialization satisfy this condition.
\end{assumption}

\begin{assumption}[Spectral norm choice for $\lmo$]
\label{assumption:lmo_choice}
We adopt the layer-wise linear maximization oracles ($\lmo$s) from \Cref{tbl:parameter:lmo,tbl:parameter:lmo:1hot}. Specifically, 
for the first layer \(\mathbf{W}^{(1)}\), we use
\[
  \lmo\bigl(\nabla_{\mathbf{W}^{(1)}}\mathcal{L}\bigr)
  \;=\; \max\!\Bigl(1,\,\sqrt{\tfrac{d_{\mathrm{out}}}{d_{\mathrm{in}}}}\Bigr)\,\mathbf{U}^{(1)}\,\mathbf{V}^{(1)\top},
\]
where \(\mathbf{W}^{(1)} = \mathbf{U}^{(1)}\mathbf{\Lambda}^{(1)}\mathbf{V}^{(1)\top}\) is the reduced SVD of 
\(\mathbf{W}^{(1)}\). For each intermediate layer \(\mathbf{W}^{(\ell)}\), 
\(\ell \in \{2,\dots,L\}\), we similarly set
\[
  \lmo\bigl(\nabla_{\mathbf{W}^{(\ell)}}\mathcal{L}\bigr)
  \;=\; \sqrt{\tfrac{d_{\mathrm{out}}}{d_{\mathrm{in}}}}\mathbf{U}^{(\ell)}\,\mathbf{V}^{(\ell)\top},
\]
\end{assumption}
\begin{remark}
In all cases, these choices are consistent with the (Spectral $\rightarrow$ Spectral $\rightarrow$ Spectral) configuration from \Cref{tbl:parameter:lmo,tbl:parameter:lmo:1hot}. 
Our results will simultaneously hold for the (ColNorm $\rightarrow$ Spectral $\rightarrow$ Spectral) configuration, due to the equivalence under 1-hot encoded (\textit{cf.} \Cref{insight:norm-choice}).
\end{remark}

To investigate how these neuron preactivations change after one step of \cref{alg:uSCG}, we consider a $\text{batch size} = 1$ setting. 
We analyze the update dynamics of the network parameters under general loss functions $\mathcal{L}$, including but not limited to mean squared error (MSE) and logistic loss.

We follow \Citet{yang2021tensor} which states that a \emph{good} learning rate enables hidden layer preactivations to undergo the largest possible change in a single update step, while still avoiding divergence when the network width is large.

Let $\Delta f^{(\ell)}_i(\mathbf{z})$ be the change in the preactivation of the $i$-th neuron in the $l$-th hidden layer after one step of~\cref{alg:uSCG} on $(\mathbf{z},\mathbf{y})$. The so-called “maximal update” heuristic requires that:
\[
\text{The maximal update learning rate }\,\gamma^*
\;:=\;
\text{the learning rate for which }
\mathbb{E}\bigl[\bigl(\Delta f^{(\ell)}_i(\mathbf{z})\bigr)^2\bigr] \;\simeq\; 1,
\]
with the expectation again taken over the initialization distribution.

\subsection{Hyperparameter transfer}\label{app:transfer:proof}

To begin with, we prove the following lemma that derives the $\lmo$ of the gradient in an $L$-layer neural network using the spectral norm choice from \Cref{tbl:parameter:lmo}.

\begin{lemma}[Spectral $\lmo$ for the gradient with respect to \( \mathbf{W}^{(\ell)} \)]\label{lem:transfer:lmo}

Consider an $L$-layer neural network with input dimension $d_0$, hidden layer widths $\{d_1, d_2, \dots, d_{L-1}\}$, and output dimension $d_L$. Training samples $(\mathbf{z}, \mathbf{y})$ are drawn from some distribution $\mathcal{P}$, where $\mathbf{z} \in \mathbb{R}^{d_0}$ and $\mathbf{y} \in \mathbb{R}^{d_L}$. The network follows the forward pass \eqref{eq:L_layer_net}.
For convenience, we set \( \mathbf{f}^{(0)} = \mathbf{h}^{(0)} = \mathbf{z} \), making the notation consistent for all layers.
The linear maximization oracle ($\lmo$) over the scaled spectral norm ball, $\mathcal D = \big\{ \mathbf{W} \mid \|\mathbf{W}\|_{\mathcal S_\infty}\leq \sqrt{\tfrac{d_\ell}{d_{\ell-1}}}\big\}$, for $\nabla_{\mathbf{W}^{(\ell)}} \mathcal{L}(\mathbf{z},\mathbf{y}) $, denoted as $\lmo(\nabla_{\mathbf{W}^{(\ell)}} \mathcal{L}(\mathbf{z},\mathbf{y}) )$, is given by

\[
\lmo(\nabla_{\mathbf{W}^{(\ell)}}\mathcal{L}(\mathbf{z},\mathbf{y}) ) = \sqrt{\frac{d_{\ell}}{d_{\ell-1}}} \frac{\Bigl(\frac{d \mathcal{L}(\mathbf{z},\mathbf{y}) }{d \mathbf{f}^{(\ell)}} \Bigr) \mathbf{h}^{(\ell-1)^\top}}{\Bigl\|\frac{d \mathcal{L}(\mathbf{z},\mathbf{y}) }{d \mathbf{f}^{(\ell)}}\Bigr\|_2 \|\mathbf{h}^{(\ell-1)}\|_2}.
\]

\end{lemma}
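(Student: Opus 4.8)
The plan is to exploit the fact that the gradient of the loss with respect to a single weight matrix is a \emph{rank-one} matrix, which makes its reduced SVD—and hence its scaled-spectral-norm $\lmo$—completely explicit. First I would apply the chain rule to $\nabla_{\mathbf{W}^{(\ell)}}\mathcal{L}$. Since the forward pass \eqref{eq:L_layer_net} gives $\mathbf{f}^{(\ell)} = \mathbf{W}^{(\ell)}\mathbf{h}^{(\ell-1)}$ and the postactivation $\mathbf{h}^{(\ell-1)}$ is computed from earlier layers only (so it does not depend on $\mathbf{W}^{(\ell)}$), differentiating entrywise yields $\partial \mathcal{L}/\partial W^{(\ell)}_{ij} = (d\mathcal{L}/d f^{(\ell)}_i)\, h^{(\ell-1)}_j$, i.e.
\begin{equation*}
\nabla_{\mathbf{W}^{(\ell)}}\mathcal{L} = \Bigl(\tfrac{d\mathcal{L}}{d\mathbf{f}^{(\ell)}}\Bigr)\,\mathbf{h}^{(\ell-1)\top}.
\end{equation*}
This is an outer product $\mathbf{g}\mathbf{h}^\top$ with $\mathbf{g} := d\mathcal{L}/d\mathbf{f}^{(\ell)} \in \R^{d_\ell}$ and $\mathbf{h} := \mathbf{h}^{(\ell-1)} \in \R^{d_{\ell-1}}$.

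Next I would read off the reduced SVD of this rank-one matrix. Assuming $\mathbf{g},\mathbf{h}\neq 0$, the matrix $\mathbf{g}\mathbf{h}^\top$ has a single nonzero singular value $\sigma_1 = \|\mathbf{g}\|_2\|\mathbf{h}\|_2$, with unit left/right singular vectors $\mathbf{u} = \mathbf{g}/\|\mathbf{g}\|_2$ and $\mathbf{v} = \mathbf{h}/\|\mathbf{h}\|_2$; one checks directly that $\mathbf{g}\mathbf{h}^\top = \sigma_1\,\mathbf{u}\mathbf{v}^\top$. Substituting the reduced SVD factor $\mathbf{U}\mathbf{V}^\top = \mathbf{u}\mathbf{v}^\top = \mathbf{g}\mathbf{h}^\top/(\|\mathbf{g}\|_2\|\mathbf{h}\|_2)$ into the scaled-spectral-norm $\lmo$ formula from \Cref{tbl:operatornorms}, evaluated over the ball $\{\|\mathbf{W}\|_{\mathcal{S}_\infty}\leq \sqrt{d_\ell/d_{\ell-1}}\}$ (radius $\rho=\sqrt{d_\ell/d_{\ell-1}}$), gives exactly
\begin{equation*}
\lmo(\nabla_{\mathbf{W}^{(\ell)}}\mathcal{L}) = \sqrt{\tfrac{d_\ell}{d_{\ell-1}}}\,\mathbf{U}\mathbf{V}^\top = \sqrt{\tfrac{d_\ell}{d_{\ell-1}}}\,\frac{\mathbf{g}\,\mathbf{h}^\top}{\|\mathbf{g}\|_2\|\mathbf{h}\|_2},
\end{equation*}
which is the claimed expression (with the paper's sign convention for the $\lmo$).

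Each step above is short, so the care lies in two places. The first is verifying that $\rho\,\mathbf{u}\mathbf{v}^\top$ genuinely solves $\argmin_{\|\mathbf{W}\|_{\mathcal{S}_\infty}\leq\rho}\braket{\mathbf{g}\mathbf{h}^\top,\mathbf{W}}$; this is the spectral-norm instance of the identity $\braket{s,\lmo(s)} = -\rho\|s\|_*$ noted in the preliminaries, and follows from the duality between the spectral and nuclear norms, specialized to a rank-one argument (whose nuclear norm is simply $\|\mathbf{g}\|_2\|\mathbf{h}\|_2$). The second is the degenerate case $\mathbf{g}=0$, where the gradient vanishes and the $\lmo$ may be taken as any boundary point, so I would simply state the formula for $\mathbf{g}\neq 0$. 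I expect the conceptual crux to be the identification of the rank-one structure of the layerwise gradient; once that is in hand, the SVD and the $\lmo$ evaluation are immediate, and the stated scaling $\sqrt{d_\ell/d_{\ell-1}}$ is inherited directly from the radius of the constraint set.
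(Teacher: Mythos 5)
Your proposal is correct and takes essentially the same route as the paper: apply the chain rule to obtain the rank-one outer product $\bigl(\tfrac{d\mathcal{L}}{d\mathbf{f}^{(\ell)}}\bigr)\mathbf{h}^{(\ell-1)\top}$, then read off the scaled-spectral-norm $\lmo$ from its explicit rank-one SVD. In fact, the paper compresses your steps two through four into the single phrase ``we immediately have,'' so your spelled-out justification (the rank-one singular value $\|\mathbf{g}\|_2\|\mathbf{h}\|_2$, the spectral--nuclear duality behind the $\lmo$ evaluation, and the degenerate case $\mathbf{g}=0$) supplies precisely the details the paper's proof leaves implicit.
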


\begin{proof}

First, we express the gradient with respect to $\mathbf{W}^{(\ell)}$. By applying the chain rule we have

\[
\nabla_{\mathbf{W}^{(\ell)}} \mathcal{L}(\mathbf{z},\mathbf{y})  = \Bigl( \frac{d \mathcal{L}(\mathbf{z},\mathbf{y}) }{d \mathbf{f}^{(\ell)}} \Bigr) \mathbf{h}^{(\ell-1)^\mathsf{T}}.
\]

We immediately have that the $\lmo$ is given as

\begin{equation*}
    \lmo(\nabla_{\mathbf{W}^{(\ell)}} \mathcal{L}(\mathbf{z},\mathbf{y}) ) = \sqrt{\frac{d_{\ell}}{d_{\ell-1}}} \frac{\Bigl(\frac{d \mathcal{L}(\mathbf{z},\mathbf{y}) }{d \mathbf{f}^{(\ell)}} \Bigr) \mathbf{h}^{(\ell-1)^\top}}{\Bigl\|\frac{d \mathcal{L}(\mathbf{z},\mathbf{y}) }{d \mathbf{f}^{(\ell)}}\Bigr\|_2 \|\mathbf{h}^{(\ell-1)}\|_2}.
\end{equation*}

\end{proof}

Now, we are equipped to state and prove \Cref{lemma:mu_transfer}. We follow the proof technique of \citet{yang2023spectral} in our derivations.

\begin{lemma}[Width-invariance of the maximal update learning rate]
\label{lemma:mu_transfer}
Consider an $L$-layer MLP with widths $d_0, d_1, \dots, d_L$ (where $d_1 \ge d_0$), and assume $d_0$ is a fixed constant. 
Let its activation function $\sigma$ have Lipschitz constant $L_\sigma$ and satisfy $\sigma(0) = 0$ 
(e.g., ReLU, Tanh, or GELU). Suppose:
\begin{enumerate}[label=(\roman*)]
    \item The input data $(\mathbf{z}, \mathbf{y})$ meets the requirements in 
    \Cref{assumption:input_data},
    \item The network is initialized according to \Cref{assumption:initialization}, and
    \item Parameter updates use \Cref{alg:uSCG} with the spectral-norm-based $\lmo$ described in \Cref{assumption:lmo_choice}.
\end{enumerate}

For various loss functions $\mathcal{L}$ (e.g., MSE, logistic), define the \emph{maximal update 
condition} by
\[
  \mathbb{E}\Bigl[\bigl(\Delta f^{(\ell)}_i(\mathbf{z})\bigr)^2\Bigr] \;\simeq\; 1 
  \quad \text{for all } \ell \le L,
\]
where $\Delta f^{(\ell)}_i(\mathbf{z})$ is the change in the preactivation of the $i$-th neuron 
in the $\ell$-th hidden layer after one update step. 
Unless all activations are simultaneously zero during training (which is highly unlikely in 
practice), the optimal learning rate $\gamma^*$ (under the setting \(\alpha_k = 1\) in~\cref{alg:uSCG}) satisfying this condition is 
\emph{independent of the hidden-layer widths}.

\end{lemma}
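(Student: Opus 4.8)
The plan is to compute the one-step change $\Delta \mathbf{f}^{(\ell)}(\mathbf{z})$ in each preactivation explicitly and to show that its mean square factorizes, to leading order in $\gamma$, as a width-independent constant times $\gamma^2$; solving $\mathbb{E}[(\Delta f^{(\ell)}_i)^2]\simeq 1$ then yields a $\gamma^*$ free of any $d_\ell$. Since $\alpha_k=1$ in \Cref{alg:uSCG}, the momentum buffer collapses to $d^k=\nabla f(x^k,\xi_k)$, so the weight increment is exactly $\Delta\mathbf{W}^{(\ell)}=\gamma\,\lmo(\nabla_{\mathbf{W}^{(\ell)}}\mathcal L)$ with the closed form of \Cref{lem:transfer:lmo}. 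Writing $g^{(\ell)}:=d\mathcal L/d\mathbf{f}^{(\ell)}$, I would decompose the preactivation change into a direct term from updating $\mathbf{W}^{(\ell)}$ and an indirect term propagated from the already-updated earlier layers:
\[
\Delta\mathbf{f}^{(\ell)}=\Delta\mathbf{W}^{(\ell)}\mathbf{h}^{(\ell-1)}+\mathbf{W}^{(\ell)}\Delta\mathbf{h}^{(\ell-1)}+\Delta\mathbf{W}^{(\ell)}\Delta\mathbf{h}^{(\ell-1)},
\]
where the last summand is $O(\gamma^2)$.

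For the direct term, substituting the $\lmo$ formula and using $(\mathbf{h}^{(\ell-1)})^\top\mathbf{h}^{(\ell-1)}=\|\mathbf{h}^{(\ell-1)}\|_2^2$ collapses it to $\gamma\sqrt{d_\ell/d_{\ell-1}}\,\|\mathbf{h}^{(\ell-1)}\|_2\,g^{(\ell)}/\|g^{(\ell)}\|_2$, so that
\[
\mathbb{E}\bigl[(\Delta\mathbf{W}^{(\ell)}\mathbf{h}^{(\ell-1)})_i^2\bigr]=\gamma^2\,\tfrac{d_\ell}{d_{\ell-1}}\,\|\mathbf{h}^{(\ell-1)}\|_2^2\,\mathbb{E}\Bigl[\tfrac{(g^{(\ell)}_i)^2}{\|g^{(\ell)}\|_2^2}\Bigr].
\]
Here I would insert $\|\mathbf{h}^{(\ell-1)}\|_2^2=d_{\ell-1}\|\mathbf{h}^{(\ell-1)}\|_\RMS^2=\Theta(d_{\ell-1})$ (the RMS norm of a hidden state is of order one under \Cref{assumption:input_data,assumption:initialization} together with $\sigma(0)=0$ and $\sigma$ being $L_\sigma$-Lipschitz) and the delocalization estimate $\mathbb{E}[(g^{(\ell)}_i)^2/\|g^{(\ell)}\|_2^2]=\Theta(1/d_\ell)$; every width factor cancels and the direct term is $\Theta(\gamma^2)$ per coordinate.

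The indirect term is handled by induction on $\ell$. Assuming $\mathbb{E}[(\Delta f^{(\ell-1)}_i)^2]=\Theta(\gamma^2)$, Lipschitzness of $\sigma$ gives $\|\Delta\mathbf{h}^{(\ell-1)}\|_2\le L_\sigma\|\Delta\mathbf{f}^{(\ell-1)}\|_2=\Theta(\gamma\sqrt{d_{\ell-1}})$, and the spectral scaling $\|\mathbf{W}^{(\ell)}\|_{\mathcal S_\infty}=\Theta(\sqrt{d_\ell/d_{\ell-1}})$ of \Cref{assumption:initialization} then yields $\|\mathbf{W}^{(\ell)}\Delta\mathbf{h}^{(\ell-1)}\|_2=O(\gamma\sqrt{d_\ell})$, i.e. $O(\gamma^2)$ per coordinate in mean square; the same two bounds control the remaining cross term. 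The base case $\ell=1$ is the direct computation with $\mathbf{h}^{(0)}=\mathbf{z}$, where $\|\mathbf{z}\|_2^2<\infty$ and fixed $d_0$ make the estimate manifestly width-independent (and $d_1\ge d_0$ makes the first-layer radius $\max(1,\sqrt{d_1/d_0})=\sqrt{d_1/d_0}$ agree with the generic formula). Summing the three contributions, $\mathbb{E}[(\Delta f^{(\ell)}_i)^2]$ is a polynomial in $\gamma$ whose coefficients carry no net power of any hidden width, so the root of $\mathbb{E}[(\Delta f^{(\ell)}_i)^2]=1$ is width-invariant, which is the claim.

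The main obstacle is the delocalization estimate $\mathbb{E}[(g^{(\ell)}_i)^2/\|g^{(\ell)}\|_2^2]=\Theta(1/d_\ell)$: it requires the backpropagated signal $g^{(\ell)}$ to spread its mass roughly evenly across its $d_\ell$ coordinates rather than concentrate on a few, which is exactly where the Tensor-Programs-style computation over the initialization distribution of \citet{yang2023spectral} enters, and where the stated caveat---unless all activations vanish simultaneously---excludes the degenerate case $g^{(\ell)}=0$ that would render the $\lmo$ itself ill-defined. A secondary point of care is verifying that the direct and indirect contributions do not conspire into an exact cancellation that would spoil the leading $\Theta(\gamma^2)$ order; generically this does not occur, and in the worst case one argues on the dominant direct term alone.
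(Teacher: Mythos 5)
Your proposal is correct and follows essentially the same route as the paper's proof: the same decomposition of $\Delta\mathbf{f}^{(\ell)}$ into a direct term $\Delta\mathbf{W}^{(\ell)}\mathbf{h}^{(\ell-1)}$ and a propagated term $\mathbf{W}^{(\ell)}\Delta\mathbf{h}^{(\ell-1)}$, the same closed form for the spectral $\lmo$ (\Cref{lem:transfer:lmo}), the same induction with base case $\ell=1$, and the same use of $\|\mathbf{W}^{(\ell)}\|_{\mathcal{S}_\infty}=\Theta\bigl(\sqrt{d_\ell/d_{\ell-1}}\bigr)$ together with the Lipschitzness of $\sigma$ to control the propagated term. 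The one substantive difference is the order of bookkeeping: you work per coordinate throughout, which forces you to posit the delocalization estimate $\mathbb{E}\bigl[(g^{(\ell)}_i)^2/\|g^{(\ell)}\|_2^2\bigr]=\Theta(1/d_\ell)$ --- precisely the step you flag as the main obstacle. The paper instead sums the squared coordinates first, so the factor $\sum_i (g^{(\ell)}_i)^2/\|g^{(\ell)}\|_2^2$ collapses to exactly $1$ with no delocalization input needed; it derives $\mathbb{E}\|\Delta\mathbf{f}^{(\ell)}\|_2^2\simeq \gamma^2 d_\ell$ and only at the very end invokes exchangeability of neurons (``by symmetry'') to divide by $d_\ell$ and recover the per-coordinate statement. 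These are morally equivalent --- your estimate likewise follows in expectation from coordinate exchangeability, since the ratios sum to one --- but the paper's ordering turns your ``main obstacle'' into an exact identity rather than an additional input, so you should restructure accordingly rather than appeal to a Tensor-Programs-style computation. A minor point in your favor: you explicitly bound the second-order cross term $\Delta\mathbf{W}^{(\ell)}\Delta\mathbf{h}^{(\ell-1)}$ as $O(\gamma^2)$, which the paper's two-term decomposition drops without comment.
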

\end{toappendix}
\begin{toappendix}

\begin{proof}
For~\cref{alg:uSCG} with the setting \(\alpha_k = 1\), we have through the spectral norm choice \Cref{assumption:lmo_choice} and \Cref{lem:transfer:lmo} that

\begin{equation*}
\begin{split}
    \Delta f^{(\ell)}_i(\mathbf{z}) &= \sum_{j=1}^{d_{\ell-1}} \Delta W^{(\ell)}_{i,j} h^{(\ell-1)}_j(\mathbf{z}) + \sum_{j=1}^{d_{\ell-1}} W^{(\ell)}_{i,j} \Delta h^{(\ell-1)}_j(\mathbf{z}) \\
    &= \gamma \sum_{j=1}^{d_{\ell-1}} \bigg[\lmo(\nabla_{\mathbf{W}^{(\ell)}} \mathcal{L}(\mathbf{z},\mathbf{y}))\bigg]_{i,j} h^{(\ell-1)}_j(\mathbf{z}) + (\mathbf{W}^{(\ell)} \Delta \mathbf{h}^{(\ell-1)})_i \\
    &= \gamma \sum_{j=1}^{d_{\ell-1}} \sqrt{\frac{d_{\ell}}{d_{\ell-1}}} \frac{\Bigl(\frac{d \mathcal{L}(\mathbf{z},\mathbf{y}) }{d \mathbf{f}^{(\ell)}} \Bigr)_i}{\Bigl\|\frac{d \mathcal{L}(\mathbf{z},\mathbf{y}) }{d \mathbf{f}^{(\ell)}}\Bigr\|_2}  \frac{h^{(\ell-1)}_j(\mathbf{z})}{\|\mathbf{h}^{(\ell-1)}(\mathbf{z})\|_2}  h^{(\ell-1)}_j(\mathbf{z}) + (\mathbf{W}^{(\ell)} \Delta \mathbf{h}^{(\ell-1)})_i \\
    &= \gamma \sqrt{\frac{d_{\ell}}{d_{\ell-1}}} \frac{\Bigl(\frac{d \mathcal{L}(\mathbf{z},\mathbf{y}) }{d \mathbf{f}^{(\ell)}} \Bigr)_i}{\Bigl\|\frac{d \mathcal{L}(\mathbf{z},\mathbf{y}) }{d \mathbf{f}^{(\ell)}}\Bigr\|_2}  \|\mathbf{h}^{(\ell-1)}(\mathbf{z})\|_2 + (\mathbf{W}^{(\ell)} \Delta \mathbf{h}^{(\ell-1)})_i.
\end{split}
\end{equation*}

Unless these terms perfect cancel each other out, we have

\begin{equation*}
\begin{split}
    \mathbb{E}\bigl[\|\Delta \mathbf{f}^{(\ell)}(\mathbf{z})\|_2^2\bigr] &= \mathbb{E}\sum_{i=1}^{d_{\ell}} \bigl[\big(\Delta f^{(\ell)}_i(\mathbf{z})\big)^2\bigr] \\
    &\simeq \mathbb{E}\sum_{i=1}^{d_{\ell}} \biggl[\gamma^2 \frac{d_{\ell}}{d_{\ell-1}} \frac{\Bigl(\frac{d \mathcal{L}(\mathbf{z},\mathbf{y}) }{d \mathbf{f}^{(\ell)}} \Bigr)_i^2}{\Bigl\|\frac{d \mathcal{L}(\mathbf{z},\mathbf{y}) }{d \mathbf{f}^{(\ell)}}\Bigr\|_2^2}  \|\mathbf{h}^{(\ell-1)}(\mathbf{z})\|_2^2\biggr] + \mathbb{E}\sum_{i=1}^{d_{\ell}}\biggl[(\mathbf{W}^{(\ell)} \Delta \mathbf{h}^{(\ell-1)})_i^2\biggr]\\
    &= \gamma^2 \frac{d_{\ell}}{d_{\ell-1}} \mathbb{E}\|\mathbf{h}^{(\ell-1)}(\mathbf{z})\|_2^2 + \mathbb{E}\left \| \mathbf{W}^{(\ell)} \Delta \mathbf{h}^{(\ell-1)} \right \|_2^2\,.
\end{split}
\end{equation*}

For the second term, we have

\begin{equation*}
    \mathbb{E}\|\Delta \mathbf{h}^{(\ell-1)}(\mathbf{z})\|_2^2 \leq L_{\sigma}^2 \mathbb{E}\left \| \Delta \mathbf{f}^{(\ell-1)}(\mathbf{z}) \right \|_2^2
\end{equation*}

So

\begin{equation*}
\begin{split}
    \mathbb{E}\left \| \mathbf{W}^{(\ell)} \Delta \mathbf{h}^{(\ell-1)} \right \|_2^2 \leq \frac{d_{\ell}}{d_{\ell-1}} L_{\sigma}^2 \mathbb{E}\left \| \Delta \mathbf{f}^{(\ell-1)}(\mathbf{z}) \right \|_2^2\\
\end{split} 
\end{equation*}

Recall that, under~\Cref{assumption:input_data}, the input $\mathbf{z}$ has bounded
second moments, i.e., $\|\mathbf{z}\|_2^2 < \infty$. Consequently, we can treat 
$\|\mathbf{z}\|_2^2$, $d_0$, and $L_{\sigma}$ as constants. Under these conditions, we have:

\begin{equation*}
    \mathbb{E}\bigl[\|\Delta \mathbf{f}^{(\ell)}(\mathbf{z})\|_2^2\bigr] \simeq \gamma^2 \frac{d_{\ell}}{d_{\ell-1}} \mathbb{E}\|\mathbf{h}^{(\ell-1)}(\mathbf{z})\|_2^2 + \mathbb{E}\left \| \mathbf{W}^{(\ell)} \Delta \mathbf{h}^{(\ell-1)} \right \|_2^2\,,
\end{equation*}

\begin{equation*}
    \mathbb{E}\left \| \mathbf{W}^{(\ell)} \Delta \mathbf{h}^{(\ell-1)} \right \|_2^2 =\frac{d_{\ell}}{d_{\ell-1}} \mathcal{O}\bigg(\mathbb{E}\left \| \Delta \mathbf{f}^{(\ell-1)}(\mathbf{z}) \right \|_2^2\bigg)\,,
\end{equation*}

\begin{equation*}
    \mathbb{E}\|\Delta \mathbf{h}^{(\ell-1)}(\mathbf{z})\|_2^2 =\mathcal{O}\bigg( \mathbb{E}\left \| \Delta \mathbf{f}^{(\ell-1)}(\mathbf{z}) \right \|_2^2\bigg)\,,
\end{equation*}

Recall that, by~\Cref{assumption:input_data}, the input $\mathbf{z}$ has bounded
second moments, i.e., $\mathbb{E}[\|\mathbf{z}\|_2^2] < \infty$. 
Focusing on the case $\ell = 1$, we obtain

\begin{equation*}
\mathbb{E}\bigl[\|\Delta\,\mathbf{f}^{(1)}(\mathbf{z})\|_2^2\bigr] =  \frac{\gamma^2 d_1 \|\mathbf{z}\|_2^2}{d_0} \simeq \gamma^2 d_1\,.
\end{equation*}

Moreover, by the~\Cref{assumption:initialization} and leveraging the proof from \citet{yang2023spectral} (specifically, by Eq. (8) at initialization), we have:

\begin{equation*}
    \mathbb{E}\|\mathbf{h}^{(\ell)}(\mathbf{z})\|_2^2 = \Theta(d_{\ell}) \quad \forall \ell \in[L-1]\,.
\end{equation*}

By induction, unless all activations are simultaneously zero during training (which is unlikely in practice), we have:

\begin{equation*}
\mathbb{E}\bigl[\|\Delta\,\mathbf{f}^{(\ell)}(\mathbf{z})\|_2^2\bigr] \simeq \gamma^2 d_{\ell}\,.
\end{equation*}

By symmetry, we obtain:

\begin{equation*}
    \mathbb{E}\bigl[\|\Delta\,f^{(\ell)}_i(\mathbf{z})\|_2^2\bigr] = \frac{1}{d_{\ell}}\mathbb{E}\bigl[\|\Delta\,\mathbf{f}^{(\ell)}(\mathbf{z})\|_2^2\bigr] \simeq  \gamma^2\quad \forall i \in [d_{\ell}]\,,
\end{equation*}

where independent with the width for every hidden layer.
\end{proof}

\begin{remark}
The maximal update condition ensures that the network operates in a \emph{stable but maximally adaptive regime}, balancing \emph{efficient learning and numerical stability}.
\end{remark}
\begin{remark}
As the hidden layer widths $d_1, \dots, d_{L-1}$ increase, the learning rate required to maintain the maximal update property remains unchanged, demonstrating width invariance in deep networks. Consequently, a learning rate tuned on a smaller model can be directly applied to a wider model without sacrificing training dynamics.
\end{remark}
\end{toappendix}

\vspace{-5pt}
\section{Related Works}\label{sec:related}
\vspace{-5pt}

\paragraph{Hyperparameter transfer}
\Citet{yang2021tensor,yang2022tensor} showed that there exists a parameterization (i.e., a choice of initialization and layerwise stepsize scaling) for which the features in every single layer evolve in a width-independent manner.
The so-called Maximal Update Parametrization ($\mu$P) allows transferring optimal hyperparameter from a small proxy model to a large model.

A relationship with the spectral norm was established in \citet{yang2023spectral}.
An operator norm perspective was taken in the modular norm framework of \citet{large2024scalable,bernstein2024modular}, which was used to show Lipschitz continuity with constants independent of width. %
We build on this perspective and propose the $1 \rightarrow \infty$ operator norm and $\RMS \rightarrow \infty$, which leads to a sign update rule and row normalization respectively.

\paragraph{Steepest descent in a normed space}
Steepest descent in a possibly non-Euclidean space can be written in terms of the $\lmo$ (\textit{cf}., \Cref{sec:fenchel})
provided a stream of stochastic gradients $(g^k)_{k\in \mathbb N}$ and an initialization $x^0\in\mathcal X$,
\begin{equation}\label{eq:SSD}
x^{k+1} = x^k - \gamma [g^k]^\sharp = x^k + \tfrac{\gamma}{\rho}\|g^k\|_*\lmo(g^k), 
\end{equation}
where $ [\cdot]^\sharp:=\argmax_{x \in \mathcal X} \braket{\cdot,x} - \tfrac{1}{2}\|x\|^2$ is the sharp-operator \citep{nesterov2012efficiency,kelner2014almost}. 

\looseness=-1 The deterministic case is analyzed in \citet{nesterov2012efficiency,kelner2014almost}, and was extended to the stochastic case in \citet{carlson2015stochasticb}, 
with a particular empirically focus on the spectral norm, named as (preconditioned) stochastic spectral descent (SSD) \citep{carlson2015stochastic,carlson2015stochasticb,carlson2015preconditioned}. Their SSD algorithm is an instance of the Majorization-Minimization (MM) algorithms \citep{lange2016mm}, which iteratively minimizes a locally tight upper bound.
The dualization in \citet{bernstein2024modular} is also motivated by the sharp-operator.

In contrast to \eqref{eq:SSD}, \ref{eq:uSCG} and \ref{eq:SCG} are invariant to the magnitude of the gradients and do not need to compute the dual norm $\|\cdot\|_*$, which  cannot be computed independently across layers.
Intuitively, the scale invariance of the $\lmo$ allows convergence to be established without knowledge of the Lipschitz constant $L$ (\textit{cf}. \Cref{lem:uSCGrate1,thm:uSCG:vanishing,thm:SCG:constant,lem:frankwolfe_rate}).

Unlike the $\lmo$-based schemes, extending sharp-operator-based algorithms to handle constrained problems is nontrivial even in the vector case \citep{el2018learning}. 
Additionally, a practical concern of using specifically spectral norm projections in deep learning is that the model weights themselves can be dense (so the required SVD would be expensive), while gradients used in the $\lmo$ are usually low-rank (allowing efficient SVD approximations).

\paragraph{Muon} The Muon optimizer \citep{jordan2024muon} is introduced as a steepest descent method. 
The implementation interestingly ignores the scaling $\|\cdot\|_*$ appearing in the update (\textit{cf.}, \eqref{eq:SSD}), so Muon is effectively using the $\lmo$ over the spectral norm instead of the sharp operator.
Provided a stream of stochastic gradients $(g^k)_{k\in \mathbb N}$ and an initialization $x^0\in\mathcal X$ the Muon optimizer can then be written as follows
\begin{align*}\label{eq:Muon}
\tag{Muon}
G^k &= g^k + \beta G^{k-1} \\
x^{k+1} &= \begin{cases}
x^{k} + \gamma \lmo(g^k + \beta G^k) & \text{if Nesterov} \\
x^{k} + \gamma \lmo(G^k) & \text{otherwise}
\end{cases}
\end{align*}
where the $\lmo$ corresponds implicitly to the spectral norm.

The accumulation $G^k$ can be written in terms of the averaged gradient $d^k$ in \Cref{alg:SCG}.
We have that $d^k = \alpha G^k$ by picking $\alpha = (1-\beta)$.
Since the $\lmo$ is scale invariant, $d^k$ and $G^k$ can be used interchangeably without changing the update.
Thus, we can alternatively write Muon (with non-Nesterov based momentum) exactly as \ref{eq:uSCG}.

In practice Muon is only applied to hidden layers, thus excluding the first layer and the last layer for which Adam(W) or SGD is used.
In contrast, we apply \ref{eq:uSCG} and \ref{eq:SCG} to all layers and demonstrate transferability of the stepsize.
\vspace{-5pt}
\paragraph{Moonlight}
Since the first version of this paper appeared on arXiv on the 11th of February, \citet{liu2025muon} also proposed integrating Muon with weight decay to control the norm of the parameters.
They use the same scaling factor of the $\lmo$, $\sqrt{\max\{d_\mathrm{in},d_\mathrm{out}\}}$, as the Muon baseline we compare against (\textit{cf}. \Cref{app:hyperparams}).
\vspace{-5pt}
\paragraph{MARS} The MARS-Shampoo optimizer \citep[Alg. 4]{yuan2024mars} can be seen as an instance of \ref{eq:SCG} with spectral norm constraints, but using the STORM gradient estimator \citep{cutkosky2019momentum} instead of \eqref{eq:mom}.
\vspace{-5pt}
\paragraph{Sign}
SignSGD and the momentum variant Signum were brought to prominence and further analyzed in \citet{bernstein2018signsgd} motivated by efficient communication for distributed optimization, while they are originally introduced with the dual norm scaling and used only for weight bias updates in \citet{carlson2015stochastic,carlson2015stochasticb,carlson2015preconditioned}.
These schemes are typically studied under the framework of steepest descent, which results in the $\|g^k\|_1$ stepsize scaling in \eqref{eq:SSD} usually not present in practice as remarked in \citet{balles2020geometry}.

\paragraph{Normalization}
The LARS optimizer \citep{you2017large} uses normalized gradient and was shown to be particularly useful for large batch settings.
The method can be viewed as performing normalized SGD with momentum \citep{cutkosky2020momentum} \emph{layerwise} with a particular adaptive parameter-dependent stepsize.

The layerwise normalization can be captured by \ref{eq:uSCG} with the norm choice $\max_\ell \|W_\ell\|_F$.
The LAMB optimizer \citep{you2019large} incorporates the update into an Adam-like structure.
\citet{zhao2020stochastic} considers averaging the normalized gradients rather than the gradients prior to normalization.
The update can be written in terms of an $\lmo$, with the (flattened) norm choice $\|x\|_2$, which we generalize with a new algorithm in \Cref{subsec:almond} to arbitrary norms.

\vspace{-2pt}
\paragraph{Continuous greedy}
With zero initialization, $x_1=0$, and stepsize $\gamma_k=\gamma=1/n$, \ref{eq:uSCG} recovers the stochastic continuous greedy method \citep{mokhtari2020stochastic,vondrak2008optimal}, which can be used to solve DR-submodular maximization problems under Matroid polytope constraints.
\vspace{-2pt}

\paragraph{LMO for deep learning}
\ref{eq:SCG} for training neural networks has been suggested in \citet{pokutta2020deep} and \citet{lu2022learning}, where optimization was specifically constrained to the $K$-sparse polytope with increasing batch-size for handling stochasticity. Beyond these works, we provide convergence guarantees for \ref{eq:SCG} with constant batch-sizes and,  introduce a principled framework for selecting effective constraints based on the input and output space geometries of the layers of the network.

The perturbation in the sharpness-aware minimization (SAM) has been interpreted as an $\lmo$ and generalized to arbitrary norms \citep{pethicknusam}, focusing on the max-norm over nuclear norms, $\max_\ell \|W_\ell\|_{\mathcal{S}_1}$.

Scion can also be seen as an instantiation of the Lion-$\mathcal{K}$ \citep{chen2023lion} algorithm with $\partial \mathcal{K}$ chosen to be the $\lmo$. In this case, however, $\mathcal{K}$ is not smooth and the continuous-time analysis presented in \citet{chen2023lion} and related works do not apply.

\vspace{-2pt}
\paragraph{Trust-region}
The \ref{eq:SCG} method can be seen as a trust-region method with a linear surrogate. 
Usually, the surrogate is taken to be quadratic (\textit{cf.} \citet[Ch. 4]{wright2006numerical}).
We refer to \citet{conn2000trust} for an extensive overview of trust-region methods.

\vspace{-2pt}
\paragraph{Preconditioned SGD} We also recognize the spectral $\lmo$ used in \Scion as being related to the Preconditioned SGD (PSGD) family of algorithms \cite{li2017preconditioned, pooladzandi2024curvature} in the sense of whitening the update. In contrast to those methods, we do not keep track of an explicit preconditioner; we compute the $\lmo$ at each iteration instead.

\vspace{-2pt}
\paragraph{Natural gradient}
Early work on non-Euclidean methods considered measuring the ``distance'' between models through the Kullback–Leibler (KL) divergence between the output distributions of the models \citep{amari1998natural}.
Approximating the KL divergence through a Taylor expansion leads to a steepest descent method, which preconditions with the Fisher information matrix, known as the natural gradient method.
Trust-region variants of the natural gradient method were considered with TRPO \citep{schulman2015trust} similarly to how \ref{eq:uSCG} can be seen as a trust-region variant of steepest descent.

\vspace{-5pt}
\section{Analysis}\label{sec:analysis}
\vspace{-5pt}
We begin by presenting the two main assumptions we will make to analyze \Cref{alg:uSCG,alg:SCG}. The first is an assumption on the Lipschitz-continuity of $\nabla f$ with respect to the norm $\|\cdot\|_{\ast}$ restricted to $\mathcal{X}$. We do not assume this norm to be Euclidean which means our results apply to the geometries relevant to training neural networks.
\begin{assumption}\label{asm:Lip} The gradient $\nabla f$ is $L$-Lipschitz with $L \in (0,\infty)$, i.e.,
    \begin{equation}
    \|\nabla f(x) - \nabla f(x)\|_{\ast}
    \leq
    L\|x-y\|
    \quad \forall x,y \in \mathcal X.
    \end{equation}
Furthermore, $f$ is bounded below by $\fmin$.
\end{assumption}
\begin{remark}
Strictly speaking, \ref{eq:uSCG} only needs Lipschitz continuity to hold locally within a radius $\gamma\rho$, since the assumption is only invoked between two consecutive iterates.
\end{remark}

Our second assumption is that the stochastic gradient oracle we have access to is unbiased and has a bounded variance, a typical assumption in stochastic optimization.
\begin{assumption}\label{asm:stoch}
The stochastic gradient oracle $\nabla f(\cdot,\xi):\mathcal X\rightarrow \mathbb{R}^d$ satisfies.
    \begin{assnum}
        \item \label{asm:stoch:unbiased}
            Unbiased:
            \(%
                \mathbb{E}_{\xi}\left[\nabla f(x,\xi)\right] = \nabla f(x) \quad \forall x \in \mathcal X
            \).%
        \item  \label{asm:stoch:var}
            Bounded variance:\\
            \(%
                \mathbb{E}_{\xi}\left[\|\nabla f(x,\xi)-\nabla f(x)\|_2^2\right] \leq \sigma^2  \quad \forall x \in \mathcal X,\sigma\geq 0
            \).%
    \end{assnum}
\end{assumption}

With these assumptions, we can state our worst-case convergence rates, first for \Cref{alg:uSCG} and then for \Cref{alg:SCG}. 

\looseness=-1To bridge the gap between theory and practice, we investigate these algorithms when run with a \emph{constant} stepsize $\gamma$, which depends on the specified horizon $n\in\mathbb{N}^*$, and momentum which is either constant $\alpha\in(0,1)$ (except for the first iteration where we take $\alpha=1$ by convention) or \emph{vanishing} $\alpha_k\searrow 0$. 
All results can be extended to \emph{any time} guarantees in a straightforward manner by choosing $\gamma_k$ as a function of the iteration counter $k$ instead of horizon $n$ and modifying the proofs accordingly.

The exact constants for the rates can be found in the proofs in \Cref{app:analysis}; we try to highlight the dependence on the parameters $L$ and $\rho$, which correspond to the natural geometry of $f$ and $\mathcal{D}$, explicitly here. Our rates are non-asymptotic and use big O notation for brevity.

\begin{toappendix}
\label{app:analysis}
In this section we present the proofs of the main convergence results of the paper as well as some intermediary lemmas that we will make use of along the way. Throughout this section, we adopt the notation:
\begin{align*}
\text{(stochastic gradient estimator error)} && \lambda^k &:= d^k-\nabla f(x^k) \\
\text{(diameter of $\mathcal{D}$ in $\ell_2$ norm)} && D_2 &:= \max_{x,y\in\mathcal{D}}\norm{x-y}_2 \\
\text{(radius of $\mathcal{D}$ in $\ell_2$ norm)} && \rho_2 &:= \max_{x\in\mathcal{D}}\norm{x}_2 \\
\text{(norm equivalence constant)} && \zeta &:= \max_{x\in\mathcal{X}}\frac{\norm{x}_{\ast}}{\norm{x}_2} \\
\text{(Lipschitz constant of $\nabla f$ with respect to $\norm{\cdot}_{2}$)} && L_2 &:= \inf \{M>0\colon \forall x,y\in\mathcal{X}, \norm{\nabla f(x)-\nabla f(y)}_{2}\leq M\norm{x-y}_{2}\}
\end{align*}
We analyze each algorithm separately, although the analysis is effectively unified between the two, modulo constants. This is done in \Cref{subsec:uSCG,subsec:SCG}, respectively. Our convergence analysis proceeds in three steps: we begin by establishing a template descent inequality for each algorithm via the descent lemma. Next, we analyze the behavior of the second moment of the error $\mathbb{E}[\norm{\lambda^k}_{2}^2]$ under different choices for $\alpha$. Then, we combine these results to derive a convergence rate. Finally, we note that when analyzing algorithms with constant momentum, we will still always take $\alpha=1$ on the first iteration $k=1$.

\subsection{Convergence analysis of \ref{eq:uSCG}}\label{subsec:uSCG}
We begin with the analysis of \Cref{alg:uSCG} by establishing a generic template inequality for the dual norm of the gradient at iteration $k$. This inequality holds regardless of whether the momentum $\alpha_k$ is constant or vanishing, as long as it remains in $(0,1]$.
\begin{lemma}[\ref{eq:uSCG} template inequality]
\label{lem:uSCGtemplate1}
    Suppose \Cref{asm:Lip} holds. Let $n\in\mathbb{N}^*$ and consider the iterates $\{x^{k}\}_{k=1}^n$ generated by \Cref{alg:uSCG} with a constant stepsize $\gamma>0$.
    Then we have
    \begin{equation}
        \mathbb{E}[\norm{\nabla f(\bar{x}^n)}_{\ast}]\leq \frac{\mathbb{E}[f(x^{1})-\fmin]}{\rho\gamma n} +\frac{L\rho\gamma}{2} + \frac{1}{n}\left(\frac{\rho_2}{\rho}+\zeta\right)\sum\limits_{k=1}^n\sqrt{\mathbb{E}[\norm{\lambda^{k}}_2^2]}.
    \end{equation}
\end{lemma}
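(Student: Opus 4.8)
The plan is to follow the standard nonconvex descent-lemma template, where the only real work lies in handling the inner product against $\lmo(d^k)$, which is evaluated at the momentum buffer $d^k$ rather than at the true gradient $\nabla f(x^k)$. First I would invoke the descent lemma afforded by \Cref{asm:Lip}: since $\nabla f$ is $L$-Lipschitz with respect to $\norm{\cdot}_\ast$, we get $f(x^{k+1}) \le f(x^k) + \braket{\nabla f(x^k), x^{k+1}-x^k} + \tfrac{L}{2}\norm{x^{k+1}-x^k}^2$. Substituting the update $x^{k+1}-x^k = \gamma\,\lmo(d^k)$ and using the built-in bound $\norm{\lmo(d^k)}\le\rho$ controls the quadratic term by $\tfrac{L\gamma^2\rho^2}{2}$.

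The crux is the linear term $\gamma\braket{\nabla f(x^k),\lmo(d^k)}$. Because the $\lmo$ is applied to $d^k$ and not to $\nabla f(x^k)$, I would substitute $\nabla f(x^k)=d^k-\lambda^k$ and split it as $\braket{d^k,\lmo(d^k)} - \braket{\lambda^k,\lmo(d^k)}$. The first piece is exactly $-\rho\norm{d^k}_\ast$ by the defining property of the norm-constrained $\lmo$ recalled in the preliminaries. To this I apply the reverse triangle inequality $\norm{d^k}_\ast \ge \norm{\nabla f(x^k)}_\ast - \norm{\lambda^k}_\ast$ followed by the norm-equivalence bound $\norm{\lambda^k}_\ast \le \zeta\norm{\lambda^k}_2$. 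For the second piece I use the $\ell_2$ Cauchy--Schwarz inequality together with $\norm{\lmo(d^k)}_2\le\rho_2$, giving $\abs{\braket{\lambda^k,\lmo(d^k)}} \le \rho_2\norm{\lambda^k}_2$. Combining these yields $\braket{\nabla f(x^k),\lmo(d^k)} \le -\rho\norm{\nabla f(x^k)}_\ast + (\rho\zeta+\rho_2)\norm{\lambda^k}_2$, cleanly isolating the descent term together with a single error term measured purely in $\norm{\lambda^k}_2$.

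Plugging this back in and rearranging to isolate $\gamma\rho\norm{\nabla f(x^k)}_\ast$, I would sum over $k=1,\dots,n$ so that the $f(x^k)$ terms telescope, then lower-bound $f(x^{n+1})\ge\fmin$. Dividing by $\gamma\rho n$ and taking expectations, the uniformly random choice of $\bar{x}^n$ converts $\tfrac1n\sum_{k}\mathbb{E}[\norm{\nabla f(x^k)}_\ast]$ into $\mathbb{E}[\norm{\nabla f(\bar{x}^n)}_\ast]$, and Jensen's inequality $\mathbb{E}[\norm{\lambda^k}_2]\le\sqrt{\mathbb{E}[\norm{\lambda^k}_2^2]}$ gives the stated error term with prefactor $\tfrac{\rho_2}{\rho}+\zeta$. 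The main obstacle is conceptual rather than computational: since momentum decouples the argument of the $\lmo$ from the true gradient, one must route the two resulting error contributions through different norms --- the dual norm (producing $\zeta$) and the Euclidean norm (producing $\rho_2/\rho$) --- to recover exactly the constants claimed; this same decoupling is why the template holds for any momentum schedule, as $\lambda^k$ is left intact for the separate second-moment analysis.
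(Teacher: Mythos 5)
Your proposal is correct and follows essentially the same route as the paper's proof: the identical decomposition $\braket{\nabla f(x^k),\lmo(d^k)} = \braket{d^k,\lmo(d^k)} - \braket{\lambda^k,\lmo(d^k)}$, the exact $\lmo$ identity $\braket{d^k,\lmo(d^k)} = -\rho\norm{d^k}_\ast$, the reverse triangle inequality with the norm-equivalence constant $\zeta$, Cauchy--Schwarz with $\rho_2$ for the error term, and the same telescoping/Jensen conclusion. The constants and the routing of the two error contributions through $\norm{\cdot}_\ast$ and $\norm{\cdot}_2$ match the paper exactly.
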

\begin{proof}
    Under \Cref{asm:Lip}, we can use the descent lemma for the function $f$ at the points $x^{k}$ and $x^{k+1}$ to get, for all $k\in\{1,\ldots,n\}$,
    \begin{equation}\label{eq:lem:uSCGtemplate1:first2}
        \begin{aligned}
            f(x^{k+1})&\leq f(x^{k})+ \langle \nabla f(x^{k}),x^{k+1}-x^{k}\rangle +\tfrac{L}{2}\norm{x^{k+1}-x^{k}}^{2}
            \\
            &= f(x^{k})+\langle \nabla f(x^{k})-d^{k},x^{k+1}-x^{k}\rangle + \langle d^{k},x^{k+1}-x^{k}\rangle+\tfrac{L}{2}\norm{x^{k+1}-x^{k}}^{2}
            \\
            &= f(x^{k})+\gamma \langle \nabla f(x^{k})-d^{k},\lmo (d^{k})\rangle+\gamma \langle d^{k},\lmo(d^{k})\rangle +\tfrac{L\gamma^{2}}{2}\norm{\lmo(d^{k})}^{2}
            \\
            &\leq f(x^{k})+\gamma \rho_{2}\norm{\lambda^{k}}_{2}+\gamma \langle d^{k},\lmo(d^{k})\rangle +\tfrac{L\gamma^{2}}{2}\rho^{2},
        \end{aligned}
    \end{equation}
    the final step employing Cauchy-Schwarz, the definition of $\lambda^k$, and the definition of $\rho_2$ as the radius of $\mathcal{D}$ in the $\norm{\cdot}_2$ norm.
    By definition of the dual norm we have, for all $u\in\mathcal{X}$,
    \begin{equation*}
        \|u\|_{\ast} = \max\limits_{v\colon \|v\|\leq 1}\langle u,v\rangle = \max_{v\in\mathcal{D}}\langle u,\tfrac{1}{\rho}v\rangle= -\langle u, \tfrac{1}{\rho}\lmo(u)\rangle
    \end{equation*}
    which means that, for all $k\in\{1,\ldots,n\}$,
    \begin{equation*}
        \gamma \langle d^k, \lmo(d^k)\rangle = \gamma\rho\langle d^k,\tfrac{1}{\rho}\lmo(d^k)\rangle = -\gamma\rho\|d^k\|_{\ast}.
    \end{equation*}
    Plugging this expression for $\gamma\langle d^k,\lmo(d^k)\rangle$ into \eqref{eq:lem:uSCGtemplate1:first2} gives, for all $k\in\{1,\ldots,n\}$,
    \begin{equation*}
        \begin{aligned}
            f(x^{k+1})
                &\leq f(x^{k})+\gamma \rho_{2}\norm{\lambda^{k}}_{2}-\gamma\rho\|d^k\|_{\ast} +\tfrac{L\gamma^{2}}{2}\rho^{2}\\
                &= f(x^{k})+\gamma \rho_{2}\norm{\lambda^{k}}_{2}-\gamma\rho\|d^k - \nabla f(x^k) + \nabla f(x^k)\|_{\ast} +\tfrac{L\gamma^{2}}{2}\rho^{2}\\
                &\stackrel{\text{(a)}}{\leq} f(x^{k})+\gamma \rho_{2}\norm{\lambda^{k}}_{2} +\gamma\rho\|\lambda^k\|_{\ast} -\gamma\rho\|\nabla f(x^k)\|_{\ast} +\tfrac{L\gamma^{2}}{2}\rho^{2}\\
                &\stackrel{\text{(b)}}{\leq} f(x^{k})+\gamma (\rho_{2}+\zeta\rho)\norm{\lambda^{k}}_{2}-\gamma\rho\|\nabla f(x^k)\|_{\ast} +\tfrac{L\gamma^{2}}{2}\rho^{2},
        \end{aligned}
    \end{equation*}
    applying the reverse triangle inequality in (a) while (b) stems from the definition of $\zeta$.
    By rearranging terms and taking expectations, we get
    \begin{equation*}
        \begin{aligned}
            \gamma\rho\mathbb{E}[\norm{\nabla f(x^k)}_{\ast}]
                &\leq \mathbb{E}[f(x^{k})-f(x^{k+1})] + \gamma\left(\rho_2+\zeta\rho\right)\mathbb{E}[\norm{\lambda^{k}}_2] +\frac{L\rho^2\gamma^2}{2}.
        \end{aligned}
    \end{equation*}
    Summing this from $k=1$ to $n$ and dividing by $\gamma\rho n$ we get
    \begin{equation*}
        \begin{aligned}
            \mathbb{E}[\norm{\nabla f(\bar{x}^n)}_{\ast}]
                &= \frac{1}{n}\sum\limits_{k=1}^n\mathbb{E}[\norm{\nabla f(x^k)}_{\ast}]\\
                &\leq \frac{\mathbb{E}[f(x^{1})-f(x^{n+1})]}{\rho\gamma n} +\frac{L\rho\gamma}{2} + \frac{1}{n}\left(\frac{\rho_2}{\rho}+\zeta\right)\sum\limits_{k=1}^n\mathbb{E}[\norm{\lambda^{k}}_2]\\
                &\stackrel{\text{(a)}}{\leq} \frac{\mathbb{E}[f(x^{1})-\fmin]}{\rho\gamma n} +\frac{L\rho\gamma}{2} + \frac{1}{n}\left(\frac{\rho_2}{\rho}+\zeta\right)\sum\limits_{k=1}^n\mathbb{E}[\norm{\lambda^{k}}_2]\\
                &\stackrel{\text{(b)}}{\leq} \frac{\mathbb{E}[f(x^{1})-\fmin]}{\rho\gamma n} +\frac{L\rho\gamma}{2} + \frac{1}{n}\left(\frac{\rho_2}{\rho}+\zeta\right)\sum\limits_{k=1}^n\sqrt{\mathbb{E}[\norm{\lambda^{k}}_2^2]},
        \end{aligned}
    \end{equation*}
    using the definition of $\fmin$ for (a) and Jensen's inequality for (b).
\end{proof}

At this point, we need to determine the growth of the induced error captured by the quantity $\norm{\lambda^{k}}_2^2$. To estimate this, we first use a recursion relating $\mathbb{E}[\norm{\lambda^{k}}_2^2]$ and $\mathbb{E}[\norm{\lambda^{k-1}}_2^2]$ adapted from the proof in \citet[Lem. 6]{mokhtari2020stochastic} and then we prove a bound on the decay of $\norm{\lambda^k}_2^2$ for \Cref{alg:uSCG}.
\begin{lemma}[Linear recursive inequality for $\mathbb{E}\norm{\lambda^k}_2^2$]\label{lem:uSCGerror}
    Suppose \Cref{asm:Lip,asm:stoch} hold. Let $n\in\mathbb{N}^*$ and consider the iterates $\{x_k\}_{k=1}^n$ generated by \Cref{alg:uSCG} with a constant stepsize $\gamma>0$. Then, for all $k\in\{1,\ldots,n
    \}$,
    \begin{equation*}
        \mathbb{E}[\norm{\lambda^k}_2^2] \leq \left(1-\frac{\alpha_k}{2}\right)\mathbb{E}[\norm{\lambda^{k-1}}_2^2] + \frac{2L_2^2\rho_2^2\gamma^2}{\alpha_k} + \alpha_k^2\sigma^2.
    \end{equation*}
\end{lemma}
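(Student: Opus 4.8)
The plan is to derive the recursion directly from the momentum update $d^k = \alpha_k \nabla f(x^k,\xi_k) + (1-\alpha_k)d^{k-1}$ by decomposing the estimator error $\lambda^k = d^k - \nabla f(x^k)$ into a contractive part, a Lipschitz drift part, and a zero-mean noise part. Concretely, I would add and subtract $\nabla f(x^{k-1})$ and $\nabla f(x^k)$ to write
\[
\lambda^k = \underbrace{(1-\alpha_k)\lambda^{k-1}}_{=:a} + \underbrace{(1-\alpha_k)\big(\nabla f(x^{k-1})-\nabla f(x^k)\big)}_{=:b} + \underbrace{\alpha_k\big(\nabla f(x^k,\xi_k)-\nabla f(x^k)\big)}_{=:c},
\]
which one checks telescopes back to $d^k - \nabla f(x^k)$. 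Letting $\mathbb{E}_k[\cdot]$ denote expectation conditioned on $\mathcal{F}_{k-1} := \sigma(\xi_1,\dots,\xi_{k-1})$, the terms $a$ and $b$ depend only on $\xi_1,\dots,\xi_{k-1}$ and are therefore $\mathcal{F}_{k-1}$-measurable, while $\mathbb{E}_k[c]=0$ by the unbiasedness in \Cref{asm:stoch:unbiased}.

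Next I would condition on the past and expand the square. Because the cross term $\langle a+b,\, \mathbb{E}_k[c]\rangle$ vanishes, we obtain $\mathbb{E}_k[\norm{\lambda^k}_2^2] = \norm{a+b}_2^2 + \mathbb{E}_k[\norm{c}_2^2]$. The noise term is controlled by \Cref{asm:stoch:var}: $\mathbb{E}_k[\norm{c}_2^2] = \alpha_k^2\,\mathbb{E}_k[\norm{\nabla f(x^k,\xi_k)-\nabla f(x^k)}_2^2] \le \alpha_k^2\sigma^2$, which produces the last summand. For the deterministic part I would apply Young's inequality $\norm{a+b}_2^2 \le (1+\eta)\norm{a}_2^2 + (1+\eta^{-1})\norm{b}_2^2$ for a parameter $\eta>0$ to be chosen, use $L_2$-Lipschitzness to bound $\norm{b}_2 \le (1-\alpha_k)L_2\norm{x^k - x^{k-1}}_2$, and crucially invoke the \ref{eq:uSCG} update $x^k - x^{k-1} = \gamma\,\lmo(d^{k-1})$ together with $\lmo(d^{k-1})\in\mathcal{D}$ to get $\norm{x^k-x^{k-1}}_2 \le \gamma\rho_2$. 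This yields $\norm{b}_2^2 \le (1-\alpha_k)^2 L_2^2\gamma^2\rho_2^2$ and $\norm{a}_2^2 = (1-\alpha_k)^2\norm{\lambda^{k-1}}_2^2$.

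It then remains to choose $\eta$ to match the stated coefficients. The choice $\eta = \frac{\alpha_k}{2(1-\alpha_k)}$ works: the contraction coefficient factors as $(1+\eta)(1-\alpha_k)^2 = (1-\alpha_k)(1-\tfrac{\alpha_k}{2}) \le 1-\tfrac{\alpha_k}{2}$, and the drift coefficient becomes $(1+\eta^{-1})(1-\alpha_k)^2 = \tfrac{2-\alpha_k}{\alpha_k}(1-\alpha_k)^2 \le \tfrac{2}{\alpha_k}$; combined with the $\norm{b}_2^2$ bound this gives exactly $\tfrac{2L_2^2\rho_2^2\gamma^2}{\alpha_k}$. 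Taking total expectation via the tower property delivers the claim. The degenerate case $\alpha_k = 1$ (in particular the first iteration, where $\alpha=1$ by convention) makes $\eta$ undefined, but there $a=b=0$ and $\lambda^k = c$, so $\mathbb{E}[\norm{\lambda^k}_2^2] \le \sigma^2 = \alpha_k^2\sigma^2$ holds directly and is consistent with the bound; I would remark on this boundary case explicitly.

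The main obstacle is getting the two coefficient estimates to come out simultaneously from the single free parameter $\eta$: the result is essentially tight, so a looser split would spoil either the $1-\alpha_k/2$ contraction or the $2/\alpha_k$ blow-up, and the specific algebraic identity $(1+\eta)(1-\alpha_k)^2 = (1-\alpha_k)(1-\alpha_k/2)$ is the crux. A secondary point requiring care is that the step bound $\norm{x^k-x^{k-1}}_2 \le \gamma\rho_2$ is exactly where the geometry of the \ref{eq:uSCG} update enters (the $\lmo$ landing in $\mathcal{D}$, whose $\ell_2$-radius is $\rho_2$); the analogous \ref{eq:SCG} argument would differ at precisely this step because of its convex-combination update.
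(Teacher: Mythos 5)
Your proposal is correct and takes essentially the same route as the paper's proof: the identical three-term decomposition of $\lambda^k$ into contraction, Lipschitz drift, and zero-mean noise, conditional expectation plus unbiasedness to kill the noise cross terms, the bounds $\norm{b}_2\le(1-\alpha_k)L_2\gamma\rho_2$ from the $\lmo$ update landing in $\mathcal{D}$, and a Young-type split of the remaining mixed term. The only difference is the Young parameterization, and yours is in fact the cleaner execution: with $\eta=\tfrac{\alpha_k}{2(1-\alpha_k)}$ both coefficient bounds come out exactly, whereas the paper's displayed Young weights are attached to the wrong terms (its stated inequality $(1+\tfrac{2}{\alpha_k})(1-\alpha_k)\le 1-\tfrac{\alpha_k}{2}$ fails for small $\alpha_k$; putting $\tfrac{2}{\alpha_k}$ on the gradient-difference term and $\tfrac{\alpha_k}{2}$ on $\lambda^{k-1}$ repairs it and recovers the same lemma).
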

\begin{proof}
    The proof is a straightforward adaptation of the arguments laid out in \citet[Lem. 6]{mokhtari2020stochastic}, which in fact do not depend on convexity nor on the choice of stepsize. Let $n\in\mathbb{N}^*$ and $k\in\{1,\ldots,n\}$, then
    \begin{equation*}
        \begin{aligned}
            \norm{\lambda^k}_2^2
                &= \norm{\nabla f(x^k) - d^{k}}_2^2\\
                &= \norm{\nabla f(x^k) - \alpha_k \nabla f(x^k,\xi_k) - (1-\alpha_k)d^{k-1}}_2^2\\
                &= \norm{\alpha_k\left(\nabla f(x^k) - \nabla f(x^k,\xi_k)\right) +(1-\alpha_k)\left(\nabla f(x^{k})-\nabla f(x^{k-1})\right) - (1-\alpha_k)\left(d^{k-1} - \nabla f(x^{k-1})\right)}_2^2\\
                &= \alpha_k^2\norm{\nabla f(x^k) - \nabla f(x^k,\xi_k)}_2^2 + (1-\alpha_k)^2\norm{\nabla f(x^k)-\nabla f(x^{k-1})}_2^2\\
                    &\quad\quad + (1-\alpha_k)^2\norm{\nabla f(x^{k-1})-d^{k-1}}_2^2\\
                    &\quad\quad +2\alpha_k(1-\alpha_k)\langle\nabla f(x^{k-1})-\nabla f(x^{k-1},\xi_{k-1}), \nabla f(x^k)-\nabla f(x^{k-1})\rangle\\
                    &\quad\quad +2\alpha_k(1-\alpha_k)\langle \nabla f(x^k)-\nabla f(x^k,\xi_k), \nabla f(x^{k-1})-d^{k-1}\rangle\\
                    &\quad\quad +2(1-\alpha_k)^2\langle \nabla f(x^k)-\nabla f(x^{k-1}),\nabla f(x^{k-1}) - d^{k-1}\rangle.
        \end{aligned}
    \end{equation*}
    Taking the expectation conditioned on the filtration $\mathcal{F}_k$ generated by the iterates until $k$, i.e., the sigma algebra generated by $\{x_1,\ldots,x_k\}$, which we denote using $\mathbb{E}_k[\cdot]$, and using the unbiased property in \Cref{asm:stoch}, we get,
    \begin{equation*}
        \begin{aligned}
            \mathbb{E}_k[\norm{\lambda^k}_2^2]
                &= \alpha_k^2\mathbb{E}_k[\norm{\nabla f(x^k)-\nabla f(x^k,\xi_k)}_2^2] + (1-\alpha_k)^2\norm{\nabla f(x^k)-\nabla f(x^{k-1})}_2^2\\
                    &\quad\quad + (1-\alpha_k)^2\norm{\lambda^{k-1}}_2^2 + 2(1-\alpha_k)^2\langle \nabla f(x^k)-\nabla f(x^{k-1}),\lambda^{k-1}\rangle.
        \end{aligned}
    \end{equation*}
    From this expression we can estimate,
    \begin{equation*}
        \begin{aligned}
            \mathbb{E}_k[\norm{\lambda^k}_2^2]
                &\stackrel{\text{(a)}}{\leq} \alpha_k^2\sigma^2 + (1-\alpha_k)^2\norm{\nabla f(x^{k})-\nabla f(x^{k-1})}_2^2 + (1-\alpha_k)^2\norm{\lambda^{k-1}}_2^2 + 2(1-\alpha_k)^2\langle \nabla f(x^k)-\nabla f(x^{k-1}),\lambda^{k-1}\rangle\\
                &\stackrel{\text{(b)}}{\leq} \alpha_k^2\sigma^2 + (1-\alpha_k)^2\norm{\nabla f(x^{k})-\nabla f(x^{k-1})}_2^2 + (1-\alpha_k)^2\norm{\lambda^{k-1}}_2^2\\
                    &\quad\quad + (1-\alpha_k)^2\left(\tfrac{\alpha_k}{2}\norm{\nabla f(x^k)-\nabla f(x^{k-1})}_2^2+\tfrac{2}{\alpha_k}\norm{\lambda^{k-1}}_2^2\right)\\
                 &\stackrel{\text{(c)}}{\leq} \alpha_k^2\sigma^2 + (1-\alpha_k)^2L_2^2\norm{x^k-x^{k-1}}_2^2 + (1-\alpha_k)^2\norm{\lambda^{k-1}}_2^2 + (1-\alpha_k)^2\left((\tfrac{\alpha_k}{2})L_2^2\norm{x^k-x^{k-1}}_{2}^2+\tfrac{2}{\alpha_k}\norm{\lambda^{k-1}}_2^2\right)\\
                 &\stackrel{\text{(d)}}{\leq} \alpha_k^2\sigma^2 + (1-\alpha_k)^2L_2^2\rho_2^2\gamma^2 + (1-\alpha_k)^2\norm{\lambda^{k-1}}_2^2 + (1-\alpha_k)^2\left((\tfrac{\alpha_k}{2})L_2^2\rho_2^2\gamma^2+\tfrac{2}{\alpha_k}\norm{\lambda^{k-1}}_2^2\right)\\
                 &\stackrel{\text{(e)}}{\leq} \alpha_k^2\sigma^2 + (1+\tfrac{\alpha_k}{2})(1-\alpha_k)L_2^2\rho_2^2\gamma^2 + (1+\tfrac{2}{\alpha_k})(1-\alpha_k)\norm{\lambda^{k-1}}_2^2,
        \end{aligned}
    \end{equation*}
    using the bounded variance property from \Cref{asm:stoch} for (a), Young's inequality with parameter $\alpha_k/2>0$ for (b), the Lipschitz property of $f$ under norm $\|\cdot\|_2$ for (c), the update definition from \Cref{alg:uSCG} for (d), and the fact that $1-\alpha_k < 1$ for (e).
    To complete the proof, we note that
    \begin{equation*}
        (1+\tfrac{2}{\alpha_k})(1-\alpha_k)\leq (1-\tfrac{\alpha_k}{2})\quad\text{and}\quad(1-\alpha_k)(1+\tfrac{\alpha_k}{2})\leq \tfrac{2}{\alpha_k}
    \end{equation*}
    which, applied to the previous inequality and taking total expectations, yields
    \begin{equation*}
        \mathbb{E}[\norm{\lambda^k}_2^2] \leq \left(1-\frac{\alpha_k}{2}\right)\mathbb{E}[\norm{\lambda^{k-1}}_2^2] + \alpha_k^2\sigma^2 + \frac{2L_2^2\rho_2^2\gamma^2}{\alpha_k}.
    \end{equation*}
\end{proof}

\subsubsection{Constant $\alpha$}

\begin{lemma}
    Suppose \Cref{asm:Lip,asm:stoch} hold. Let $n \in \mathbb{N}^*$ and consider the iterates $\{x^k\}_{k=1}^n$ generated by \Cref{alg:uSCG} with constant stepsize $\gamma >0$ and constant momentum $\alpha\in(0,1)$ with the exception of the first iteration, where we take $\alpha=1$.
    Then, we have for all $k\in\{1,\ldots,n\}$
    \begin{equation*}
        \begin{aligned}
            \sqrt{\mathbb{E}[\norm{\lambda^k}_2^2]}
                &\leq \frac{\sqrt{2}L_2\rho_2\gamma}{\alpha} + \left(\sqrt{\alpha} + \left(\sqrt{1-\frac{\alpha}{2}}\right)^k\right)\sigma.
        \end{aligned}
    \end{equation*}
\end{lemma}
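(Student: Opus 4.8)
The plan is to read Lemma~\ref{lem:uSCGerror} as a scalar affine recursion and solve it explicitly. Writing $e_k := \mathbb{E}[\|\lambda^k\|_2^2]$, and fixing the momentum to $\alpha_k=\alpha$ for $k\ge 2$, Lemma~\ref{lem:uSCGerror} becomes the constant-coefficient inequality $e_k \le q\,e_{k-1} + C$ with ratio $q := 1-\tfrac{\alpha}{2}\in(0,1)$ and forcing term $C := \tfrac{2L_2^2\rho_2^2\gamma^2}{\alpha} + \alpha^2\sigma^2$. The strategy is then: (i) pin down the base case $e_1$; (ii) unroll this geometric recursion to a closed-form bound on $e_k$; (iii) pass to the square root via subadditivity $\sqrt{a+b+c}\le\sqrt a+\sqrt b+\sqrt c$ to recover the three-term structure of the claim.

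For the base case, note that $d^0=0$ and $\alpha_1=1$ force $d^1=\nabla f(x^1,\xi_1)$, so $\lambda^1 = d^1-\nabla f(x^1) = \nabla f(x^1,\xi_1)-\nabla f(x^1)$. Taking expectations and invoking the bounded-variance Assumption~\ref{asm:stoch:var} gives $e_1 \le \sigma^2$. This is the only place the first-iteration convention $\alpha_1=1$ is used.

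Next I would unroll $e_k \le q\,e_{k-1}+C$ down to the base case, obtaining $e_k \le q^{\,k-1}e_1 + C\sum_{j=0}^{k-2}q^{\,j} \le q^{\,k-1}\sigma^2 + \tfrac{C}{1-q}$, where the geometric series is bounded by its infinite-sum limit $\tfrac{1}{1-q}=\tfrac{2}{\alpha}$. Substituting $C$ and separating the three contributions yields a bound of the form $e_k \le (\text{drift}) + (\text{noise floor}) + (\text{transient})$, with the drift scaling like $\tfrac{L_2^2\rho_2^2\gamma^2}{\alpha^2}$, the noise floor like $\alpha\sigma^2$, and the transient like $q^{\,k-1}\sigma^2$. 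Applying subadditivity of the square root termwise then produces the claimed sum: a stepsize/drift term of order $\tfrac{L_2\rho_2\gamma}{\alpha}$, a noise-floor term of order $\sqrt{\alpha}\,\sigma$, and a geometrically decaying term of order $(\sqrt{1-\alpha/2})^{k}\sigma$, where the shift of the exponent from $k-1$ to $k$ is absorbed into the constant since $q^{-1/2}\le\sqrt2$ for $\alpha\le1$.

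The main obstacle is the quantitative bookkeeping in step (iii), in particular keeping the drift term at order $\gamma/\alpha$ rather than $\gamma/\alpha^2$. The subtlety is that the per-step forcing $\tfrac{2L_2^2\rho_2^2\gamma^2}{\alpha}$ already carries one factor of $1/\alpha$, and the geometric amplification $\tfrac{1}{1-q}=\tfrac{2}{\alpha}$ contributes a second; only after taking the square root do these combine into the benign $\gamma/\alpha$ dependence, which is exactly what later makes the convergence rate obtained through Lemma~\ref{lem:uSCGtemplate1} tractable. Everything else—the base case, the geometric summation, and the elementary scalar inequalities used to match the precise constants—is routine, so the crux is simply setting up the recursion with the correct $q$ and $C$ and not losing a power of $\alpha$ when bounding the geometric sum.
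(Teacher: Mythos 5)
Your proposal follows essentially the same route as the paper's proof: the constant-$\alpha$ recursion from \Cref{lem:uSCGerror}, the base case $\mathbb{E}[\norm{\lambda^1}_2^2]\leq\sigma^2$ coming from $d^0=0$, $\alpha_1=1$ and \Cref{asm:stoch}, and a geometric unrolling which the paper packages as \Cref{lem:recursive_geometric} rather than unrolling by hand.

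The only substantive difference is constant bookkeeping, and it is worth being precise about it. Carried out exactly, your unrolling gives $\mathbb{E}[\norm{\lambda^k}_2^2]\leq \tfrac{4L_2^2\rho_2^2\gamma^2}{\alpha^2}+2\alpha\sigma^2+\left(1-\tfrac{\alpha}{2}\right)^{k-1}\sigma^2$, so after taking square roots termwise each of the three terms carries an extra factor of $\sqrt{2}$ relative to the stated bound: you get $\tfrac{2L_2\rho_2\gamma}{\alpha}$ instead of $\tfrac{\sqrt{2}L_2\rho_2\gamma}{\alpha}$, $\sqrt{2\alpha}\,\sigma$ instead of $\sqrt{\alpha}\,\sigma$, and $\sqrt{2}\left(\sqrt{1-\tfrac{\alpha}{2}}\right)^{k}\sigma$ instead of $\left(\sqrt{1-\tfrac{\alpha}{2}}\right)^{k}\sigma$ (your observation that the shift from exponent $k-1$ to $k$ costs $q^{-1/2}\leq\sqrt{2}$ is correct, but that $\sqrt{2}$ cannot then be discarded). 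Strictly speaking, your argument therefore proves the lemma only up to a universal factor of $\sqrt{2}$ per term. However, this is not a gap relative to the paper: the paper's own proof applies \Cref{lem:recursive_geometric} with $\beta=\tfrac{\alpha}{2}$ and $\eta=\tfrac{2L_2^2\rho_2^2\gamma^2}{\alpha}+\alpha^2\sigma^2$, for which $\tfrac{\eta}{\beta}=\tfrac{4L_2^2\rho_2^2\gamma^2}{\alpha^2}+2\alpha\sigma^2$, yet records half of that quantity; moreover \Cref{lem:recursive_geometric} asserts the exponent $k$ while its induction base only supports $k-1$. In short, your proof is the paper's proof with the constants tracked honestly; both arguments establish the displayed inequality up to these immaterial factors, which do not affect any of the downstream $O(\cdot)$ rates.
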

\begin{proof}
    Let $n\in\mathbb{N}^*$, $k\in\{1,\ldots,n\}$, and invoke \Cref{lem:uSCGerror} to get
    \begin{equation*}
        \mathbb{E}[\norm{\lambda^k}_2^2] \leq \left(1-\frac{\alpha}{2}\right)\mathbb{E}[\norm{\lambda^{k-1}}_2^2] + \frac{2L_2^2\rho_2^2\gamma^2}{\alpha} + \alpha^2\sigma^2.
    \end{equation*}
    Applying \Cref{lem:recursive_geometric} with $\beta = \frac{\alpha}{2}$ and $\eta = \frac{2L_2^2\rho_2^2\gamma^2}{\alpha}+\alpha^2\sigma^2$ gives directly
    \begin{equation*}
        \begin{aligned}
            \mathbb{E}[\norm{\lambda^k}_2^2]
                &\leq \frac{2L_2^2\rho_2^2\gamma^2}{\alpha^2} + \alpha\sigma^2 + \left(1-\frac{\alpha}{2}\right)^k\mathbb{E}[\norm{\lambda^1}_2^2]\\
                &\leq \frac{2L_2^2\rho_2^2\gamma^2}{\alpha^2} + \left(\alpha + \left(1-\frac{\alpha}{2}\right)^k\right)\sigma^2
        \end{aligned}
    \end{equation*}
    after using \Cref{asm:stoch} in the final inequality.
    Taking square roots and upper bounding then yields
    \begin{equation*}
        \begin{aligned}
            \sqrt{\mathbb{E}[\norm{\lambda^k}_2^2]}
                &\leq \frac{\sqrt{2}L_2\rho_2\gamma}{\alpha} + \left(\sqrt{\alpha} + \left(\sqrt{1-\frac{\alpha}{2}}\right)^k\right)\sigma.
        \end{aligned}
    \end{equation*}
\end{proof}

\end{toappendix}

\begin{thmrep}[{Convergence rate for \ref{eq:uSCG} with constant $\alpha$}]\label{lem:uSCGrate1}
    Suppose \Cref{asm:Lip,asm:stoch} hold. Let $n\in\mathbb{N}^*$ and consider the iterates $\{x^k\}_{k=1}^n$ generated by \Cref{alg:uSCG} with constant stepsize $\gamma = \frac{1}{\sqrt{n}}$ and constant momentum $\alpha\in(0,1)$.
    Then, it holds that
    \begin{equation*}
        \mathbb{E}[\norm{\nabla f(\bar{x}^n)}_{\ast}] \leq O\left(\tfrac{L\rho}{\sqrt{n}}+\sigma\right).
    \end{equation*}
\end{thmrep}
\begin{appendixproof}
    Let $n\in\mathbb{N}^*$; we will first invoke \Cref{lem:uSCGtemplate1} and then we will estimate the error terms inside using \Cref{lem:uSCGerror} under \Cref{asm:Lip,asm:stoch}.
    As shown in \Cref{lem:uSCGtemplate1},
    \begin{equation}\label{eq:uSCGrate1}
        \begin{aligned}
            \mathbb{E}[\norm{\nabla f(\bar{x}^n)}_{\ast}]
                &\leq \frac{\mathbb{E}[f(x^{1})-\fmin]}{\rho\gamma n} +\frac{L\rho\gamma}{2n} + \frac{1}{n}\left(\frac{\rho_2}{\rho}+\zeta\right)\sum\limits_{k=1}^n\sqrt{\mathbb{E}[\norm{\lambda^{k}}_2^2]}.
            \end{aligned}
    \end{equation}
    By \Cref{lem:uSCGerror} with \Cref{lem:recursive_geometric}, we get
    \begin{equation*}
        \sqrt{\mathbb{E}[\norm{\lambda^k}_2^2]}
            \leq \frac{\sqrt{2}L_2\rho_2\gamma}{\alpha} + \left(\sqrt{\alpha} + \left(\sqrt{1-\frac{\alpha}{2}}\right)^k\right)\sigma
    \end{equation*}
    which, if we sum from $k=1$ to $n$, gives us
    \begin{equation*}
        \sum\limits_{k=1}^n\sqrt{\mathbb{E}[\norm{\lambda^k}_2^2]}
            \leq n\frac{\sqrt{2}L_2\rho_2\gamma}{\alpha} + \left(n\sqrt{\alpha} + \frac{\sqrt{1-\frac{\alpha}{2}}}{1-\sqrt{1-\frac{\alpha}{2}}}\right)\sigma.
    \end{equation*}
    Plugging this estimate into \Cref{eq:uSCGrate1} gives
    \begin{equation}\label{eq:uSCGfinalineq}
        \begin{aligned}
            \mathbb{E}[\norm{\nabla f(\bar{x}^n)}_{\ast}]
                &\leq \frac{\mathbb{E}[f(x^{1})-\fmin]}{\rho\gamma n} +\frac{L\rho\gamma}{2} + \frac{1}{n}\left(\frac{\rho_2}{\rho}+\zeta\right)\sum\limits_{k=1}^n\mathbb{E}[\norm{\lambda^{k}}_2]\\
                &\leq \frac{\mathbb{E}[f(x^{1})-\fmin]}{\rho\gamma n} +\frac{L\rho\gamma}{2} + \frac{1}{n}\left(\frac{\rho_2}{\rho}+\zeta\right)\left(n\frac{\sqrt{2}L_2\rho_2\gamma}{\alpha} + \left(n\sqrt{\alpha} + \frac{\sqrt{1-\frac{\alpha}{2}}}{1-\sqrt{1-\frac{\alpha}{2}}}\right)\sigma\right)\\
                &= \frac{\mathbb{E}[f(x^{1})-\fmin]}{\rho\gamma n} +\frac{L\rho\gamma}{2} + \left(\frac{\rho_2}{\rho}+\zeta\right)\left(\frac{\sqrt{2}L_2\rho_2\gamma}{\alpha} + \left(\sqrt{\alpha} + \frac{\sqrt{1-\frac{\alpha}{2}}}{n(1-\sqrt{1-\frac{\alpha}{2}})}\right)\sigma\right).
        \end{aligned}
    \end{equation}
    Finally, by substituting $\gamma = \frac{1}{\sqrt{n}}$ and noting $f(x^{n+1}) \geq \fmin$ we arrive at
    \begin{equation*}
        \begin{aligned}
            \mathbb{E}[\norm{\nabla f(\bar{x}^n)}_{\ast}]
                &\leq \frac{\mathbb{E}[f(x^{1})-\fmin]}{\sqrt{n}\rho} +\frac{L\rho}{2\sqrt{n}} + \left(\frac{\rho_2}{\rho}+\zeta\right)\left(\frac{\sqrt{2}L_2\rho_2}{\alpha\sqrt{n}} + \left(\sqrt{\alpha} + \frac{\sqrt{1-\frac{\alpha}{2}}}{n(1-\sqrt{1-\frac{\alpha}{2}})}\right)\sigma\right)\\
                &= O\left(\frac{1}{\sqrt{n}} + \sigma\right).
        \end{aligned}
    \end{equation*}
\end{appendixproof}

\begin{toappendix}

\subsubsection{Vanishing $\alpha_k$}\label{subsec:uSCGvanishing}

\begin{lemma}[Bound on the gradient error with vanishing $\alpha$]
\label{lem:uSCGerrorbound}
    Suppose \Cref{asm:Lip,asm:stoch} hold. Let $n\in\mathbb{N}^*$ and consider the iterates $\{x_{k}\}_{k=1}^n$ generated by \Cref{alg:uSCG}
    with a constant stepsize $\gamma$ satisfying
    \begin{equation}
        \frac{1}{2 n^{3/4}}<\gamma <\frac{1}{n^{3/4}}.
    \end{equation}
    Moreover, consider momentum which vanishes $\alpha_{k}= \frac{1}{\sqrt{k}}$. Then, for all $k\in\{1,\ldots,n\}$ the following holds
     \begin{equation}
            \mathbb{E}[\norm{\lambda^{k}}_{2}^{2}]\leq \frac{4\sigma^2+8L_2^2\rho_2^2}{\sqrt{k}}.
    \end{equation}
\end{lemma}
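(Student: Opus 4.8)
The plan is to prove the bound by induction on $k$, using the linear recursion of \Cref{lem:uSCGerror} as the engine. Writing $a_k := \mathbb{E}[\norm{\lambda^k}_2^2]$ and substituting the vanishing momentum $\alpha_k = 1/\sqrt k$, that lemma gives, for $k\ge 2$,
\[
a_k \le \Big(1 - \tfrac{1}{2\sqrt k}\Big) a_{k-1} + 2L_2^2\rho_2^2\gamma^2\sqrt k + \tfrac{\sigma^2}{k}.
\]
The first simplification I would make is to absorb the stepsize-dependent drift term: since $\gamma < n^{-3/4}$ together with $k\le n$ gives $\gamma < k^{-3/4}$, hence $\gamma^2\sqrt k < 1/k$, the two additive terms collapse into $\frac{2L_2^2\rho_2^2 + \sigma^2}{k} =: \frac{M}{k}$. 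The decisive observation is that the target constant factorizes as $4\sigma^2 + 8L_2^2\rho_2^2 = 4M$, so the statement to prove is exactly $a_k \le 4M/\sqrt k$; the fourfold gap between the additive numerator $M$ and the target constant $4M$ is precisely the budget the induction must spend.

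For the inductive step I would insert the hypothesis $a_{k-1}\le 4M/\sqrt{k-1}$ and reduce the goal $a_k \le 4M/\sqrt k$ (after dividing by $4M$) to the purely numerical inequality $\big(1 - \tfrac{1}{2\sqrt k}\big)\tfrac{1}{\sqrt{k-1}} + \tfrac{1}{4k} \le \tfrac{1}{\sqrt k}$. The cleanest way to verify this is to compute the ``geometric deficit''
\[
\tfrac{1}{\sqrt k} - \Big(1 - \tfrac{1}{2\sqrt k}\Big)\tfrac{1}{\sqrt{k-1}} = \tfrac{1}{\sqrt{k(k-1)}}\Big(\tfrac12 - \tfrac{1}{\sqrt k + \sqrt{k-1}}\Big),
\]
and to check that it dominates $\frac{1}{4k}$, i.e. that $4\sqrt{\tfrac{k}{k-1}}\big(\tfrac12 - \tfrac{1}{\sqrt k + \sqrt{k-1}}\big)\ge 1$. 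The base case is $k=1$: with the convention $\alpha_1 = 1$ the buffer $d^1$ equals a single stochastic gradient, so $a_1 \le \sigma^2 \le 4M$ directly from \Cref{asm:stoch:var}.

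The main obstacle—and the only genuinely delicate point—is that the deficit inequality above is tight and in fact \emph{fails} for the smallest indices $k\in\{2,3\}$, where the factor $\tfrac12 - \tfrac{1}{\sqrt k + \sqrt{k-1}}$ is too small, whereas it holds with room to spare once $k\ge 4$. I would therefore not run the induction from the loose hypothesis at those indices; instead I would dispatch $k=2$ and $k=3$ by hand, propagating the \emph{sharp} estimate $a_1 \le \sigma^2$ (rather than the lossy $a_1 \le 4M$) through one or two applications of the recursion, and only invoke the clean inductive argument for $k\ge 4$. The reason this works is that $a_1,a_2$ are genuinely of order $\sigma^2$ and $L_2^2\rho_2^2$, far below the envelope $4M/\sqrt k$, so the slack accumulated at the start covers the shortfall at small $k$.

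An alternative that avoids the small-index casework is to unroll the recursion into $a_k \le \big(\prod_{j=2}^k(1-\tfrac{1}{2\sqrt j})\big)a_1 + \sum_{i=2}^{k}\big(\prod_{j=i+1}^k(1-\tfrac{1}{2\sqrt j})\big)\tfrac{M}{i}$, bound each product by $\exp\!\big(-(\sqrt k - \sqrt i)\big)$ using $\sum_{j} j^{-1/2}\ge 2(\sqrt k-\sqrt i)$, and estimate the resulting weighted sum by an integral. This route yields the same $O(1/\sqrt k)$ decay but makes the explicit constants $4$ and $8$ harder to read off, so I would favour the induction with the small indices handled directly.
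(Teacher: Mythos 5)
Your proof is correct, and its skeleton is the same as the paper's: invoke the recursion of \Cref{lem:uSCGerror}, absorb the stepsize term via $\gamma^2\sqrt{k}\le 1/k$ (from $\gamma<n^{-3/4}$ and $k\le n$) to get $a_k \le \bigl(1-\tfrac{1}{2\sqrt k}\bigr)a_{k-1} + \tfrac{c}{k}$ with $c=\sigma^2+2L_2^2\rho_2^2$, then prove $a_k\le 4c/\sqrt k$ by induction. The difference is how the induction is closed: the paper outsources it to an auxiliary recursion lemma (\Cref{lem:recursivevanishing}), whose inductive step relies on the claim that $\bigl(1-\tfrac{1}{2\sqrt k}\bigr)\bigl(1+\tfrac{1}{2(k-1)}\bigr)\le 1-\tfrac{1}{4\sqrt k}$ for all $k>1$. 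That claim reduces to $2\sqrt k\le k$, so it holds only for $k\ge 4$ (with equality at $k=4$) and fails at $k=2,3$ --- exactly the indices you flagged. In other words, the delicate point you identified is not an artifact of your route; it is a genuine gap in the paper's own auxiliary lemma, whose statement is in fact false in general: taking $u^1=4c$ and the recursion tight gives $u^2=\bigl(1-\tfrac{1}{2\sqrt 2}\bigr)4c+\tfrac{c}{2}\approx 3.09c > 4c/\sqrt 2\approx 2.83c$. Your repair --- propagating the sharp base bound $a_1\le\sigma^2$ (valid since $\alpha_1=1$) through $k=2,3$ by hand, which keeps $a_2,a_3$ far below the envelope, and starting the clean induction at $k=4$ --- is precisely what is needed, and it also explains why the paper's final statement remains true despite the flaw: in this application one always has $u^1\le\sigma^2\le c$, never $u^1$ of order $4c$. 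What the paper's route buys is modularity (a reusable recursion lemma), but that lemma itself needs your small-index fix or the added hypothesis $u^1\le c$ to be correct. One minor caveat: in your sketched alternative via unrolling, the integral comparison points the wrong way ($\sum_{j=i+1}^k j^{-1/2}\le 2(\sqrt k-\sqrt i)$, while the valid lower bound is $2(\sqrt{k+1}-\sqrt{i+1})$); since you discard that route in favor of the induction, this does not affect your proof.
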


\begin{proof}
    Let $k\in\{1,\ldots,n\}$, then by invoking the recursive inequality obtained in \Cref{lem:uSCGerror} for $\mathbb{E}[\norm{\lambda^k}_2^2]$ we have,
    \begin{equation}
        \mathbb{E}[\norm{\lambda^k}^{2}_{2}]\leq \left(1-\frac{\alpha_{k}}{2}\right)\mathbb{E}[\norm{\lambda^{k-1}}^{2}_{2}]+\alpha_{k}^{2}\sigma^{2}+\frac{2L_2^2\rho_2^2\gamma^2}{\alpha_{k}}.
        \end{equation}
        Using the particular choice of $\gamma$ given in the statement of the lemma,
        \begin{equation}
            \frac{1}{2 n^{3/4}}<\gamma <\frac{1}{n^{3/4}},
        \end{equation}
        as well as the choice of $\alpha_k$ and the fact that $n\geq k$, we get
    \begin{align*}
        \mathbb{E}[\norm{\lambda^k}_2^{2}]
            &\leq \bigg(1-\frac{\alpha_{k}}{2} \bigg)\mathbb{E}[\norm{\lambda^{k-1}}_2^{2}]+\alpha_{k}^{2}\sigma^{2}+\frac{2L_2^2\rho_2^2}{\alpha_{k}n^{3/2}}\\
            &\leq \bigg(1-\frac{\alpha_{k}}{2} \bigg)\mathbb{E}[\norm{\lambda^{k-1}}_2^{2}]+\alpha_{k}^{2}\sigma^{2}+\frac{2L_2^2\rho_2^2}{\alpha_{k}k^{3/2}}\\
            &=\bigg(1-\frac{1}{2\sqrt{k}}\bigg)\mathbb{E}[\norm{\lambda^{k-1}}_2^{2}]+\frac{\sigma^{2}}{k}+\frac{2L_2^2\rho_2^2}{k}\\
            &= \bigg(1-\frac{1}{2\sqrt{k}}\bigg)\mathbb{E}[\norm{\lambda^{k-1}}_2^{2}]+\frac{\sigma^{2}+2L_2^2\rho_2^2}{k}.
        \end{align*}
    Then, by applying \Cref{lem:recursivevanishing} with $u^k = \mathbb{E}[\norm{\lambda^k}_2^2]$ and $c=\sigma^2+2L_2^2\rho_2^2$ we readily obtain
    \begin{equation}
        \mathbb{E}[\norm{\lambda^{k}}_{2}^{2}]\leq \frac{4\sigma^2+8L_2^2\rho_2^2}{\sqrt{k}}
    \end{equation}
    since $Q$ as defined in \Cref{lem:recursivevanishing} is given by $Q = \max\{\mathbb{E}[\norm{\lambda^1}_2^2], 4\sigma^2+8L_2^2\rho_2^2\} \leq 4\sigma^2+8L_2^2\rho_2^2$, which concludes our result.
\end{proof}

Combining these results yields our accuracy guarantees for \Cref{alg:uSCG} with vanishing $\alpha_k$, presented in the next lemma.
\end{toappendix}

\begin{thmrep}[{Convergence rate for \ref{eq:uSCG} with vanishing $\alpha_k$}]\label{thm:uSCG:vanishing}
    Suppose that \Cref{asm:Lip,asm:stoch} hold. Let $n\in\mathbb{N}^*$ and consider the iterates $\{x^{k}\}_{k=1}^n$ generated by \Cref{alg:uSCG} with a constant stepsize $\gamma$ satisfying $\frac{1}{2n^{3/4}}<\gamma <\frac{1}{n^{3/4}}$ and vanishing momentum $\alpha_{k}=\tfrac{1}{\sqrt{k}}$. Then, it holds that
    \begin{equation*}
        \mathbb{E}[\|\nabla f(\bar{x}^n)\|_{\ast}] = O\left(\tfrac{1}{n^{1/4}} + \tfrac{L\rho}{n^{3/4}}\right).
    \end{equation*}
\end{thmrep}
\begin{appendixproof}
    Let $n\in\mathbb{N}^*$, $k\in\{1,\ldots,n\}$; by combining \Cref{lem:uSCGtemplate1} and \Cref{lem:uSCGerrorbound} we have
    \begin{equation}\label{eq:pre_rate}
        \begin{aligned}
            \mathbb{E}[\|\nabla f(\bar{x}^n)\|_{\ast}]
                &\stackrel{\text{\eqref{lem:uSCGtemplate1}}}{\leq} \frac{2\mathbb{E}[f(x^1)-\fmin]}{\rho n^{1/4}} + \frac{2(\rho_2 + \zeta\rho)\sum_{k=1}^n\sqrt{\mathbb{E}[\norm{\lambda^k}_2^2]}}{\rho n} + \frac{L\rho}{n^{3/4}}\\
                &\stackrel{\text{\eqref{lem:uSCGerrorbound}}}{\leq} \frac{2\mathbb{E}[f(x^1)-\fmin]}{\rho n^{1/4}} + \frac{2(\rho_2 + \zeta\rho)\sqrt{4\sigma^2+8L_2^2\rho_2^2}\sum_{k=1}^{n}\frac{1}{k^{1/4}}}{\rho n}  + \frac{L\rho}{n^{3/4}}\\
                &\leq \frac{2\mathbb{E}[f(x^1)-\fmin]}{\rho n^{1/4}} + \frac{2(\rho_2 + \zeta\rho)\sqrt{4\sigma^2+8L_2^2\rho_2^2}\sum_{k=1}^{n}\frac{1}{k^{1/4}}}{\rho n}  + \frac{L\rho}{n^{3/4}}.
        \end{aligned}
    \end{equation}
    Using the integral test and noting that $x\mapsto \tfrac{1}{x^{1/4}}$ is decreasing on $\mathbb{R}_+$, we can upper bound the sum in the right hand side as
    \begin{equation*}
        \sum_{k=1}^{n}\frac{1}{k^{1/4}}\leq 1 + \int_{1}^{n}\frac{1}{x^{3/4}}dx=1+\frac{4}{3}[x^{3/4}]^{n}_1=1+\frac{4}{3}(n^{3/4}-1) = \frac{4}{3}n^{3/4}-\frac{1}{3}\leq \frac{4}{3}n^{3/4}.
    \end{equation*}
    Inserting the above estimation into \eqref{eq:pre_rate} we arrive at
    \begin{align*}
        \mathbb{E}[\|\nabla f(\bar{x}^n)\|_{\ast}] &\leq \frac{2\mathbb{E}[f(x^1)-\fmin]}{\rho n^{1/4}}+ \frac{8 n^{3/4}(\rho_2 + \zeta\rho)\sqrt{4\sigma^2+8L_2^2\rho_2^2}}{3\rho n}  + \frac{L\rho}{n^{3/4}}\\
        &= \frac{2\mathbb{E}[f(x^1)-\fmin]+ \tfrac{8}{3}(\rho_2 + \zeta\rho)\sqrt{4\sigma^2+8L_2^2\rho_2^2}}{\rho n^{1/4}} + \frac{L\rho}{n^{3/4}}\\
        &= O\left(\frac{1}{n^{1/4}}+\frac{L\rho}{n^{3/4}}\right)
    \end{align*}
    which is the claimed result.
\end{appendixproof}

\begin{toappendix}

\subsection{Convergence analysis of \ref{eq:SCG}}\label{subsec:SCG}

In this section we will analyze the worst-case convergence rate of \Cref{alg:SCG}. To do this, we will prove bounds on the expectation of the so-called Frank-Wolfe gap, $\max\limits_{u\in\mathcal{D}} \langle \nabla f(x), x-u\rangle$, which ensures criticality for the constrained optimization problem over $\mathcal{D}$, i.e., for $x^\star\in\mathcal{D}$
\begin{equation*}
    0 = \nabla f(x^\star) + \mathrm{N}_{\mathcal{D}}(x^\star) \iff \max\limits_{u\in\mathcal{D}} \langle \nabla f(x^\star), x^\star-u\rangle \leq 0
\end{equation*}
where $\mathrm{N}_{\mathcal{D}}$ is the normal cone to the set convex $\mathcal{D}$.

This next lemma characterizes the descent of \Cref{alg:SCG} for any stepsize $\gamma$ and momentum $\alpha_k$ in $(0,1]$.
\begin{lemma}[{Nonconvex analog \citet[Lem. 2]{mokhtari2020stochastic}}]
    \label{lem:commondescent}
    Suppose \Cref{asm:Lip} holds.
    Let $n\in\mathbb{N}^*$ and consider the iterates $\{x_k\}_{k=1^n}$ generated by \Cref{alg:SCG} with constant stepsize $\gamma\in(0,1]$.
    Then, for all $k\in\{1,\ldots,n\}$, for all $u\in \mathcal{D}$, it holds
    \begin{equation}
        \gamma \mathbb{E}[\langle \nabla f(x^k), x^k-u\rangle] \leq \mathbb{E}[f(x^k) - f(x^{k+1})] + D_2\gamma \sqrt{\mathbb{E}[\| \lambda^k\|_2^2]} + 2L\rho^2\gamma^2.
    \end{equation}
\end{lemma}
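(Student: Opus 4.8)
The plan is to mimic the structure of the \ref{eq:uSCG} template inequality (\Cref{lem:uSCGtemplate1}), but now exploiting the convex-combination update of \ref{eq:SCG} together with the optimality property of the $\lmo$ to introduce an arbitrary comparator $u\in\mathcal{D}$. First I would invoke the descent lemma under \Cref{asm:Lip}, which gives
\begin{equation*}
f(x^{k+1})\leq f(x^{k})+\braket{\nabla f(x^{k}),x^{k+1}-x^{k}}+\tfrac{L}{2}\norm{x^{k+1}-x^{k}}^{2}.
\end{equation*}
Since the \ref{eq:SCG} update is $x^{k+1}=(1-\gamma)x^{k}+\gamma\lmo(d^{k})$, we have $x^{k+1}-x^{k}=\gamma(\lmo(d^{k})-x^{k})$. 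Both $x^{k}$ and $\lmo(d^{k})$ lie in $\mathcal{D}$ (feasibility is preserved by the simplicial combination, assuming $x^{1}\in\mathcal{D}$), so each has norm at most $\rho$ and thus $\norm{\lmo(d^{k})-x^{k}}\leq 2\rho$. This immediately controls the quadratic term by $\tfrac{L}{2}\gamma^{2}(2\rho)^{2}=2L\rho^{2}\gamma^{2}$, matching the last term of the claim.

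The crux is the linear term $\gamma\braket{\nabla f(x^{k}),\lmo(d^{k})-x^{k}}$. Here I would substitute $\nabla f(x^{k})=d^{k}-\lambda^{k}$ and split into $\gamma\braket{d^{k},\lmo(d^{k})-x^{k}}-\gamma\braket{\lambda^{k},\lmo(d^{k})-x^{k}}$. For the first piece, the defining optimality $\braket{d^{k},\lmo(d^{k})}\leq\braket{d^{k},u}$ for every $u\in\mathcal{D}$ yields $\braket{d^{k},\lmo(d^{k})-x^{k}}\leq\braket{d^{k},u-x^{k}}=-\braket{d^{k},x^{k}-u}$, and re-expanding $d^{k}=\nabla f(x^{k})+\lambda^{k}$ turns this into $-\braket{\nabla f(x^{k}),x^{k}-u}-\braket{\lambda^{k},x^{k}-u}$. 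The key bookkeeping step is then to collect the two $\lambda^{k}$ terms: combining $-\braket{\lambda^{k},x^{k}-u}$ with $-\braket{\lambda^{k},\lmo(d^{k})-x^{k}}$ gives exactly $-\braket{\lambda^{k},\lmo(d^{k})-u}$, so that both endpoints of the inner argument lie in $\mathcal{D}$.

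With that simplification, Cauchy--Schwarz and the definition of the $\ell_{2}$-diameter give $-\braket{\lambda^{k},\lmo(d^{k})-u}\leq\norm{\lambda^{k}}_{2}\norm{\lmo(d^{k})-u}_{2}\leq D_{2}\norm{\lambda^{k}}_{2}$. Assembling everything into the descent lemma and rearranging produces, for each fixed $u\in\mathcal{D}$,
\begin{equation*}
\gamma\braket{\nabla f(x^{k}),x^{k}-u}\leq f(x^{k})-f(x^{k+1})+D_{2}\gamma\norm{\lambda^{k}}_{2}+2L\rho^{2}\gamma^{2}.
\end{equation*}
Finally I would take total expectations and apply Jensen's inequality, $\mathbb{E}[\norm{\lambda^{k}}_{2}]\leq\sqrt{\mathbb{E}[\norm{\lambda^{k}}_{2}^{2}]}$, to recover the stated bound. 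I expect the main obstacle to be purely the algebraic recombination of the noise terms into $-\braket{\lambda^{k},\lmo(d^{k})-u}$; once both arguments are recognized as points of $\mathcal{D}$, the diameter bound $D_{2}$ (rather than a norm of the gradient error alone) falls out cleanly, which is precisely what the constrained setting buys us over the unconstrained template.
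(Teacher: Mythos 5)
Your proposal is correct and follows essentially the same route as the paper's proof: descent lemma, the $\lmo$ optimality $\braket{d^k,\lmo(d^k)}\leq\braket{d^k,u}$ to introduce the comparator, recombination of the noise terms into $\braket{\lambda^k,\lmo(d^k)-u}$, the diameter bound $D_2$ via Cauchy--Schwarz, the $2L\rho^2\gamma^2$ bound on the quadratic term, and finally expectations with Jensen's inequality. The only (cosmetic) difference is that you apply the definition $\lambda^k=d^k-\nabla f(x^k)$ with consistent signs throughout, whereas the paper's write-up silently flips the sign of $\lambda^k$; this has no effect on the final bound since the noise term is handled by Cauchy--Schwarz.
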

\begin{proof}
    Let $n\in\mathbb{N}^*$, then by \Cref{asm:Lip} we can apply the descent lemma for the function $f$ at the points $x^k$ and $x^{k+1}$ to get, for all $k\in\{1,\ldots,n\}$,
    \begin{equation*}
        \begin{aligned}
            f(x^{k+1})
                &\leq f(x^k) + \langle \nabla f(x^k), x^{k+1}-x^k\rangle + \tfrac{L}{2}\|x^{k+1}-x^k\|^2\\
                &= f(x^k) + \langle d^k, x^{k+1}-x^k\rangle + \langle \lambda^k, x^{k+1}-x^k\rangle + \tfrac{L}{2}\|x^{k+1}-x^k\|^2\\
                &= f(x^k) + \gamma\langle d^k, \lmo(d^k)-x^k\rangle + \gamma \langle \lambda^k, \lmo(d^k)-x^k\rangle + \tfrac{L}{2}\gamma^2\|\lmo(d^k)-x^k\|^2\\
                &\stackrel{\text{(a)}}{\leq} f(x^k) + \gamma\langle d^k, u-x^k\rangle + \gamma \langle \lambda^k, \lmo(d^k)-x^k\rangle + \tfrac{L}{2}\gamma^2\|\lmo(d^k)-x^k\|^2\\
                &= f(x^k) + \gamma\langle -\lambda^k, u-x^k\rangle + \gamma \langle \nabla f(x^k), u-x^k\rangle + \gamma \langle \lambda^k, \lmo(d^k)-x^k\rangle + \tfrac{L}{2}\gamma^2\|\lmo(d^k)-x^k\|^2\\
                &= f(x^k) + \gamma \langle \nabla f(x^k), u-x^k\rangle + \gamma \langle \lambda^k, \lmo(d^k)-u\rangle + \tfrac{L}{2}\gamma^2\|\lmo(d^k)-x^k\|^2\\
                &\stackrel{\text{(b)}}{\leq} f(x^k) + \gamma \langle \nabla f(x^k), u-x^k\rangle + \gamma \langle \lambda^k, \lmo(d^k)-u\rangle + 2L\rho^2\gamma^2,
        \end{aligned}
    \end{equation*}
    using the optimality of $\lmo(d^k)$ for the linear minimization subproblem for (a) and the $2\rho$ upper bound on $\|\lmo(d^k)-x^k\|$ for (b).
    Rearranging and estimating we find, for all $k\in\{1,\ldots,n\}$, for all $u\in\mathcal{D}$,
    \begin{equation*}
        \begin{aligned}
            \gamma\langle \nabla f(x^k),x^k-u\rangle
                &\stackrel{\text{(a)}}{\leq} f(x^k) - f(x^{k+1}) + \gamma \| \lambda^k\|_2 \|\lmo(d^k)-u\|_2 + \tfrac{L}{2}\gamma^2\|\lmo(d^k)-x^k\|^2\\
                &\stackrel{\text{(b)}}{\leq} f(x^k) - f(x^{k+1}) + D_2 \gamma \| \lambda^k\|_2  + 2L\rho^2\gamma^2
        \end{aligned}
    \end{equation*}
    where we have used the Cauchy-Schwarz inequality in (a) and and bounded $\|\lmo(d^k)-x^k\|_2$ using the diameter of the set $\mathcal{D}$ with respect to the Euclidean norm, denoted $D_2$, in (b).
    Taking the expectation of both sides and applying Jensen's inequality we finally arrive, for all $k\in\{1,\ldots,n\}$, for all $u\in\mathcal{D}$,
    \begin{equation*}
        \begin{aligned}
            \gamma\mathbb{E}[\langle \nabla f(x^k),x^k-u\rangle]
                &\leq \mathbb{E}[f(x^k) - f(x^{k+1})] + D_2 \gamma \mathbb{E}[\| \lambda^k\|_2] + 2L\rho^2\gamma^2\\
                &\leq \mathbb{E}[f(x^k) - f(x^{k+1})] + D_2 \gamma \sqrt{\mathbb{E}[\| \lambda^k\|_2^2]} + 2L\rho^2\gamma^2.
        \end{aligned}
    \end{equation*}
\end{proof}

\subsubsection{\ref{eq:SCG} with constant $\alpha$}\label{subsec:SCGconstant}
\begin{lemma}\label{lem:SCGconstanterror}
    Suppose \Cref{asm:Lip,asm:stoch} hold. Let $n\in\mathbb{N}^*$ and consider the iterates $\{x^k\}_{k=1}^n$ generated by \Cref{alg:SCG} with constant stepsize $\gamma=\tfrac{1}{\sqrt{n}}$ and constant momentum $\alpha \in(0,1)$ with the exception of the first iteration, where we take $\alpha=1$. Then we have
    \begin{equation*}
        \mathbb{E}[\norm{\lambda^k}_2^2] \leq 4L_2^2D_2^2\frac{\gamma^2}{\alpha^2} + \left(2\alpha + \left(1-\frac{\alpha}{2}\right)^k\right)\sigma^2.
    \end{equation*}
\end{lemma}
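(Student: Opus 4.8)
The plan is to mirror the constant-$\alpha$ analysis already carried out for \ref{eq:uSCG}: first establish an \ref{eq:SCG} analog of the one-step recursion in \Cref{lem:uSCGerror} for the estimator error $\mathbb{E}[\norm{\lambda^k}_2^2]$, and then unroll that recursion into a closed form using the geometric-recursion lemma \Cref{lem:recursive_geometric}.

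First I would revisit the proof of \Cref{lem:uSCGerror} and observe that it invokes the specific algorithm only through the consecutive-iterate displacement $\norm{x^k-x^{k-1}}_2$ (at step (d) there); every other estimate is a generic Young/Cauchy--Schwarz/Lipschitz/unbiasedness manipulation that is insensitive to the form of the update. For \ref{eq:SCG} the update gives $x^k-x^{k-1}=\gamma\bigl(\lmo(d^{k-1})-x^{k-1}\bigr)$, and since both $\lmo(d^{k-1})$ and $x^{k-1}$ lie in $\mathcal{D}$ we have $\norm{x^k-x^{k-1}}_2\leq\gamma D_2$, where $D_2$ is the $\ell_2$-diameter of $\mathcal{D}$. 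Substituting this displacement bound in place of the radius bound $\gamma\rho_2$ used for \ref{eq:uSCG}, the identical chain of estimates yields, for all $k\in\{1,\ldots,n\}$,
\[
\mathbb{E}[\norm{\lambda^k}_2^2]\leq\left(1-\frac{\alpha_k}{2}\right)\mathbb{E}[\norm{\lambda^{k-1}}_2^2]+\frac{2L_2^2D_2^2\gamma^2}{\alpha_k}+\alpha_k^2\sigma^2.
\]

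Next, specializing to constant $\alpha\in(0,1)$ and applying \Cref{lem:recursive_geometric} with $\beta=\tfrac{\alpha}{2}$ and $\eta=\tfrac{2L_2^2D_2^2\gamma^2}{\alpha}+\alpha^2\sigma^2$ gives $\mathbb{E}[\norm{\lambda^k}_2^2]\leq\tfrac{\eta}{\beta}+\bigl(1-\tfrac{\alpha}{2}\bigr)^k\mathbb{E}[\norm{\lambda^1}_2^2]$, where a direct computation shows $\tfrac{\eta}{\beta}=\tfrac{4L_2^2D_2^2\gamma^2}{\alpha^2}+2\alpha\sigma^2$. To close the recursion I would use the first-iteration convention $\alpha_1=1$, for which $d^1=\nabla f(x^1,\xi_1)$ and hence $\lambda^1=\nabla f(x^1,\xi_1)-\nabla f(x^1)$, so that \Cref{asm:stoch} yields $\mathbb{E}[\norm{\lambda^1}_2^2]\leq\sigma^2$. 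Combining these gives exactly
\[
\mathbb{E}[\norm{\lambda^k}_2^2]\leq 4L_2^2D_2^2\frac{\gamma^2}{\alpha^2}+\left(2\alpha+\left(1-\frac{\alpha}{2}\right)^k\right)\sigma^2,
\]
the stated stepsize $\gamma=\tfrac{1}{\sqrt{n}}$ playing no role in this error bound (it is only needed for the subsequent rate).

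There is essentially no hard step here: the real work was already spent in deriving the one-step recursion for \ref{eq:uSCG}, and the \ref{eq:SCG} version differs only by replacing the radius $\rho_2$ with the diameter $D_2$, which is the correct bound on the \ref{eq:SCG} step. The only points needing care are (i) verifying that the conditional-expectation/filtration argument of \Cref{lem:uSCGerror} is genuinely insensitive to whether the update is the convex-combination form of \ref{eq:SCG} or the additive form of \ref{eq:uSCG}, and (ii) treating the $k=1$ base case correctly so that the geometric term multiplies $\mathbb{E}[\norm{\lambda^1}_2^2]\leq\sigma^2$ rather than a larger constant.
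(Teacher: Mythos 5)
Your proposal is correct and takes essentially the same route as the paper: you arrive at the identical one-step recursion $\mathbb{E}[\norm{\lambda^k}_2^2]\leq(1-\tfrac{\alpha}{2})\mathbb{E}[\norm{\lambda^{k-1}}_2^2]+2L_2^2D_2^2\tfrac{\gamma^2}{\alpha}+\alpha^2\sigma^2$ and unroll it with \Cref{lem:recursive_geometric} using the same $\beta=\tfrac{\alpha}{2}$, $\eta$, and the base-case bound $\mathbb{E}[\norm{\lambda^1}_2^2]\leq\sigma^2$ from \Cref{asm:stoch}. The only (cosmetic) difference is that the paper imports the recursion directly from Lemma~1 of \citet{mokhtari2020stochastic}, whereas you re-derive it by adapting the proof of \Cref{lem:uSCGerror} with the \ref{eq:SCG} displacement bound $\norm{x^k-x^{k-1}}_2\leq\gamma D_2$ replacing $\gamma\rho_2$, which is a valid and slightly more self-contained presentation of the same argument.
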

\begin{proof}
    Under \Cref{asm:Lip,asm:stoch}, Lemma 1 in \citet{mokhtari2020stochastic} yields, after taking expectations, for all $k\in\{1,\ldots,n\}$
    \begin{equation*}
        \mathbb{E}[\| \lambda^{k+1}\|_2^2] \leq (1-\frac{\alpha_{k+1}}{2})\mathbb{E}[\| \lambda^k\|_2^2] + \sigma^2\alpha_{k+1}^2 + 2L_2^2D_2^2\frac{\gamma^2}{\alpha_{k+1}}.
    \end{equation*}
    Taking $\gamma$ and $\alpha$ to be constant we get
    \begin{equation*}
        \mathbb{E}[\| \lambda^{k+1}\|_2^2] \leq (1-\frac{\alpha}{2})\mathbb{E}[\| \lambda^k\|_2^2] + \sigma^2\alpha^2 + 2L_2^2D_2^2\frac{\gamma^2}{\alpha}.
    \end{equation*}
    Applying \Cref{lem:recursive_geometric} to the above with $u^k =\mathbb{E}[\| \lambda^{k+1}\|_2^2]$, $\beta = \frac{\alpha}{2}$, and $\eta = \sigma^2\alpha^2 + 2L_2^2D_2^2\frac{\gamma^2}{\alpha}$ we obtain
    \begin{equation*}
        \begin{aligned}
            \mathbb{E}[\norm{\lambda^{k}}_2^2]
                &\leq 2\alpha\sigma^2 + 4L_2^2D_2^2\frac{\gamma^2}{\alpha^2} + \left(1-\frac{\alpha}{2}\right)^k\mathbb{E}[\norm{\lambda^{1}}_2^2]\\
                &\leq 4L_2^2D_2^2\frac{\gamma^2}{\alpha^2} + \left(2\alpha + \left(1-\frac{\alpha}{2}\right)^k\right)\sigma^2
        \end{aligned}
    \end{equation*}
    with the final inequality following by the variance bound in \Cref{asm:stoch}.
\end{proof}

\end{toappendix}

\vspace{-5pt}
\begin{insightbox}
The \ref{eq:uSCG} algorithm remarkably does not require knowledge of the Lipschitz constant $L$, which is intuitively explained by viewing the method as a normalized version of steepest descent through the relationship $\lmo(\cdot) = -\tfrac{[\cdot]^\sharp}{\|\cdot\|_*}$.
Normalizing gradients with the dual norm has also been used in the online learning community to adapt to (local) Hölder smoothness as a simple alternative to AdaGrad-Norm \citep{orabona2023normalized}.
\end{insightbox}

These results show that, in the worst-case, running \Cref{alg:uSCG} with constant momentum $\alpha$ guarantees faster convergence but to a noise-dominated region with radius proportional to $\sigma$. In contrast, running \Cref{alg:uSCG} with vanishing momentum $\alpha_k$ is guaranteed to make the expected dual norm of the gradient small but at a slower rate. \Cref{alg:SCG} exhibits the analogous behavior, as we show next.

Before stating the results for \Cref{alg:SCG}, we emphasize that they are with \emph{constant} stepsize $\gamma$, which is atypical for conditional gradient methods. However, like most conditional gradient methods, we provide a convergence rate on the so-called Frank-Wolfe gap which measures criticality for the constrained optimization problem over $\mathcal{D}$. 

Finally, we remind the reader that the iterates of \Cref{alg:SCG} are always feasible for the set $\mathcal{D}$ by the design of the update and convexity of the norm ball $\mathcal{D}$.
\begin{thmrep}[{Convergence rate for \ref{eq:SCG} with constant $\alpha$}]\label{thm:SCG:constant}
    Suppose \Cref{asm:Lip,asm:stoch} hold. Let $n\in\mathbb{N}^*$ and consider the iterates $\{x^k\}_{k=1}^n$ generated by \Cref{alg:SCG} with constant stepsize $\gamma=\tfrac{1}{\sqrt{n}}$ and constant momentum $\alpha \in(0,1)$. Then, for all $u\in\mathcal{D}$, it holds that
    \begin{equation*}
        \begin{aligned}
            \mathbb{E}[\langle \nabla f(\bar{x}^n), \bar{x}^n-u\rangle] = O\left(\tfrac{L\rho^2}{\sqrt{n}} + \sigma\right).
        \end{aligned}
    \end{equation*}
\end{thmrep}
\begin{appendixproof}
    Let $n\in\mathbb{N}^*$ and let $k\in\{1,\ldots,n\}$.
    By \Cref{asm:Lip}, we can invoke \Cref{lem:commondescent} to get, for all $k\in\{1,\ldots,n\}$, for all $u\in\mathcal{D}$,
    \begin{equation*}
        \gamma \mathbb{E}[\langle \nabla f(x^k), x^k-u\rangle]
            \leq \mathbb{E}[f(x^k) - f(x^{k+1})] + D_2\gamma \sqrt{\mathbb{E}[\| \lambda^k\|_2^2]} + 2L\rho^2\gamma^2.
    \end{equation*}
    Since \Cref{asm:stoch} holds, we can then invoke \Cref{lem:SCGconstanterror} and apply this to the above. This gives, for all $u\in\mathcal{D}$
    \begin{equation*}
        \begin{aligned}
            \gamma\mathbb{E}[\langle \nabla f(x^k),x^k-u\rangle]
                &\leq \mathbb{E}[f(x^k) - f(x^{k+1})] + 2L\rho^2\gamma^2 + D_2\gamma \sqrt{4L_2^2D_2^2\frac{\gamma^2}{\alpha^2} + \left(2\alpha + \left(1-\frac{\alpha}{2}\right)^k\right)\sigma^2}\\
                &\leq \mathbb{E}[f(x^k) - f(x^{k+1})] + 2L\rho^2\gamma^2 + 2L_2D_2^2\frac{\gamma^2}{\alpha} + D_2\gamma \left(\sqrt{2\alpha} + \left(\sqrt{1-\frac{\alpha}{2}}\right)^k\right)\sigma.
        \end{aligned}
    \end{equation*}
    Summing from $k=1$ to $n$ then dividing by $n\gamma$ we find, for all $u\in\mathcal{D}$,
    \begin{equation}\label{eq:SCGfinalineq}
        \begin{aligned}
            \mathbb{E}[\langle \nabla f(\bar{x}^n), \bar{x}^n-u\rangle]
                &=\frac{1}{n}\sum\limits_{k=1}^n\mathbb{E}[\langle \nabla f(x^k),x^k-u\rangle]\\
                &\stackrel{\text{(a)}}{\leq} \frac{\mathbb{E}[f(x^1) - f(x^{n+1})]}{\gamma n} + 2L\rho^2\gamma + 2L_2D_2^2\frac{\gamma}{\alpha} + D_2 \left(\sqrt{2\alpha} + \frac{1}{n}\sum\limits_{k=1}^n\left(\sqrt{1-\frac{\alpha}{2}}\right)^k\right)\sigma\\
                &\stackrel{\text{(b)}}{\leq} \frac{\mathbb{E}[f(x^1) - f(x^{n+1})]}{\gamma n} + 2L\rho^2\gamma + 2L_2D_2^2\frac{\gamma}{\alpha} + D_2 \left(\sqrt{2\alpha} + \frac{\sqrt{1-\frac{\alpha}{2}}}{n\left(1-\sqrt{1-\frac{\alpha}{2}}\right)}\right)\sigma\\
                &\stackrel{\text{(c)}}{\leq} \frac{\mathbb{E}[f(x^1) - \fmin]}{\gamma n} + 2L\rho^2\gamma + 2L_2D_2^2\frac{\gamma}{\alpha} + D_2 \left(\sqrt{2\alpha} + \frac{\sqrt{1-\frac{\alpha}{2}}}{n\left(1-\sqrt{1-\frac{\alpha}{2}}\right)}\right)\sigma,
        \end{aligned}
    \end{equation}
    applying the subadditivity of the square root for (a), geometric series due to $\sqrt{1-\frac{\alpha}{2}}\in (0,1)$ for (b), and the definition of $\fmin$ for (c).
    Taking $\gamma = \frac{1}{\sqrt{n}}$ then gives the final result, for all $u\in\mathcal{D}$,
    \begin{equation*}
        \begin{aligned}
            \mathbb{E}[\langle \nabla f(\bar{x}^n), \bar{x}^n-u\rangle]
                &\leq \frac{\mathbb{E}[f(x^1) - \fmin]}{\sqrt{n}} + \frac{2L\rho^2}{\sqrt{n}} + \frac{2L_2D_2^2}{\alpha\sqrt{n}} + D_2 \left(\sqrt{2\alpha} + \frac{\sqrt{1-\frac{\alpha}{2}}}{n\left(1-\sqrt{1-\frac{\alpha}{2}}\right)}\right)\sigma
                &= O\left(\frac{L\rho^2}{\sqrt{n}}+\sigma\right).
        \end{aligned}
    \end{equation*}
\end{appendixproof}

\begin{toappendix}
\subsubsection{\ref{eq:SCG} with vanishing $\alpha$}\label{subsec:SCGvanishing}
We now proceed to analyze the convergence of \Cref{alg:SCG} with vanishing $\alpha_k$.
The next lemma provides an estimation on the decay of the second moment of the noise $\lambda^k$.
\begin{lemma}[Bound on the gradient error with vanishing $\alpha$ \Cref{alg:SCG}]\label{lem:SCG_vanishing_error}
    Suppose \Cref{asm:Lip,asm:stoch} hold. Let $n\in\mathbb{N}^*$ and consider the iterates $\{x_{k}\}_{k=1}^n$ generated by \Cref{alg:SCG}
    with a constant stepsize $\gamma$ satisfying
    \begin{equation}
        \frac{1}{2 n^{3/4}}<\gamma <\frac{1}{n^{3/4}}.
    \end{equation}
    Moreover, consider vanishing momentum $\alpha_{k}= \frac{1}{\sqrt{k}}$. Then, for all $k\in\{1,\ldots,n\}$ the following holds
    \begin{equation}
            \mathbb{E}[\norm{\lambda^{k}}_{2}^{2}]\leq \frac{4\sigma^2+8L_2^2D_2^2}{\sqrt{k}}.
    \end{equation}
\end{lemma}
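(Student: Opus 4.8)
The plan is to follow the argument of \Cref{lem:uSCGerrorbound} essentially verbatim, the sole structural change being that the $\ell_2$-radius $\rho_2$ of $\mathcal{D}$ is replaced everywhere by its $\ell_2$-diameter $D_2$. The reason for this substitution is that the per-step displacement of \ref{eq:SCG} is $x^{k+1}-x^k=\gamma(\lmo(d^k)-x^k)$, whose Euclidean length is bounded by $\gamma D_2$ rather than by $\gamma\rho_2$ as in \ref{eq:uSCG}; this is exactly the bound that distinguishes the estimator-error recursion of \ref{eq:SCG} from \Cref{lem:uSCGerror}.

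Concretely, I would first invoke the linear recursion for the estimator error already recorded inside the proof of \Cref{lem:SCGconstanterror} (an adaptation of \citet[Lem. 1]{mokhtari2020stochastic}). Written for a generic vanishing momentum $\alpha_k$ it reads
\begin{equation*}
\mathbb{E}[\norm{\lambda^k}_2^2]\leq\Bigl(1-\tfrac{\alpha_k}{2}\Bigr)\mathbb{E}[\norm{\lambda^{k-1}}_2^2]+\sigma^2\alpha_k^2+\frac{2L_2^2D_2^2\gamma^2}{\alpha_k}.
\end{equation*}
Next I would feed in the two hypotheses. Since $\gamma<n^{-3/4}$ and $k\le n$ we have $\gamma^2<n^{-3/2}\le k^{-3/2}$, and with $\alpha_k=k^{-1/2}$ the final term collapses as $2L_2^2D_2^2\gamma^2/\alpha_k<2L_2^2D_2^2\,k^{-3/2}\cdot k^{1/2}=2L_2^2D_2^2/k$, while $\sigma^2\alpha_k^2=\sigma^2/k$. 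Combining these gives the clean one-parameter recursion
\begin{equation*}
\mathbb{E}[\norm{\lambda^k}_2^2]\leq\Bigl(1-\tfrac{1}{2\sqrt{k}}\Bigr)\mathbb{E}[\norm{\lambda^{k-1}}_2^2]+\frac{\sigma^2+2L_2^2D_2^2}{k}.
\end{equation*}

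Finally I would apply the auxiliary recursion bound \Cref{lem:recursivevanishing} with $u^k=\mathbb{E}[\norm{\lambda^k}_2^2]$ and $c=\sigma^2+2L_2^2D_2^2$, exactly as in \Cref{lem:uSCGerrorbound}, to obtain $\mathbb{E}[\norm{\lambda^k}_2^2]\le Q/\sqrt{k}$ with $Q=\max\{\mathbb{E}[\norm{\lambda^1}_2^2],\,4\sigma^2+8L_2^2D_2^2\}$. The last check is that $Q=4\sigma^2+8L_2^2D_2^2$: on the first iteration the convention $\alpha_1=1$ gives $\lambda^1=\nabla f(x^1,\xi_1)-\nabla f(x^1)$, so \Cref{asm:stoch:var} yields $\mathbb{E}[\norm{\lambda^1}_2^2]\le\sigma^2\le 4\sigma^2+8L_2^2D_2^2$, which dominates the maximum.

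Since every step is a direct transcription of an argument already carried out for \ref{eq:uSCG}, I do not expect a genuine obstacle; the analytic heavy lifting is entirely delegated to \Cref{lem:recursivevanishing}. The only point requiring care is bookkeeping: ensuring that the diameter $D_2$ (not the radius $\rho_2$) is the quantity propagated through the recursion, and confirming that the index shift between the $k{+}1$-indexed form of the Mokhtari recursion used in \Cref{lem:SCGconstanterror} and the $k$-indexed form used above leaves the constants unchanged.
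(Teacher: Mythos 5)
Your proposal is correct and matches the paper's own proof essentially step for step: the paper likewise takes the Mokhtari-type recursion with $D_2^2$ in place of $\rho_2^2$, reduces it under $\gamma<n^{-3/4}$ and $\alpha_k=k^{-1/2}$ to the recursion $u^k\leq\bigl(1-\tfrac{1}{2\sqrt{k}}\bigr)u^{k-1}+\tfrac{c}{k}$ with $c=\sigma^2+2L_2^2D_2^2$, and then applies \Cref{lem:recursivevanishing} together with the bound $\mathbb{E}[\norm{\lambda^1}_2^2]\leq\sigma^2$ to conclude $Q\leq 4\sigma^2+8L_2^2D_2^2$. Your added justification for why the diameter $D_2$ (rather than the radius $\rho_2$) enters—because the \ref{eq:SCG} step is $\gamma(\lmo(d^k)-x^k)$—is exactly the distinction the paper leaves implicit by deferring to \citet[Lem. 1]{mokhtari2020stochastic}.
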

\begin{proof}
    Under \Cref{asm:Lip,asm:stoch}, we have the following recursion from Lemma 1 in \citet{mokhtari2020stochastic} after taking expectations, for all $k\in\mathbb{N}^*$,
    \begin{equation*}
        \mathbb{E}[\| \lambda^{k+1}\|_2^2] \leq (1-\frac{\alpha_{k+1}}{2})\mathbb{E}[\| \lambda^k\|_2^2] + \sigma^2\alpha_{k+1}^2 + 2L_2^2D_2^2\frac{\gamma^2}{\alpha_{k+1}}.
    \end{equation*}
    Comparing with the bound in \Cref{lem:uSCGerrorbound}, we see the only difference is the change of the constant $D_2^2$ by $\rho_2^2$. Repeating the argument in \Cref{lem:uSCGerrorbound}, the desired claim is directly obtained with $D_2^2$ in place of $\rho_2^2$, with the constant $Q = \max\{\mathbb{E}[\norm{\lambda^1}_2^2], 4\sigma^2+8L_2^2D_2^2\} \leq 4\sigma^2+8L_2^2D_2^2$ since $\mathcal{E}[\norm{\lambda^1}_2^2]\leq \sigma^2$ by \Cref{asm:stoch}.
\end{proof}

\end{toappendix}

\begin{thmrep}[Convergence rate for \ref{eq:SCG} with vanishing $\alpha_k$]\label{lem:frankwolfe_rate}
    Suppose \Cref{asm:Lip,asm:stoch} hold. Let $n\in\mathbb{N}^*$ and consider the iterates $\{x^k\}_{k=1}^n$ generated by \Cref{alg:SCG} with a constant stepsize $\gamma$ satisfying $\tfrac{1}{2n^{3/4}}<\gamma<\tfrac{1}{n^{3/4}}$ and vanishing momentum $\alpha_k = \frac{1}{\sqrt{k}}$. Then, for all $u\in\mathcal{D}$, it holds that
    \begin{equation*}
        \mathbb{E}[\langle \nabla f(\bar{x}^n), \bar{x}^n-u\rangle] = O\left(\tfrac{1}{n^{1/4}} + \tfrac{L\rho^2}{n^{3/4}}\right).
    \end{equation*}
\end{thmrep}
\begin{appendixproof}
    Let $n\in\mathbb{N}^*$ and $k\in\{1,\ldots,n\}$. By \Cref{asm:Lip}, we can invoke \Cref{lem:commondescent} to get,
    \begin{equation*}
        \begin{aligned}
            \gamma\mathbb{E}[\langle \nabla f(x^k),x^k-u\rangle]
                &\leq \mathbb{E}[f(x^k) - f(x^{k+1})] + D_2 \gamma \sqrt{\mathbb{E}[\| \lambda^k\|_2^2]} + 2L\rho^2\gamma^2.
        \end{aligned}
    \end{equation*}
    Applying the estimate given in \Cref{lem:SCG_vanishing_error} to the above we get
    \begin{equation*}
        \begin{aligned}
            \gamma\mathbb{E}[\langle \nabla f(x^k),x^k-u\rangle]
                &\leq \mathbb{E}[f(x^k) - f(x^{k+1})] + D_2 \gamma \sqrt{\frac{4\sigma^2+8L_2^2D_2^2}{\sqrt{k}}} + 2L\rho^2\gamma^2\\
                &= \mathbb{E}[f(x^k) - f(x^{k+1})] + D_2 \sqrt{4\sigma^2+8L_2^2D_2^2} \gamma \frac{1}{k^{1/4}} + 2L\rho^2\gamma^2.
        \end{aligned}
    \end{equation*}
    Summing from $k=1$ to $n$ and then dividing by $n\gamma$ we find, for all $u\in\mathcal{D}$,
    \begin{equation*}
        \begin{aligned}
            \mathbb{E}[\langle \nabla f(\bar{x}^n),\bar{x}^n-u\rangle]
                &= \frac{1}{n}\sum\limits_{k=1}^n\mathbb{E}[\langle \nabla f(x^k),x^k-u\rangle]\\
                &\stackrel{\text{(a)}}{\leq} \frac{\mathbb{E}[f(x^1) - f(x^{n+1})]}{n\gamma} + \frac{D_2\sqrt{4\sigma^2+8L_2^2D_2^2}}{n}\sum\limits_{k=1}^n\frac{1}{k^{1/4}} + 2L\rho^2\gamma\\
                &\stackrel{\text{(b)}}{\leq} \frac{\mathbb{E}[f(x^1) - f(x^{n+1})]}{n\gamma} + \frac{4D_2\sqrt{4\sigma^2+8L_2^2D_2^2}n^{3/4}}{3n} + 2L\rho^2\gamma\\
                &= \frac{\mathbb{E}[f(x^1) - f(x^{n+1})]}{n\gamma} + \frac{4D_2\sqrt{4\sigma^2+8L_2^2D_2^2}}{3n^{1/4}} + 2L\rho^2\gamma,
        \end{aligned}
    \end{equation*}
    using division by $\gamma n$ for (a) and the integral test with decreasing function $x\mapsto \frac{1}{x^{1/4}}$ for (b).
    Using the definition of $\fmin$ and estimating $n\gamma > \tfrac{n^{1/4}}{2}$ and $\gamma < \frac{1}{n^{3/4}}$ gives
    \begin{equation*}
        \begin{aligned}
            \mathbb{E}[\langle \nabla f(\bar{x}^n),\bar{x}^n-u\rangle]
                &\leq \frac{2\mathbb{E}[f(x^1) - \fmin]}{n^{1/4}} + \frac{4D_2\sqrt{4\sigma^2+8L_2^2D_2^2}}{3n^{1/4}} + \frac{2L\rho^2}{n^{3/4}}\\
                &= O\left(\frac{1}{n^{1/4}} + \frac{L\rho^2}{n^{3/4}}\right).
        \end{aligned}
    \end{equation*}
\end{appendixproof}
\vspace{-5pt}
\begin{insightbox}[label={insight:convergence}]
For both algorithms, our worst-case analyses for constant momentum suggest that tuning $\alpha$ requires balancing two effects. Making $\alpha$ smaller helps eliminate a constant term that is proportional to the noise level $\sigma$. However, if $\alpha$ becomes too small, it amplifies an $O(1/\sqrt{n})$ term and an $O(\sigma/n)$ term. The stepsize $\gamma$ must also align with the choice of momentum $\alpha$; for vanishing $\alpha_k$ the theory suggests a smaller constant stepsize like $\gamma=\tfrac{3}{4(n^{3/4})}$ to ensure convergence.
\end{insightbox}
\begin{toappendix}

\subsection{Averaged LMO Directional Descent (ALMOND)}\label{subsec:almond}
In this section we present a variation on \Cref{alg:uSCG} that computes the $\lmo$ directly on the stochastic gradient oracle and then does averaging. This is in contrast to how we have presented \Cref{alg:uSCG} which first does averaging (aka momentum) with the stochastic gradient oracle and then computes the $\lmo$. 
A special case of this algorithm is the Normalized SGD based algorithm of \citet{zhao2020stochastic} when the set $\mathcal{D}$ is with respect to the Euclidean norm. 
In contrast with \Cref{alg:uSCG}, the method relies on large batches, since the noise is not controlled by the momentum parameter $\alpha$ due to the bias introduced by the $\lmo$.

\begin{algorithm}
\caption{Averaged LMO directioNal Descent (ALMOND)}
\label{alg:ALMOND}
\textbf{Input:} Horizon $n$, initialization $x^1 \in \mathcal X$, $d^0 = 0$, momentum $\alpha \in (0,1)$, stepsize $\gamma \in (0,1)$
\begin{algorithmic}[1]
    \For{$k = 1, \dots, n$}
        \State Sample $\xi_{k}\sim \mathcal P$
        \State $d^{k} \gets \alpha \lmo(\nabla f(x^{k}, \xi_{k})) + (1 - \alpha)d^{k-1}$
        \State $x^{k+1} \gets x^k + \gamma d^k$
    \EndFor
    \State Choose $\bar{x}^n$ uniformly at random from $\{x^1, \dots, x^n\}$
    \item[\algfont{Return}] $\bar{x}^n$
\end{algorithmic}
\end{algorithm}

\begin{lemmarep}
    Suppose \Cref{asm:Lip,asm:stoch} hold. Let $n\in\mathbb{N}^*$ and consider the iterates $\{x_k\}_{k=1}^n$ generated by \Cref{alg:ALMOND} with stepsize $\gamma = \frac{1}{\sqrt{n}}$. Then, it holds
    \begin{equation*}
        \mathbb{E}[\norm{\nabla f(\bar{x}^n)}_{\ast}] \leq \frac{\mathbb{E}[f(x^1)-\fmin]}{\rho\sqrt{n}} + \frac{L(1-\alpha)\rho}{\alpha\sqrt{n}} + \frac{L\rho}{2\sqrt{n}} + 2\mu\sigma = O\left(\tfrac{1}{\sqrt{n}}\right) + 2\mu\sigma
    \end{equation*}
    where\footnote{Alternatively, instead of invoking the constant $\mu$ we could make an assumption that the gradient oracle has bounded variance measured in the norm $\norm{\cdot}_{\ast}$.} $\mu = \max\limits_{x\in\mathcal{X}}\frac{\norm{x}_\ast}{\norm{x}_{2}}$.
\end{lemmarep}
\begin{proof}
    Let $n\in\mathbb{N}^*$ and denote $z^{k} = \tfrac{1}{\alpha}x^k-\tfrac{1-\alpha}{\alpha}x^{k-1}$ with the convention that $x_0 = x_1$ so that $z_1 = x_1$ and, for all $k\in\{1,\ldots,n\}$,
    \begin{equation*}
        \begin{aligned}
            z^{k+1} - z^k
                &= \frac{1}{\alpha}x^{k+1}-\frac{1-\alpha}{\alpha}x^{k}-\frac{1}{\alpha}x^{k}+\frac{1-\alpha}{\alpha}x^{k-1}= \frac{1}{\alpha}\left(\gamma d^{k} - \gamma (1-\alpha)d^{k-1}\right)= \gamma\lmo(g^k).
        \end{aligned}
    \end{equation*}
    Applying the descent lemma for $f$ at the points $z^{k+1}$ and $z^k$ gives
    \begin{equation}\label{eq:nsgd_descent1}
        \begin{aligned}
            f(z^{k+1})
                &\leq f(z^{k}) + \langle \nabla f(z^k), z^{k+1}-z^k\rangle +\frac{L}{2}\norm{z^{k+1}-z^k}^2\\
                &= f(z^{k}) + \gamma\langle \nabla f(z^k), \lmo(g^k)\rangle +\frac{L\gamma^2}{2}\norm{\lmo(g^k)}^2\\
                &= f(z^{k}) + \gamma\left(\langle \nabla f(z^k)-\nabla f(x^k), \lmo(g^k)\rangle + \langle \nabla f(x^k) - g^k,\lmo(g^k)\rangle +\langle g^k,\lmo(g^k)\rangle\right) +\frac{L\gamma^2}{2}\norm{\lmo(g^k)}^2\\
                &= f(z^{k}) + \gamma\left(\langle \nabla f(z^k)-\nabla f(x^k), \lmo(g^k)\rangle + \langle \nabla f(x^k) - g^k,\lmo(g^k)\rangle -\rho\norm{g^k}_{\ast}\right) +\frac{L\gamma^2}{2}\norm{\lmo(g^k)}^2\\
                &\stackrel{\text{(a)}}{\leq} f(z^{k}) + \gamma\left(\left(\norm{\nabla f(z^k)-\nabla f(x^k)}_{\ast} + \norm{\nabla f(x^k) - g^k}_{\ast}\right)\norm{\lmo(g^k)} -\rho\norm{g^k}_{\ast}\right) +\frac{L\gamma^2}{2}\norm{\lmo(g^k)}^2\\
                &\stackrel{\text{(b)}}{\leq} f(z^{k}) + \gamma\left(\rho\left(\norm{\nabla f(z^k)-\nabla f(x^k)}_{\ast} + \norm{\nabla f(x^k) - g^k}_{\ast}\right) -\rho\norm{g^k}_{\ast}\right) +\frac{L\rho^2\gamma^2}{2}\\
                &\stackrel{\text{(c)}}{\leq} f(z^{k}) + \gamma\left(\rho\left(L\norm{z^k-x^k} + \norm{\nabla f(x^k) - g^k}_{\ast}\right) -\rho\norm{g^k}_{\ast}\right) +\frac{L\rho^2\gamma^2}{2},
        \end{aligned}
    \end{equation}
    applying H\"{o}lder's inequality with norm $\norm{\cdot}_{\ast}$ for (a), the radius $\rho$ of $\mathcal{D}$ for (b), and \Cref{asm:Lip} for (c).
    We note that
    \begin{equation*}
        x^{k+1}-x^{k} = \gamma d^k = \gamma\left((1-\alpha) d^{k-1}+\alpha\lmo(g^k)\right) = \alpha\gamma \lmo(g^k) + (1-\alpha)\gamma\left(\frac{x^k-x^{k-1}}{\gamma}\right)=\alpha\gamma\lmo(g^k)+(1-\alpha)(x^{k}-x^{k-1})
    \end{equation*}
    which we can use to bound
    \begin{equation*}
        \norm{x^{k}-x^{k-1}} \leq (1-\alpha)\norm{x^k-x^{k-1}} + \alpha\gamma\norm{\lmo(g^k)} \leq (1-\alpha)\norm{x^k-x^{k-1}} + \alpha\rho\gamma \leq \frac{\alpha\rho\gamma}{(1-\alpha)}.
    \end{equation*}
    We then have
    \begin{equation*}
        \norm{z^k-x^k} = \frac{(1-\alpha)}{\alpha}\norm{x^k-x^{k-1}}\leq \frac{(1-\alpha)\rho\gamma}{\alpha}
    \end{equation*}
    by using the definition of the update and the $\lmo$, which can be plugged into \eqref{eq:nsgd_descent1} to get
    \begin{equation}
        \begin{aligned}
            \rho\gamma\norm{g^k}_{\ast}
                &\leq f(z^k) - f(z^{k+1}) + \gamma\rho\left(L\norm{z^k-x^k} + \norm{\nabla f(x^k)-g^k}_{\ast}\right) + \frac{L\rho^2\gamma^2}{2}\\
            \implies \norm{g^k}_{\ast}
                &\stackrel{\text{(a)}}{\leq} \frac{f(z^k)-f(z^{k+1})}{\rho\gamma} + L\norm{z^k-x^k} + \norm{\nabla f(x^k)-g^k}_{\ast} + \frac{L\rho\gamma}{2}\\
                &\stackrel{\text{(b)}}{\leq} \frac{f(z^k)-f(z^{k+1})}{\rho\gamma} + \frac{L(1-\alpha)\rho\gamma}{\alpha} + \norm{\nabla f(x^k)-g^k}_{\ast} + \frac{L\rho\gamma}{2}\\
            \implies \norm{\nabla f(x^k)}_{\ast}
                &\stackrel{\text{(c)}}{\leq} \frac{(f(z^k)-f(z^{k+1})}{\rho\gamma} + \frac{L(1-\alpha)\rho\gamma}{\alpha} + 2\norm{\nabla f(x^k)-g^k}_{\ast} + \frac{L\rho\gamma}{2}
        \end{aligned}
    \end{equation}
    where (a) is the result of dividing both sides by $\rho\gamma$, (b) is the result of bounding $\norm{z^k-x^k}$, and (c) follows by the reverse triangle inequality after adding and subtracting $\nabla f(x^k)$ in the norm on the left hand side.
    Taking expectations, using \Cref{asm:stoch} and the constant $\mu = \max\limits_{x\in\mathcal{X}}\frac{\norm{x}_{\ast}}{\norm{x}_2}$, it holds
    \begin{equation*}
        \mathbb{E}[\norm{\nabla f(x^k)-g^k}_{\ast}]\leq \mu\mathbb{E}[\norm{\nabla f(x^k)-g^k}_{2}]\leq \mu\sqrt{\mathbb{E}[\norm{\nabla f(x^k)-g^k}_{2}^2]}\leq \mu\sigma
    \end{equation*}
    which we can sum from $k=1$ to $n$ to obtain
    \begin{equation*}
        \sum\limits_{k=1}^n\mathbb{E}[\norm{\nabla f(x^k)}_{\ast}] \leq \frac{\mathbb{E}[f(z^0)-f(z^{n+1})]}{\rho\gamma} + \frac{nL(1-\alpha)\rho\gamma}{\alpha} + 2n\mu\sigma + \frac{nL\rho\gamma}{2}.
    \end{equation*}
    Diving both sides by $n$ and then plugging in $\gamma = \frac{1}{\sqrt{n}}$ yields the desired final result.
\end{proof}

\subsection{Linear recursive inequalities}
We now present two elementary lemmas that establish bounds for linear recursive inequalities. These results are essential for analyzing the convergence behavior of our stochastic gradient estimator, particularly when examining the error term $\mathbb{E}[\norm{\lambda^k}_2^2]$.
\begin{lemma}[Linear recursive inequality with constant coefficients]\label{lem:recursive_geometric}
    Let $n>1$ and consider $\{u_k\}_{k=1}^n\in\mathbb{R}_+^n$ a sequence of nonnegative real numbers satisfying, for all $k\in\{2,\ldots,n\}$,
    \begin{equation*}
        u^k\leq (1-\beta) u^{k-1} + \eta
    \end{equation*}
    with $\eta>0$ and $\beta\in(0,1)$.
    Then, for all $k\in\{2,\ldots,n\}$, it holds
    \begin{equation*}
        u^k\leq \frac{\eta}{\beta} + (1-\beta)^ku^1.
    \end{equation*}
\end{lemma}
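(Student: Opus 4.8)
The plan is to treat this as a standard contractive inhomogeneous linear recursion and argue by induction on $k$, exploiting the fact that $\eta/\beta$ is the fixed point of the affine map $t \mapsto (1-\beta)t + \eta$. The single identity that makes everything work is
\[
  (1-\beta)\tfrac{\eta}{\beta} + \eta = \tfrac{\eta}{\beta}\bigl[(1-\beta)+\beta\bigr] = \tfrac{\eta}{\beta},
\]
i.e.\ the constant contribution collapses back onto $\eta/\beta$ at every step rather than accumulating.

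For the inductive step I would assume the bound $u^{k-1} \le \tfrac{\eta}{\beta} + (1-\beta)^{k-1}u^1$ and feed it straight into the recursive hypothesis of the lemma:
\[
  u^k \le (1-\beta)u^{k-1} + \eta
      \le (1-\beta)\Bigl(\tfrac{\eta}{\beta} + (1-\beta)^{k-1}u^1\Bigr) + \eta
      = \tfrac{\eta}{\beta} + (1-\beta)^k u^1 ,
\]
where the last equality invokes the fixed-point identity above and the homogeneous term simply picks up one further factor of $(1-\beta)$. This reproduces the claim at index $k$, so once the constant is recognized as the fixed point the step is automatic and no convexity or stepsize structure is needed (which is exactly why the lemma is quoted as a black box in the $\lambda^k$-decay arguments).

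The only place that calls for genuine care is the base case, where the nonnegativity of the $u^k$, the range $\beta\in(0,1)$, and the precise exponent interact: a naive unrolling via the geometric series gives the slightly different homogeneous factor $(1-\beta)^{k-1}$, since $u^k \le (1-\beta)^{k-1}u^1 + \eta\sum_{j=0}^{k-2}(1-\beta)^j \le (1-\beta)^{k-1}u^1 + \tfrac{\eta}{\beta}$ using $\sum_{j\ge 0}(1-\beta)^j = 1/\beta$. I would therefore anchor the induction by applying the recursion once at the first relevant index and verifying the resulting elementary inequality, using the geometric-series computation purely as an independent check that the accumulated $\eta$-mass never exceeds $\eta/\beta$. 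I do not expect a real obstacle: the whole statement is elementary, and the only subtlety is bookkeeping the constant term through the telescoping so that it settles at $\eta/\beta$.
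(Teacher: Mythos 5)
Your inductive step is exactly the paper's: the fixed-point identity $(1-\beta)\tfrac{\eta}{\beta}+\eta=\tfrac{\eta}{\beta}$ plus one extra factor of $(1-\beta)$ on the homogeneous term. The genuine gap sits precisely at the point you flagged and then waved off: the base case cannot be ``verified,'' because the lemma as stated, with exponent $k$ on $(1-\beta)$, is false in general. At $k=2$ the claim reads $u^2\le\tfrac{\eta}{\beta}+(1-\beta)^2u^1$, while the hypothesis only gives $u^2\le(1-\beta)u^1+\eta$; the implication between the two is equivalent to $\beta(1-\beta)u^1\le\eta\tfrac{1-\beta}{\beta}$, i.e.\ to $u^1\le\eta/\beta^2$, which is nowhere assumed. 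Concretely, take $\beta=\tfrac12$, $\eta=10^{-2}$, $u^1=100$, with the recursion tight: then $u^2=50.01$ while the claimed bound is $25.02$. So no choice of base-case argument can close the induction as stated; the statement that is actually provable --- and that your geometric-series unrolling produces --- carries $(1-\beta)^{k-1}u^1$, not $(1-\beta)^k u^1$.

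You are in good company: the paper's own proof has the same mismatch. Its base case establishes $u^2\le\tfrac{\eta}{\beta}+(1-\beta)u^1$, which is the exponent-$(k-1)$ version, while its inductive step propagates the exponent-$k$ version, so the two halves of the paper's induction prove different statements. The discrepancy is harmless downstream: the lemma is invoked with $u^1=\mathbb{E}[\|\lambda^1\|_2^2]\le\sigma^2$, and replacing $(1-\beta)^k$ by $(1-\beta)^{k-1}$ changes the bound by at most a factor $(1-\beta)^{-1}$, which is absorbed into the $O(\cdot)$ rates of \Cref{lem:uSCGrate1} and \Cref{thm:SCG:constant}. The honest repair of your proposal (and of the lemma) is therefore to restate the conclusion as $u^k\le\tfrac{\eta}{\beta}+(1-\beta)^{k-1}u^1$, anchor at $k=2$ with the one-step bound $u^2\le(1-\beta)u^1+\eta\le\tfrac{\eta}{\beta}+(1-\beta)u^1$, and then run your fixed-point induction verbatim; alternatively, keep the exponent $k$ but add the hypothesis $u^1\le\eta/\beta^2$.
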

\begin{proof}
    We prove the claim by induction on $k$. For the base case $k=2$ we find
    \begin{equation*}
        u^2 \leq (1-\beta)u^1 + \eta \leq \frac{\eta}{\beta} + (1-\beta)u^1
    \end{equation*}
    since $\beta<1$.
    Assume now for some $k\in\{2,\ldots,n\}$ that the claim holds. Then, by the assumed recursive inequality on $\{u_i\}_{i=1}^n$, we have
    \begin{equation*}
        u^{k+1} \leq (1-\beta)u^k + \eta \leq (1-\beta)\left(\frac{\eta}{\beta} + (1-\beta)^ku^1\right) + \eta = (1-\beta)^{k+1}u^1 + \left(\frac{1-\beta}{\beta} + 1\right)\eta = (1-\beta)^{k+1}u^1 + \frac{\eta}{\beta}
    \end{equation*}
    and thus the desired claim holds by induction.
\end{proof}

The first lemma establishes a geometric decay bound for sequences with constant momentum. The following lemma extends this analysis to the case of variable coefficients, which we will use when we analyze \Cref{alg:uSCG} and \Cref{alg:SCG} with vanishing momentum $\alpha_k$.

\begin{lemma}[Linear recursive inequality with vanishing coefficients]\label{lem:recursivevanishing}   
    Let $\{u^k\}_{k\in\mathbb{N}^*}$ be a sequence of nonnegative real numbers satisfying, for all $k\in\mathbb{N}^*$, the following recursive inequality
    \begin{equation*}
        u^k\leq \left(1-\frac{1}{2\sqrt{k}}\right)u^{k-1} + \frac{c}{k}
    \end{equation*}
    where $c>0$ is constant.
    Then, the sequence $\{u^k\}_{k\in\mathbb{N}^*}$ satisfies, for all $k\in\mathbb{N}^*$,
    \begin{equation*}
        u^k \leq \frac{Q}{\sqrt{k}}
    \end{equation*}
    with $Q=\max\{u^1, 4c\}$.
\end{lemma}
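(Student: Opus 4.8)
The plan is to prove the bound by induction on $k$, using the natural hypothesis $u^{k-1}\le Q/\sqrt{k-1}$ at each step. The base case $k=1$ is immediate, since $u^1\le\max\{u^1,4c\}=Q=Q/\sqrt{1}$. For the inductive step, assume $u^{k-1}\le Q/\sqrt{k-1}$ for some $k\ge 2$; substituting into the recursion and using that the coefficient $1-\tfrac{1}{2\sqrt k}$ is nonnegative gives
$$u^k\le\left(1-\frac{1}{2\sqrt{k}}\right)\frac{Q}{\sqrt{k-1}}+\frac{c}{k},$$
so the entire argument reduces to establishing the single scalar inequality $\left(1-\tfrac{1}{2\sqrt{k}}\right)\tfrac{Q}{\sqrt{k-1}}+\tfrac{c}{k}\le \tfrac{Q}{\sqrt{k}}$ for every $k\ge 2$.

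Next I would massage this one-step inequality into a transparent form. Writing $\left(1-\tfrac{1}{2\sqrt k}\right)\tfrac1{\sqrt{k-1}}=\tfrac1{\sqrt{k-1}}-\tfrac1{2\sqrt{k}\sqrt{k-1}}$, collecting the $Q$-terms, and using $\tfrac{1}{\sqrt k}-\tfrac{1}{\sqrt{k-1}}=\tfrac{\sqrt{k-1}-\sqrt k}{\sqrt{k(k-1)}}$, the target becomes
$$\frac{c}{k}\le \frac{Q}{\sqrt{k(k-1)}}\left(\tfrac12-\bigl(\sqrt{k}-\sqrt{k-1}\bigr)\right).$$
The elementary identity driving the whole argument is $\sqrt{k}-\sqrt{k-1}=\tfrac{1}{\sqrt{k}+\sqrt{k-1}}$, which is $\le\tfrac12$ for all $k\ge 2$, so the right-hand side is genuinely positive. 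It is exactly this gap $\tfrac12-(\sqrt k-\sqrt{k-1})$ that must absorb the injected term $c/k$, and the constant multiplying $c$ inside $Q$ is what determines whether it can.

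I expect the main obstacle to be the behaviour at the smallest indices. Rearranging, the one-step inequality is equivalent to $c\le Q\cdot k\,S(k)$ with $S(k):=\tfrac{1}{\sqrt{k(k-1)}}\bigl(\tfrac12-(\sqrt k-\sqrt{k-1})\bigr)$, and I would show that $k\,S(k)=\sqrt{\tfrac{k}{k-1}}\bigl(\tfrac12-\tfrac{1}{\sqrt k+\sqrt{k-1}}\bigr)$ is increasing in $k$ and tends to $\tfrac12$ as $k\to\infty$. Consequently the large-$k$ regime is comfortable, since the available slack converges to the positive constant $\tfrac12$, and the tightest requirement sits at the smallest index $k=2$, where $u^{k-1}$ is compared against the initial value $u^1$ itself. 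I therefore anticipate the crux — the step deserving the most care — to be verifying that the constant attached to $c$ in $Q=\max\{u^1,4c\}$ is large enough to satisfy $c\le Q\cdot 2S(2)$ at $k=2$, as this boundary transition is precisely where the bound is closest to being violated; once this single scalar inequality is secured, the monotonicity of $k\,S(k)$ propagates it to all $k\ge 2$ and closes the induction.
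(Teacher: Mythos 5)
Your induction skeleton is the same as the paper's, and your exact reduction of the inductive step to $\frac{c}{k}\le\frac{Q}{\sqrt{k(k-1)}}\bigl(\tfrac12-(\sqrt{k}-\sqrt{k-1})\bigr)$ is algebraically correct. The genuine gap is precisely the step you deferred: the $k=2$ check \emph{fails} for the stated constant. Explicitly, $2S(2)=\sqrt{2}\bigl(\tfrac32-\sqrt{2}\bigr)=\tfrac{3\sqrt{2}-4}{2}\approx 0.121$, so $c\le Q\cdot 2S(2)$ forces $Q\ge\tfrac{2c}{3\sqrt{2}-4}=(4+3\sqrt{2})c\approx 8.24c$, while the lemma only provides $Q\ge 4c$; hence your induction cannot close as proposed. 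Moreover, this is not a weakness of your route but of the statement itself: taking $u^1=4c$ and equality in the recursion yields $u^2=\bigl(\tfrac92-\sqrt{2}\bigr)c\approx 3.086c>\tfrac{4c}{\sqrt{2}}\approx 2.828c$, so the claimed bound with $Q=\max\{u^1,4c\}$ is genuinely violated at $k=2$ (and, propagating the recursion, up through roughly $k=6$). The paper's own proof obscures this: after the estimate $\frac{1}{\sqrt{k-1}}\le\frac{1}{\sqrt{k}}\bigl(1+\frac{1}{2(k-1)}\bigr)$ it asserts $\bigl(1-\frac{1}{2\sqrt{k}}\bigr)\bigl(1+\frac{1}{2(k-1)}\bigr)\le 1-\frac{1}{4\sqrt{k}}$ for all $k>1$, but expanding both sides shows this inequality is equivalent to $\sqrt{k}\ge 2$, i.e., it holds only for $k\ge 4$; at $k=2,3$ the paper's inductive step is invalid for exactly the reason your sharp computation detects.

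The repair is to enlarge the constant: with $Q=\max\{u^1,(4+3\sqrt{2})c\}$ your argument closes, and you do not even need the monotonicity of $kS(k)$ that you proposed to prove (which is not immediate, being a product of the decreasing factor $\sqrt{k/(k-1)}$ and an increasing factor). A crude bound suffices: for $k\ge 3$, $kS(k)=\sqrt{\tfrac{k}{k-1}}\bigl(\tfrac12-\tfrac{1}{\sqrt{k}+\sqrt{k-1}}\bigr)\ge\tfrac12-\tfrac{1}{\sqrt{3}+\sqrt{2}}>0.18>\tfrac{3\sqrt{2}-4}{2}$, so the binding case is indeed $k=2$, which the enlarged $Q$ satisfies with equality. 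This change only inflates absolute constants downstream (e.g., the $4\sigma^2+8L_2^2\rho_2^2$ in the vanishing-momentum error bounds becomes a larger multiple), leaving the $O(n^{-1/4})$ rates intact. In short: your proposal is the paper's induction done exactly rather than with lossy estimates, and carried to completion it shows the lemma's constant $4$ is too small --- the deferred $k=2$ verification you correctly flagged as the crux is where both your proposal and the published proof break.
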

\begin{proof}
    We prove the claim by induction. For $k=1$ the inequality holds by the definition of $Q$, since
    \begin{equation*}
        u^1 \leq Q = \frac{Q}{\sqrt{1}}.
    \end{equation*}
    Let $k>1$ and assume that
    \begin{equation*}
        u^{k-1}\leq\frac{Q}{\sqrt{k-1}}.
    \end{equation*}
    Then, by the assumed recursive inequality for $u^k$, we have
    \begin{equation}\label{eq:recursive_ineq2}
        \begin{aligned}
            u^{k}
                &\leq \left(1-\frac{1}{2\sqrt{k}}\right)u^{k-1} + \frac{c}{k}\\
                &\leq \left(1-\frac{1}{2\sqrt{k}}\right)\frac{Q}{\sqrt{k-1}} + \frac{c}{k}.
        \end{aligned}
    \end{equation}
    Since $k>1$, we can estimate
    \begin{equation*}
        \frac{1}{\sqrt{k-1}} = \frac{\sqrt{k}}{\sqrt{k(k-1)}} = \frac{1}{\sqrt{k}}\sqrt{\frac{k}{k-1}} = \frac{1}{\sqrt{k}}\sqrt{1 + \frac{1}{k-1}} \leq \frac{1}{\sqrt{k}}\left(1 + \frac{1}{2(k-1)}\right)
    \end{equation*}
    which, when applied to \eqref{eq:recursive_ineq2}, gives
    \begin{equation}\label{eq:recursive_ineq3}
        u^k\leq \left(1-\frac{1}{2\sqrt{k}}\right)\left(1+\frac{1}{2(k-1)}\right)\frac{Q}{\sqrt{k}} + \frac{c}{k}.
    \end{equation}
    Furthermore, as $k>1$, we also have
    \begin{equation*}
        \left(1-\frac{1}{2\sqrt{k}}\right)\left(1+\frac{1}{2(k-1)}\right)\leq \left(1-\frac{1}{4\sqrt{k}}\right).
    \end{equation*}
    Applying the above to \eqref{eq:recursive_ineq3} gives
    \begin{equation*}
        \begin{aligned}
            u^k
                &\leq \left(1-\frac{1}{4\sqrt{k}}\right)\frac{Q}{\sqrt{k}}+\frac{c}{k}\\
                &= \frac{Q}{\sqrt{k}} + \frac{c-Q/4}{k}\\
                &\leq \frac{Q}{\sqrt{k}}
        \end{aligned}
    \end{equation*}
    with the last inequality following since $Q\geq 4c$.
    The desired claim is therefore obtained by induction.
\end{proof}

\end{toappendix}

\begin{figure*}[t]
    \centering
    \includegraphics[width=0.245\textwidth]{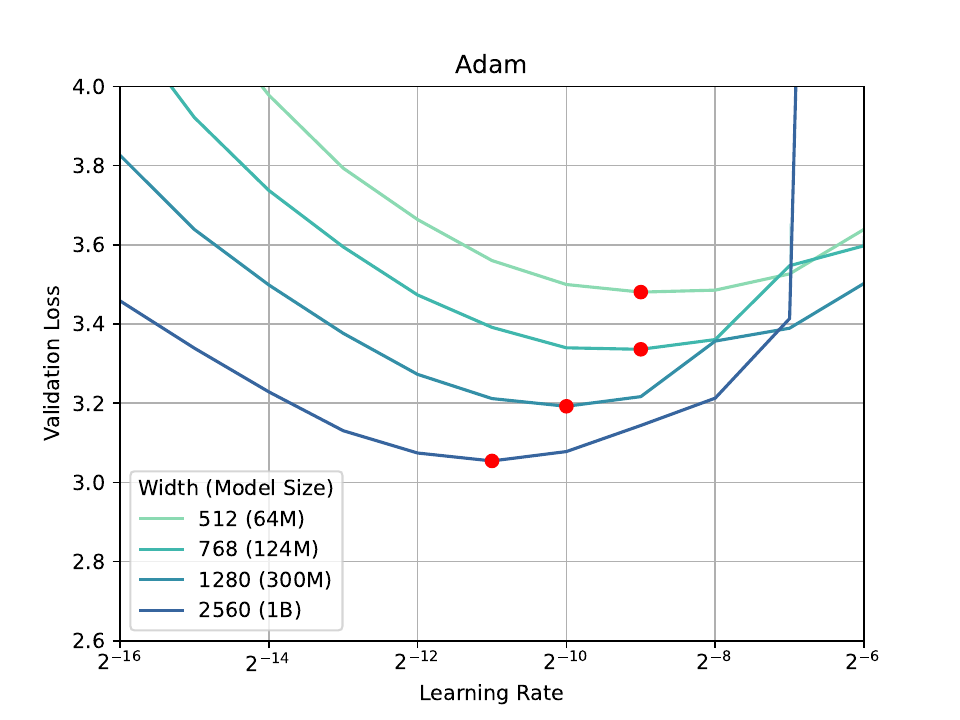}
    \includegraphics[width=0.245\textwidth]{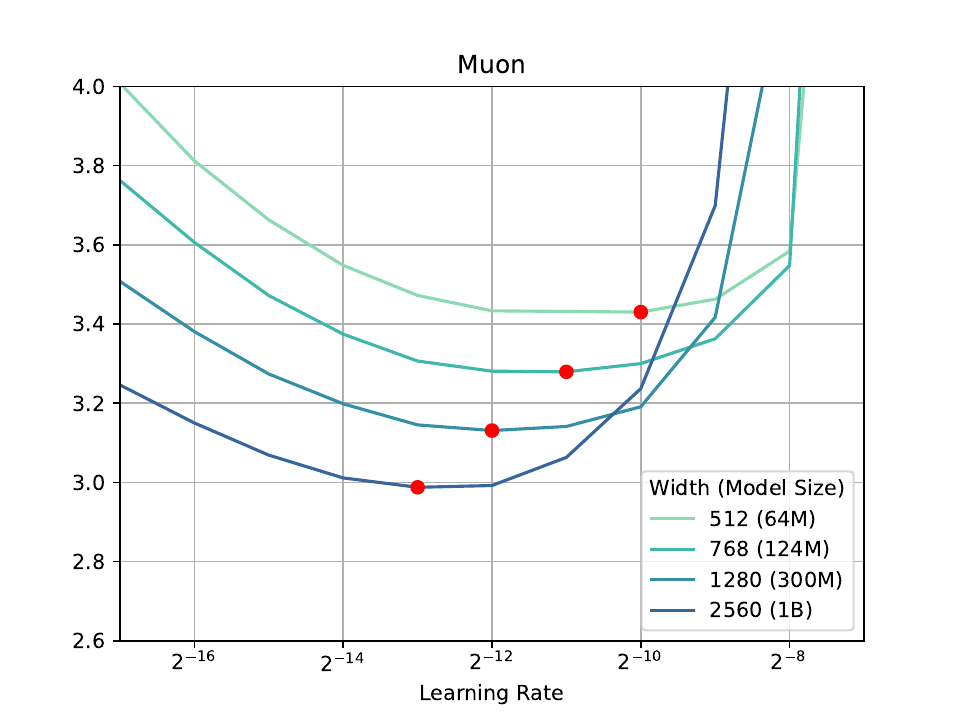}
    \includegraphics[width=0.245\textwidth]{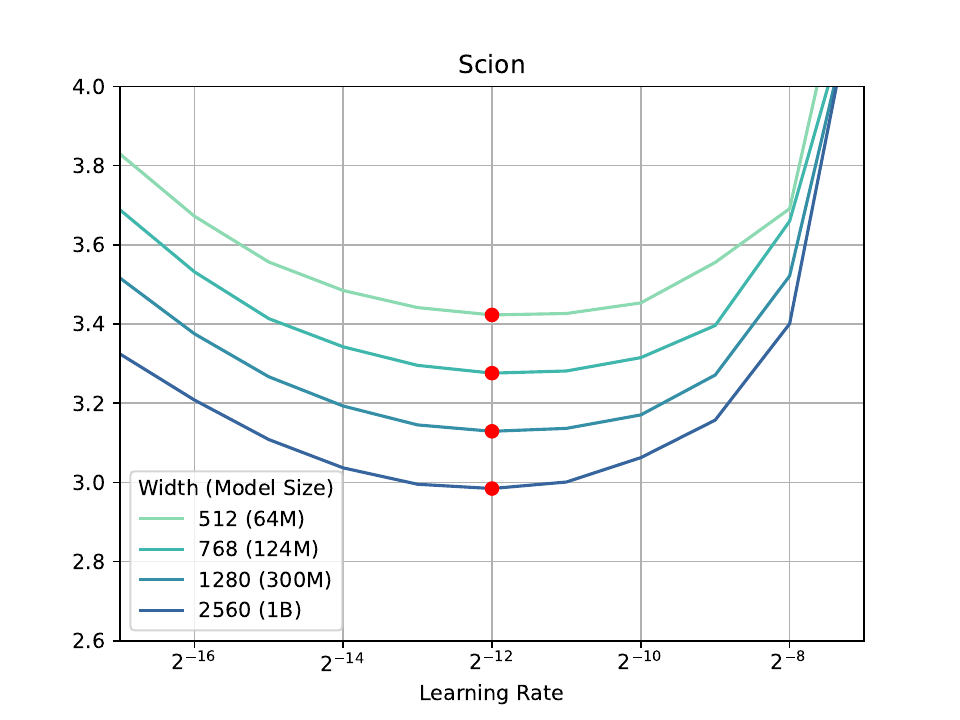}
    \includegraphics[width=0.245\textwidth]{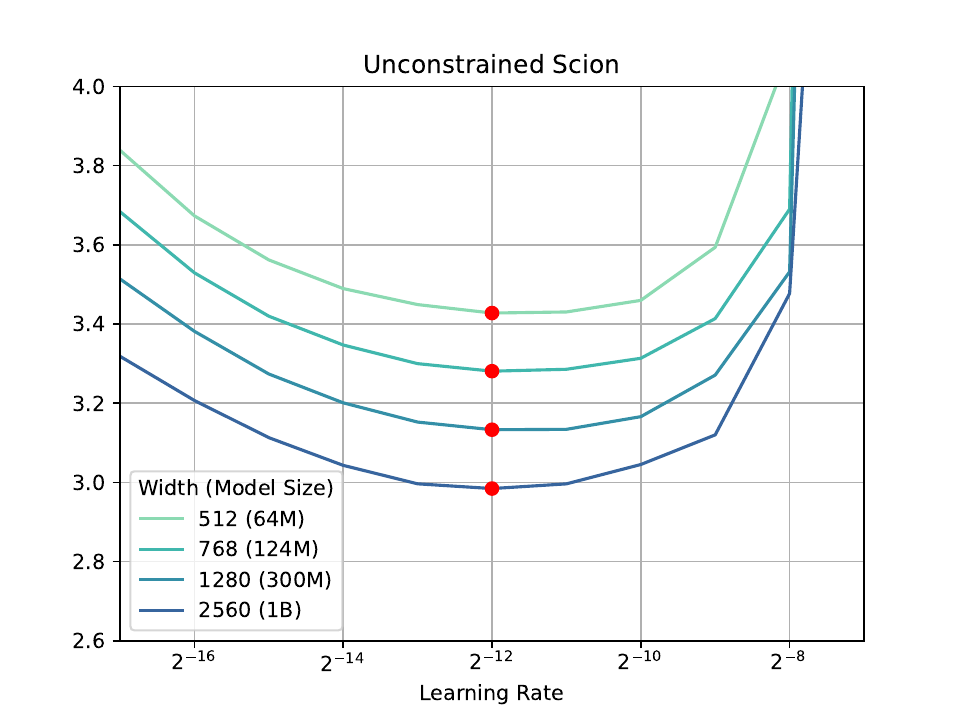}
    \caption{Performance on NanoGPT with between 64M and 1B parameters. The optimal learning rate of \Scion is invariant to width.} \label{fig:GPT}
\end{figure*}

\begin{figure}
    \centering
    \includegraphics[width=0.4\textwidth]{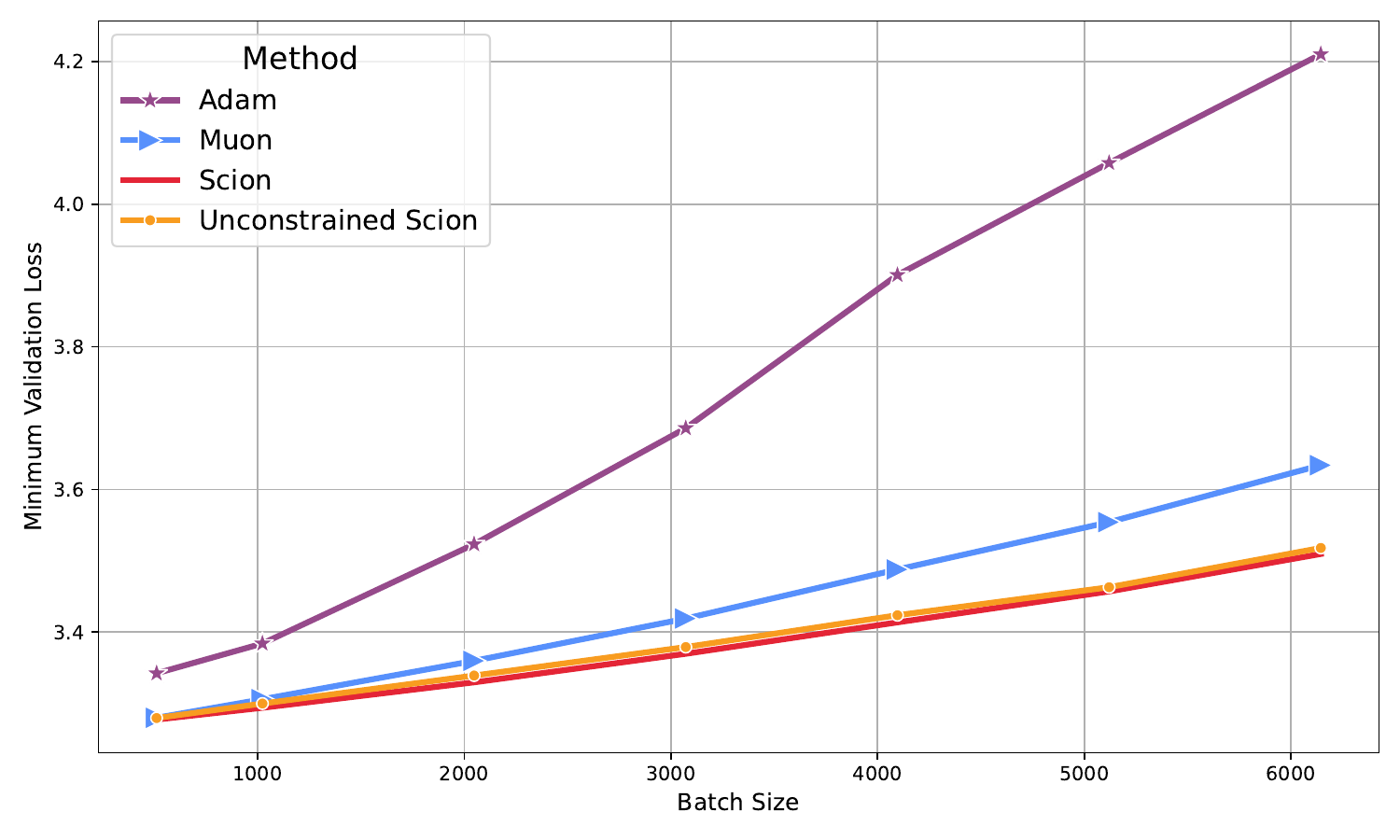}
    \vspace{-3mm}
    \caption{Batch size sensitivity on NanoGPT (124M). The generalization of \Scion is less sensitive to larger batches.}\label{fig:GPT:bz}
    \vspace{-1em}
\end{figure}

\begin{figure}[t]
\centering
\includegraphics[width=0.47\textwidth]{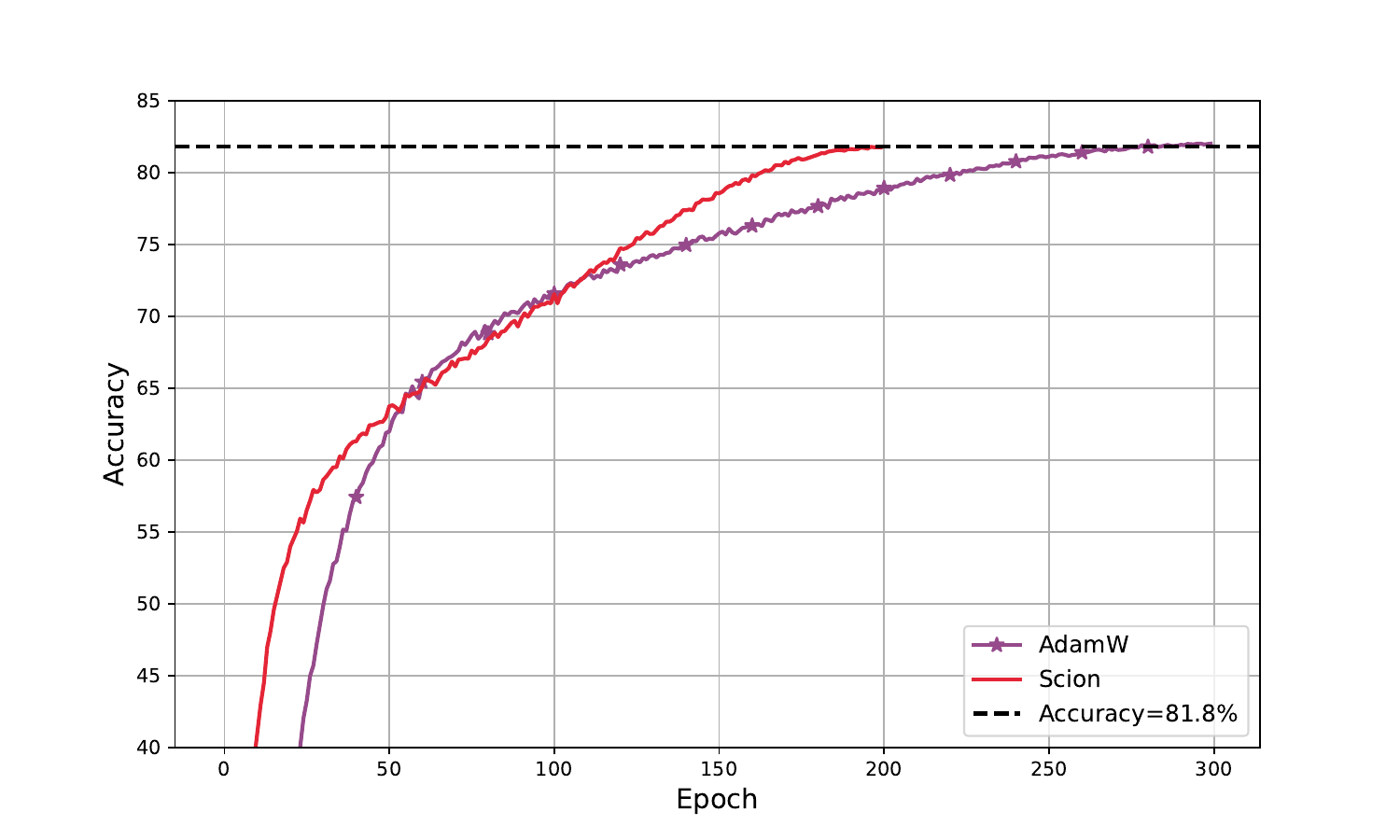}
\vspace{-3mm}
\caption{\Scion leads to 30\% fewer epochs for ViT on ImageNet and $>$40\% wallclock speedup due to a larger critical batch size.}
\label{fig:ImageNet}
\vspace{-5mm}
\end{figure}

\begin{toappendix}
\section{Experiments}\label{app:experiments}
\subsection{Additional experiments}

\paragraph{MLP}
We consider a 3-layer MLP with ReLU activations to demonstrate the various output layers in \Cref{tbl:parameter:lmo}.
We consider the configuration (Spectral $\rightarrow$ Spectral $\rightarrow$ X) where X is the output layer.
Hyperparameters are provided in \Cref{tbl:hyperparams:MLP}.
We observe in \Cref{fig:MLP:last-layer:transfer} that the optimal learning rate transfers across model width for all output layer configurations.

\paragraph{Shallow GPT}
We consider a 3-layer GPT model \citep{karpathy2023nanogpt} with the same modernizations as for the deep GPT in \Cref{sec:experiments}.
We additionally remove the weight sharing between the first and last layer so that the various input layers from \Cref{tbl:parameter:lmo:1hot} can be investigated.
We consider \Scion and \uScion with the configuration (X $\rightarrow$ Spectral $\rightarrow$ Sign) where X sweeps over the possible input layer $\lmo$s.
We additionally consider the variant of \uScion using the configuration (Sign $\rightarrow$ Sign $\rightarrow$ Sign), which is useful for distributed settings.
The hyperparameters can be found in \Cref{tbl:hyperparams:ShallowGPT}.
We observe in \Cref{fig:GPT:shakespeare} that all configurations exhibit transferability of the optimal stepsize across layer width.

\subsection{Implementation details}\label{app:impl}

It is possible to implement \ref{eq:SCG} and \ref{eq:uSCG}, while only storing on set of parameter and one set of gradients (possibly stored in half-precision).
For concreteness, we focus on \ref{eq:SCG}, but the reasoning applies to \ref{eq:uSCG} as well.
Due to the scale invariance of the $\lmo$, \ref{eq:uSCG} can be equivalently written as
\begin{equation*}
\begin{split}
G^{k} &= (1-\alpha)G^{k-1} + \nabla f(x^k,\xi^k) \\
x^{k+1} &= x^k + \gamma_k \lmo(G^{k})
\end{split}
\end{equation*}
By rearranging the update, it suffice to maintain only two states:
\begin{equation*}
\begin{split}
G &\leftarrow G + \nabla f(x,\xi) \quad \text{(backpropagation)}\\
x &\leftarrow x + \gamma \lmo(G) \\
G &\leftarrow (1-\alpha)G
\end{split}
\end{equation*}
Implementation wise this approach relies on storing the averaged gradient at the memory location where backpropagation is accumulating the gradient.
Thus, it is important not to zero out the gradient at any point during training.
We provide a reference implementation in PyTorch referred to as \texttt{ScionLight}.

\subsection{Scaled ReLU$^2$}\label{app:relu2}

\citet[App. B.2]{large2024scalable} introduces $\operatorname{ScaledReLU}(x):=\sqrt{2}\cdot \operatorname{ReLU}(x)$ in order to preserve the variance of the input.
Building on this we define $\operatorname{ScaledReLU}^2(x):=\operatorname{ScaledReLU}(x)^2=2\cdot \operatorname{ReLU}(x)^2$ as a heuristic.

\subsection{Hyperparameters}\label{app:hyperparams}
For all hyperparameter configuration (\Cref{tbl:hyperparams:nanoGPT,tbl:hyperparams:ShallowGPT,tbl:hyperparams:MLP,tbl:hyperparams:airbench}) 
we first tune the radius parameters in \eqref{eq:norm:NN} on a small proxy model, similar to the input and output scaling factor in $\mu$P \citep{yang2021tensor}.
The parameters can be tuned with a suboptimal stepsize $\gamma$.
The radius $\rho_1$ refers to the radius of the input layer, the radius $\rho_\ell$ refers to the radius scaling of the intermediary layers, while $\rho_L$ refers to the radius scaling of the last layer in the hyperparameter configuration tables \Cref{tbl:hyperparams:airbench,tbl:hyperparams:MLP,tbl:hyperparams:nanoGPT,tbl:hyperparams:ShallowGPT,tbl:hyperparams:DeiT}.

All experiments report the loss computed at the last iterate.
A linear decay stepsize schedule is employed, which is theoretically motivated by the last iterate guarantee provided in \citet{zamani2023exact}.
The linear decay schedule is compatible with any algorithm having regret bounds \citep{defazio2024roadscheduled}, which was established for \ref{eq:SCG} in \citep{hazan2012projection}.

\paragraph{NanoGPT}
For NanoGPT we specifically build on the version snapshot at:\\ \url{https://github.com/KellerJordan/modded-nanogpt/blob/master/records/101724_DistributedMuon/22d24867-eb5a-4fcc-ae2c-263d0277dfd1.txt}.

\paragraph{ViT}
We train a DeiT-base model on ImageNet using the DeiT codebase \citep{touvron2021training} in order to reach the reported test accuracy of 81.8\%. For the AdamW, we retain the optimized hyperparameters reported in the original code, for both \Scion and \uScion, we introduce several revisions.

Specifically, inspired by the NanoGPT architecture, we replace the LayerNorm layers in DeiT-base with RMSNorm (Root Mean Square Normalization), implemented without learnable parameters. 
Our empirical results indicate that this modification significantly improves the performance of \Scion, while it does not produce a similar improvement for AdamW.

Furthermore, we disable the learning rate warmup for \Scion and increase the batchsize.
The complete set of hyperparameters can be found in \Cref{tbl:hyperparams:DeiT} and the results can be found in \Cref{fig:DeiT-base:loss-curve,fig:ImageNet}.

\begin{figure*}[!h]
\centering
\includegraphics[width=0.32\textwidth]{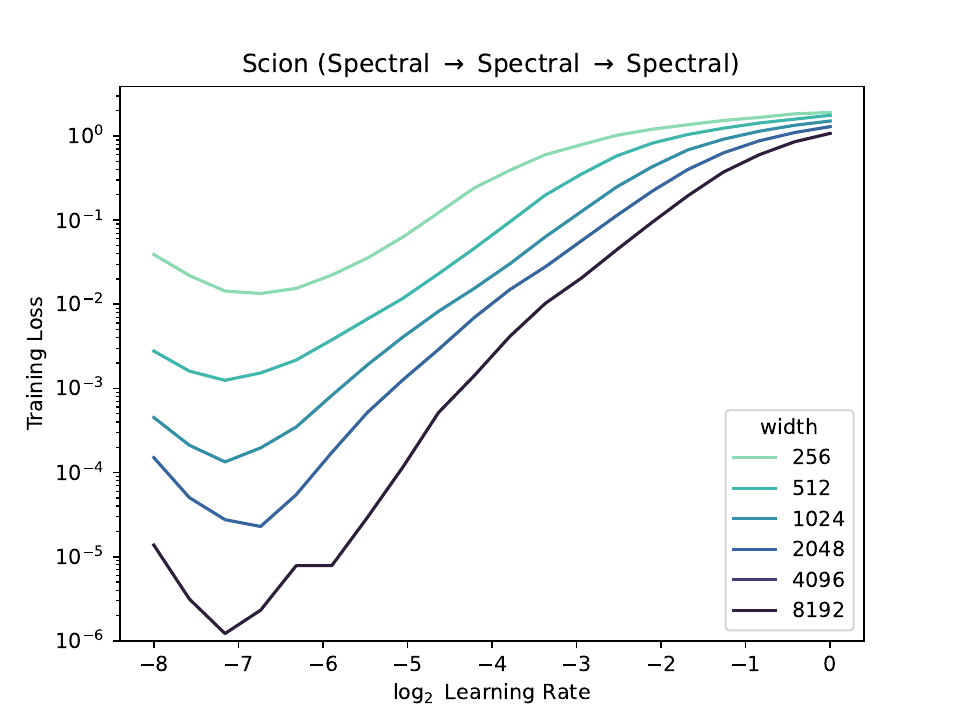}%
\includegraphics[width=0.32\textwidth]{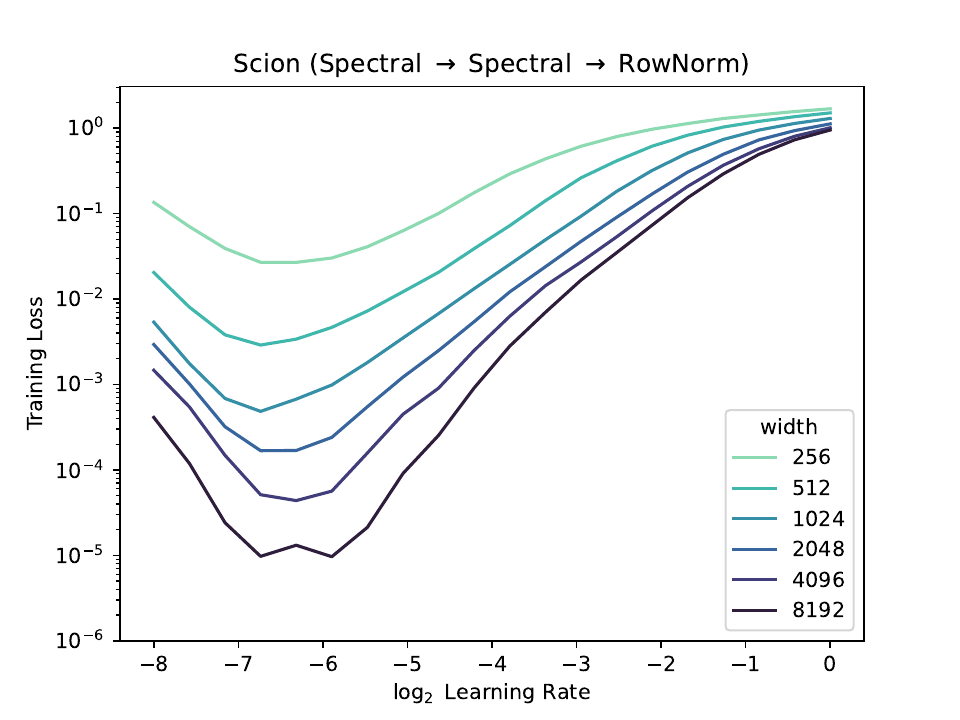}
\includegraphics[width=0.32\textwidth]{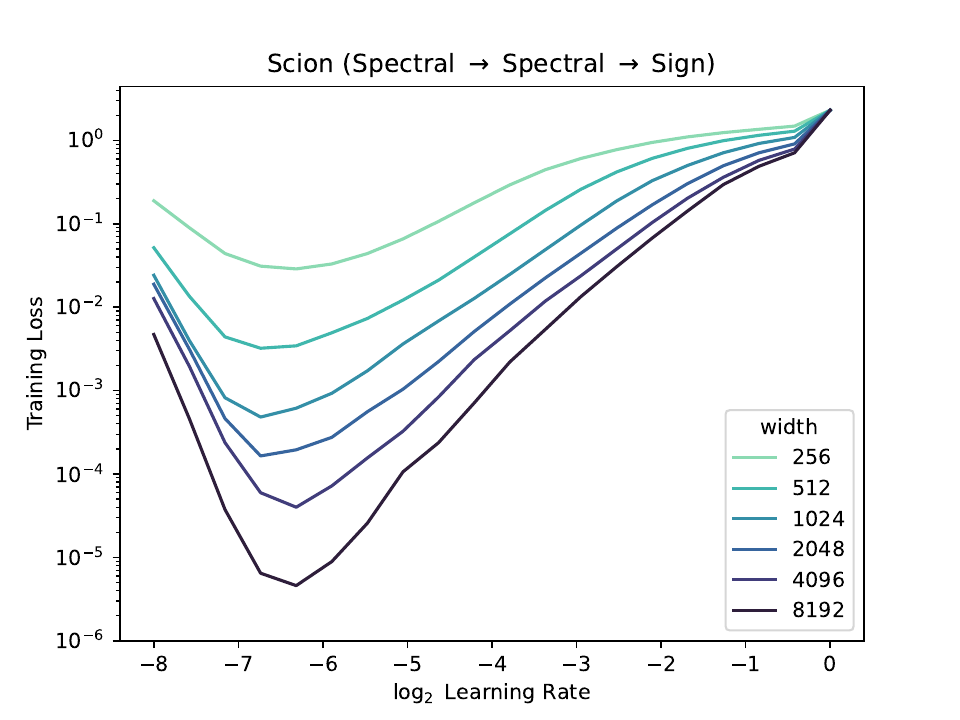}%
\caption{Hyperparameter transfer for all three last layer choices on MLP.}
\label{fig:MLP:last-layer:transfer}
\end{figure*}

\begin{figure}[!h]
\centering
\includegraphics[width=0.49\textwidth]{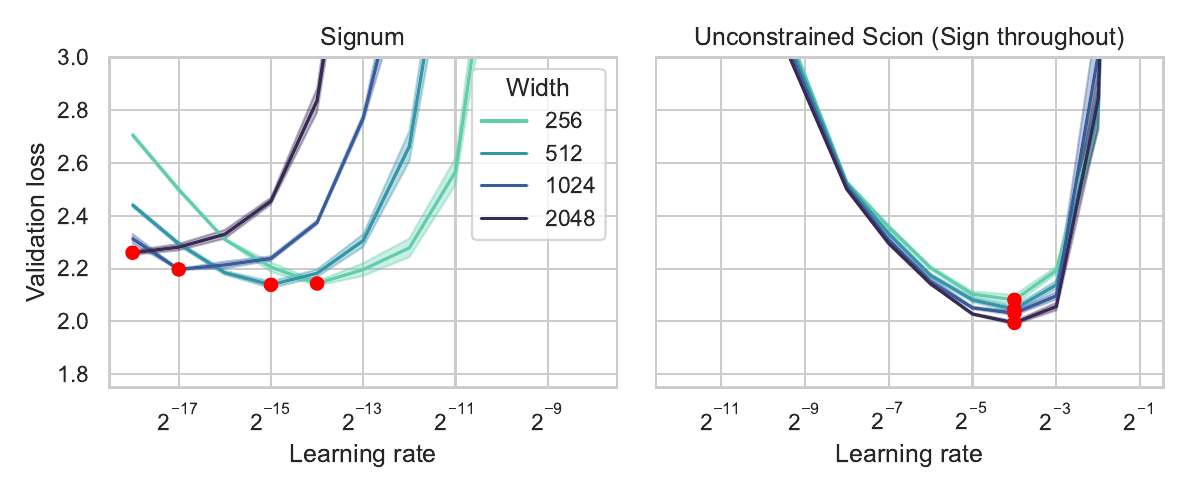}%
\includegraphics[width=0.49\textwidth]{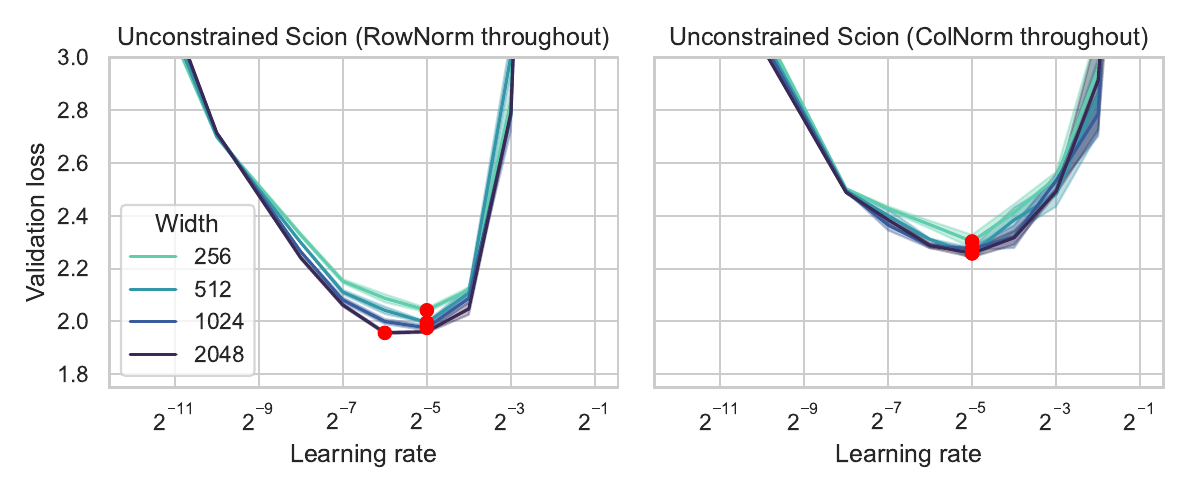}
\caption{Hyperparameter transfer on a 3-layer GPT using appropriately rescaled Sign, RowNorm and ColNorm (\textit{cf.} \Cref{tbl:parameter:lmo:same-norm}).}
\label{fig:GPT:shakespeare:sign}
\end{figure}

\begin{figure*}[!h]
\centering
\includegraphics[width=1.0\textwidth]{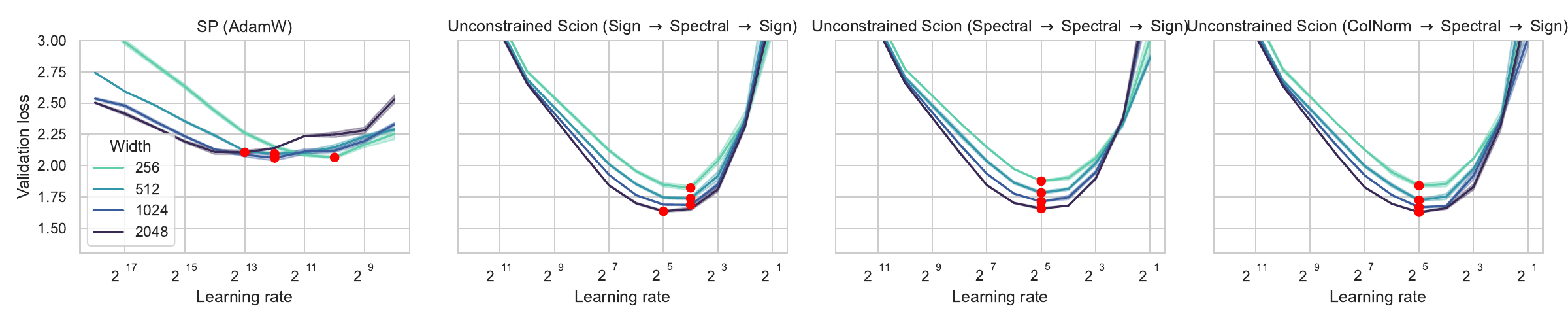}\\
\includegraphics[width=0.75\textwidth]{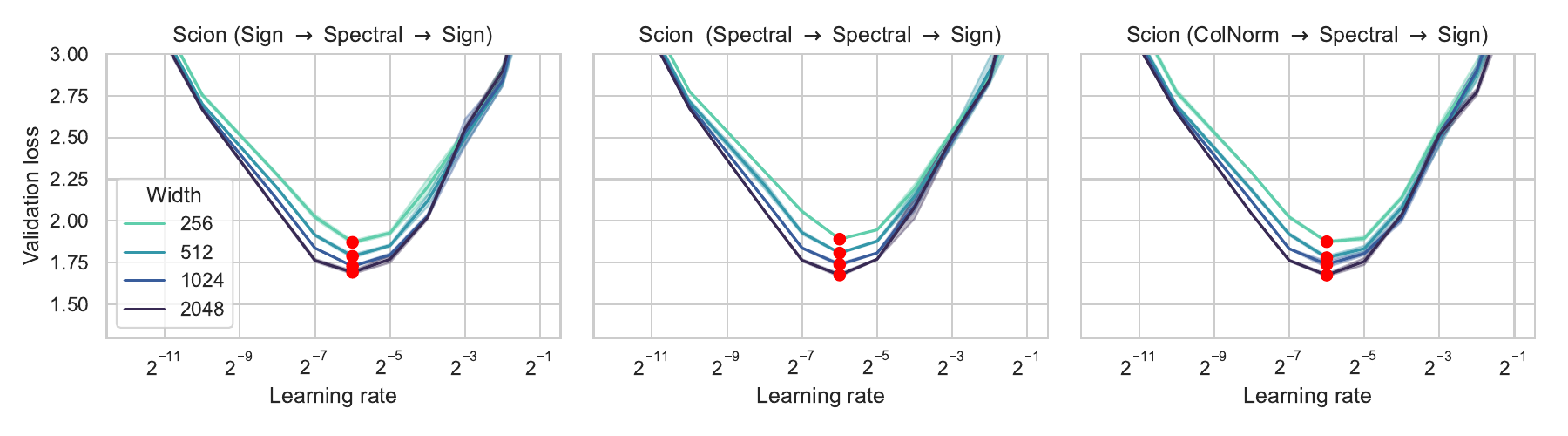}%
\caption{Hyperparameter transfer on a 3-layer GPT for all three input layer norms.}
\label{fig:GPT:shakespeare}
\end{figure*}

\begin{figure*}[!h]
\centering
\includegraphics[width=0.5\textwidth]{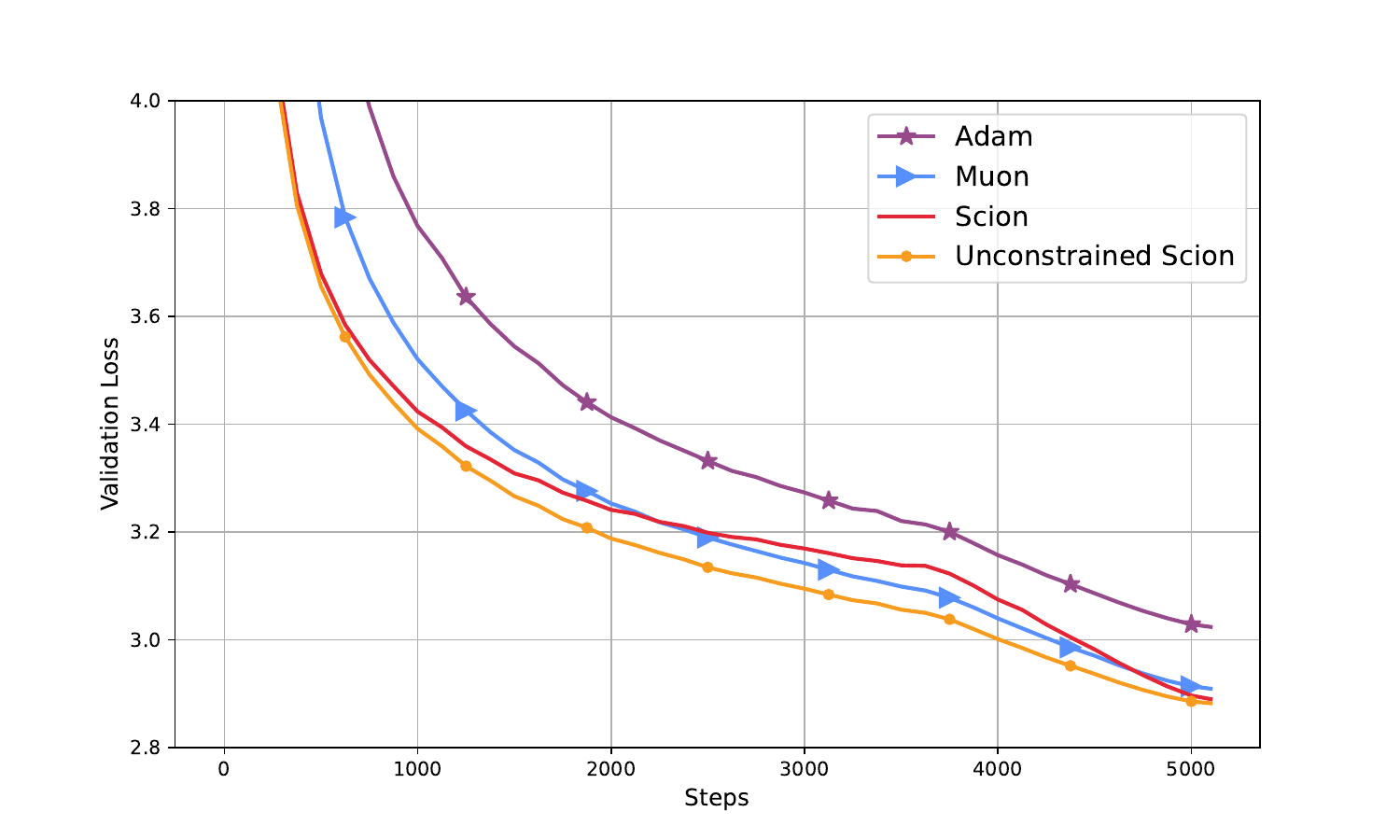}
\caption{Validation loss curve for NanoGPT 3B.}
\label{fig:NanoGPT:3B:loss-curve}
\end{figure*}

\begin{figure}[t]
\centering
\includegraphics[width=0.3\textwidth]{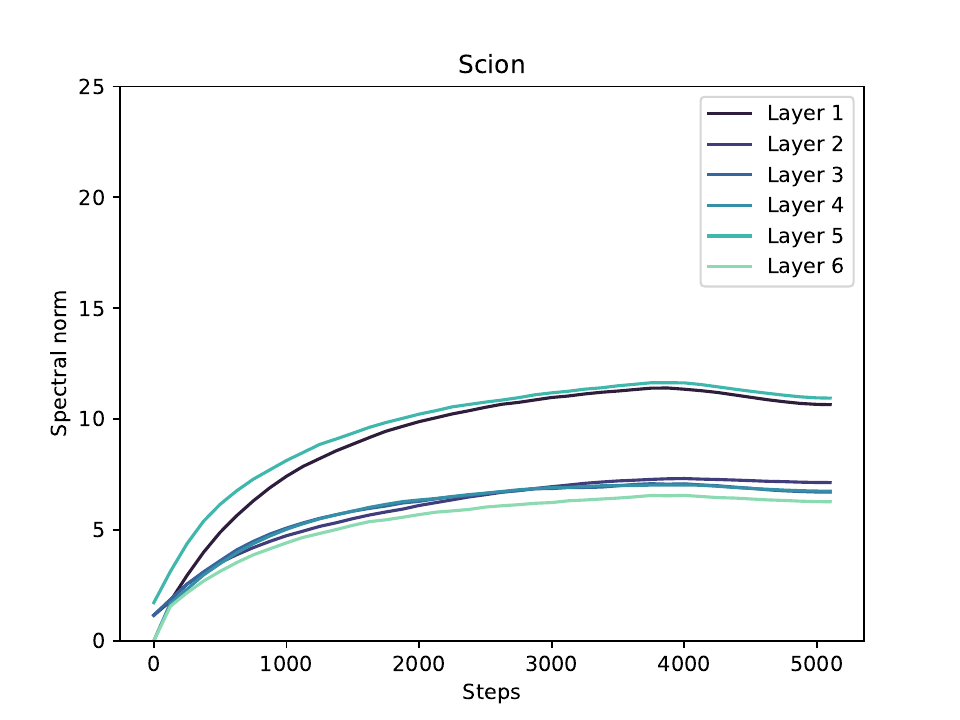}%
\includegraphics[width=0.3\textwidth]{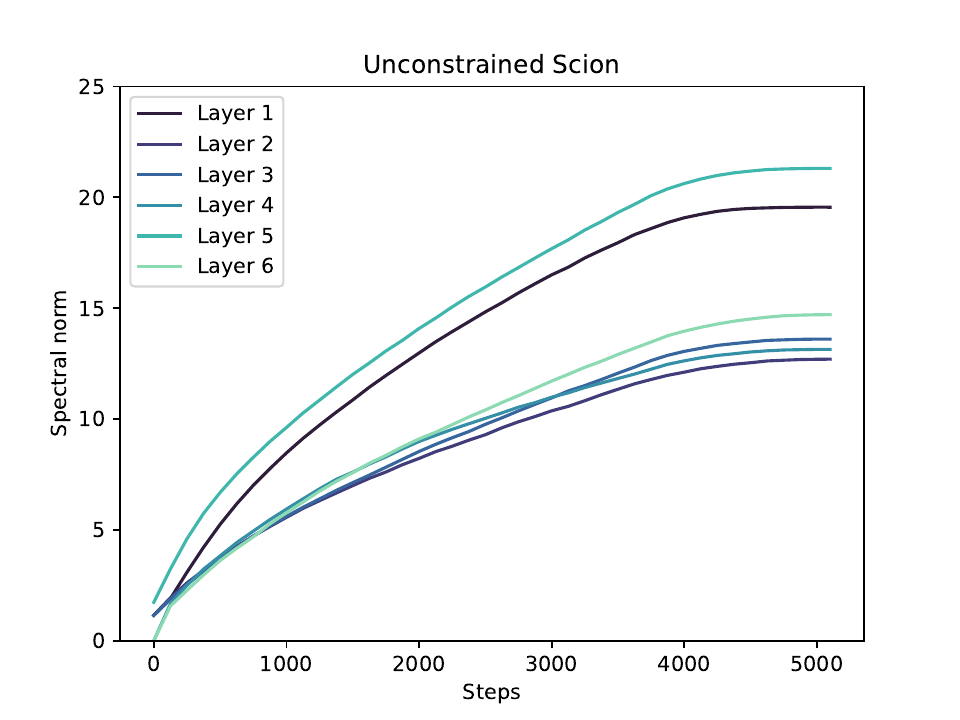}%
\includegraphics[width=0.38\textwidth]{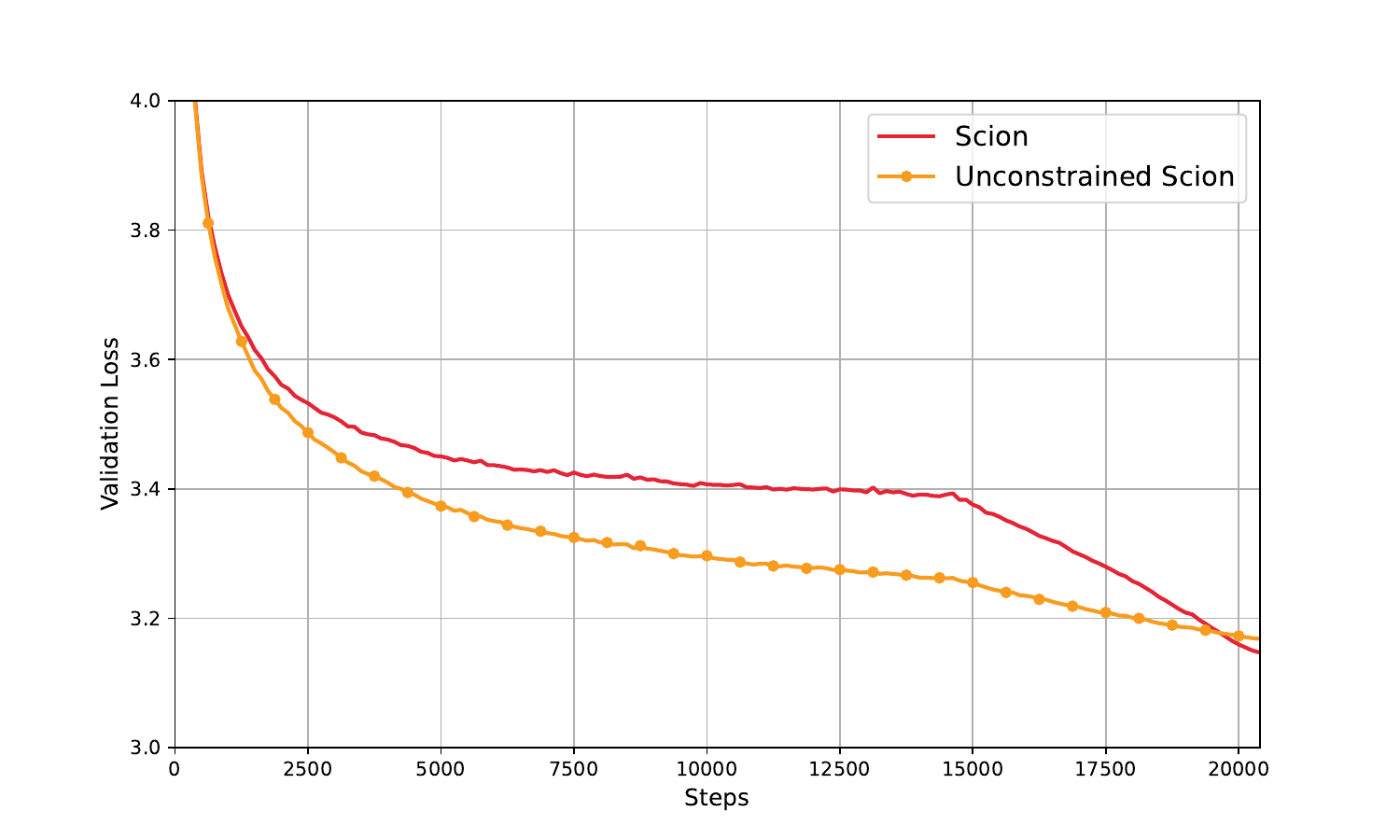}
\caption{(left,middle) Spectral norm of weight matrices on NanoGPT (124M) throughout one training run. 
Recall that the linear stepsize decay starts at iteration 3650.
As expected \Scion leads to weights with smaller norms than \uScion.
(right) Long run on NanoGPT. The norm control of the constrained algorithm \Scion appears to be particularly important for long runs as also observed in \citet{liu2025muon} regarding Muon \textbackslash w weight decay.}
\label{fig:nanoGPT:spectral-norm}
\end{figure}

\begin{figure*}[!h]
\centering
\includegraphics[width=0.329\textwidth]{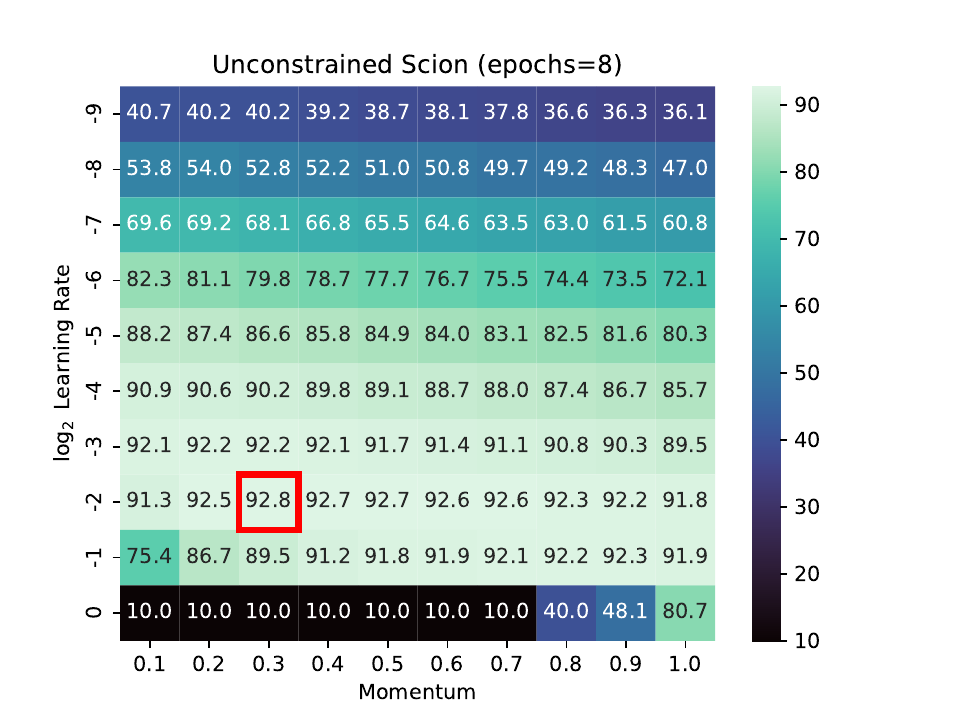}%
\includegraphics[width=0.329\textwidth]{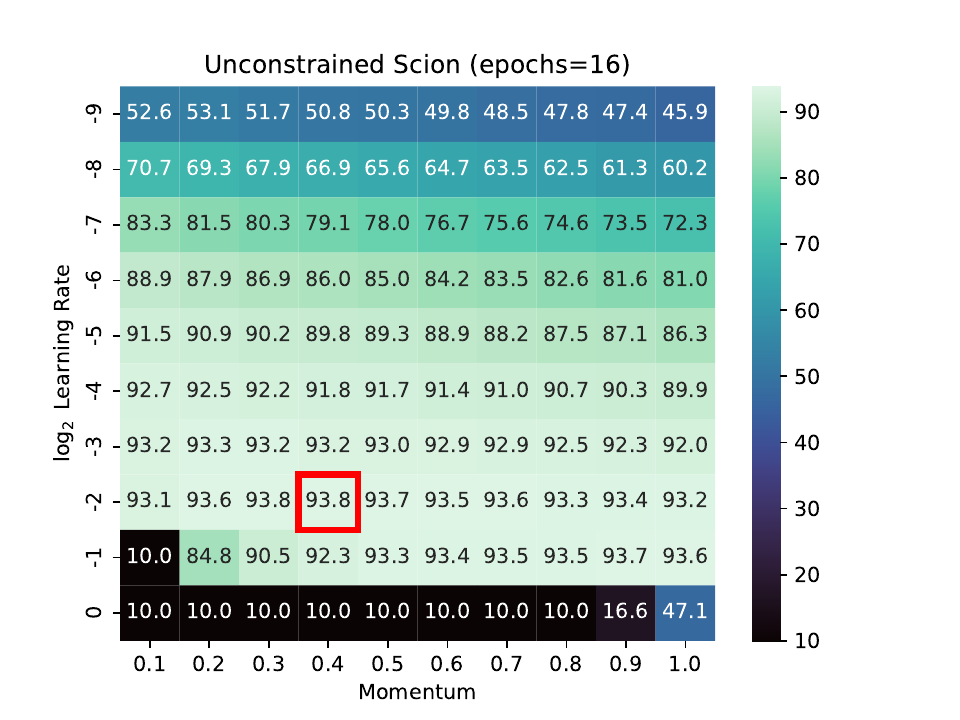}%
\includegraphics[width=0.329\textwidth]{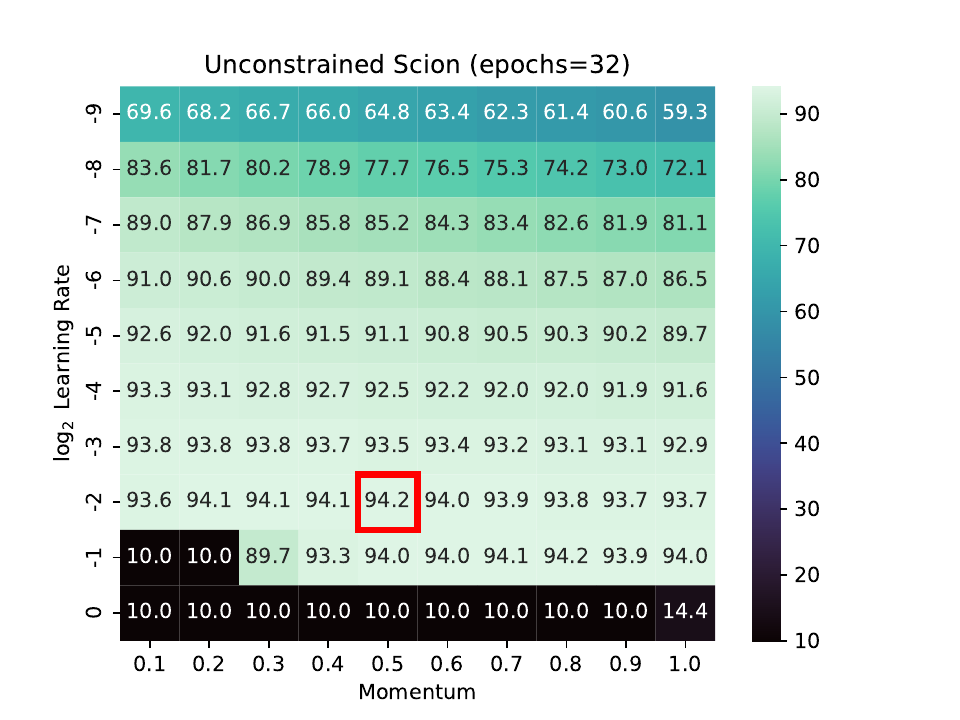}
\includegraphics[width=0.32\textwidth]{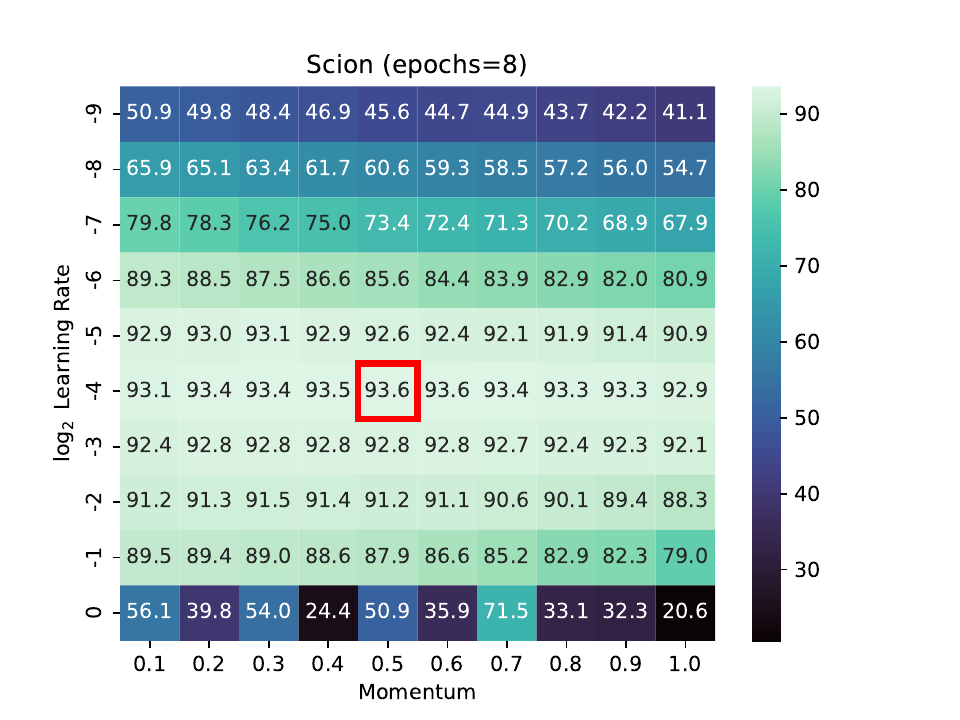}
\includegraphics[width=0.32\textwidth]{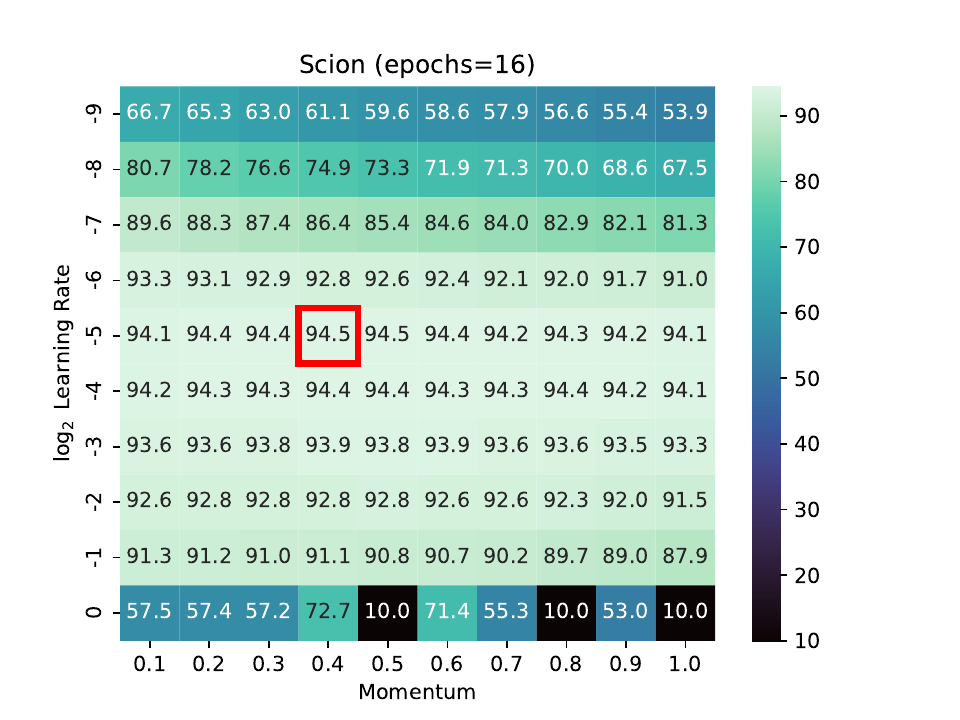}
\includegraphics[width=0.32\textwidth]{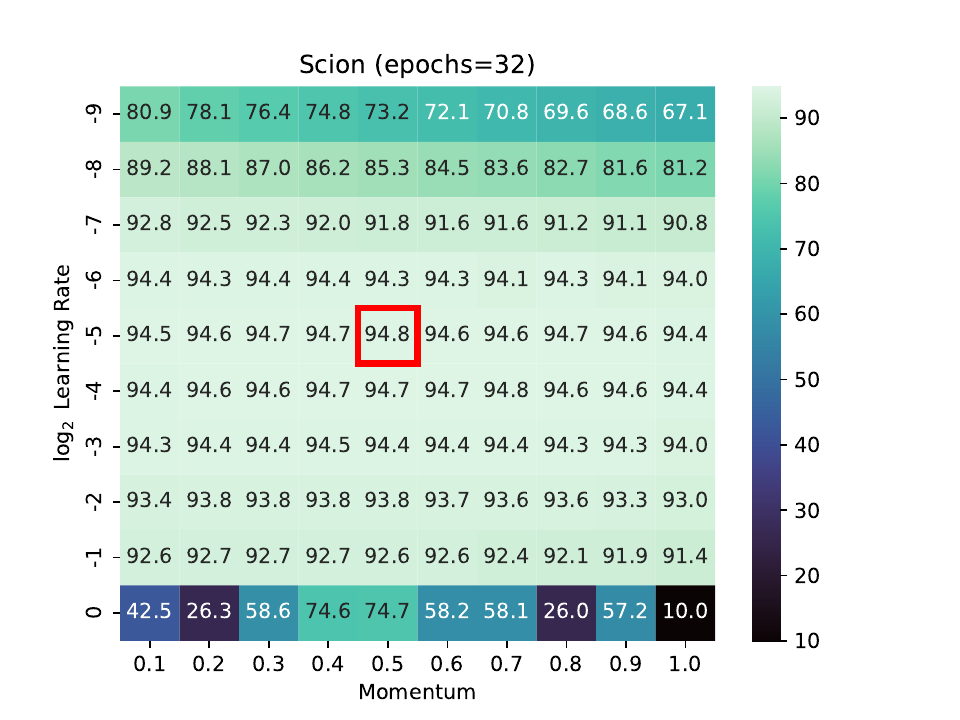}

\caption{The optimal hyperparameters for ({\sc Unconstrained}) \Scion on the airbench setting with increasing total number of epochs (indicated in red).
\Scion outperforms \uScion, which is not surprising since norm control is important in the setting.}
\label{fig:GSFW:hyperparam_sweep}
\end{figure*}

\begin{figure}[t]
\centering
\includegraphics[width=0.33\textwidth]{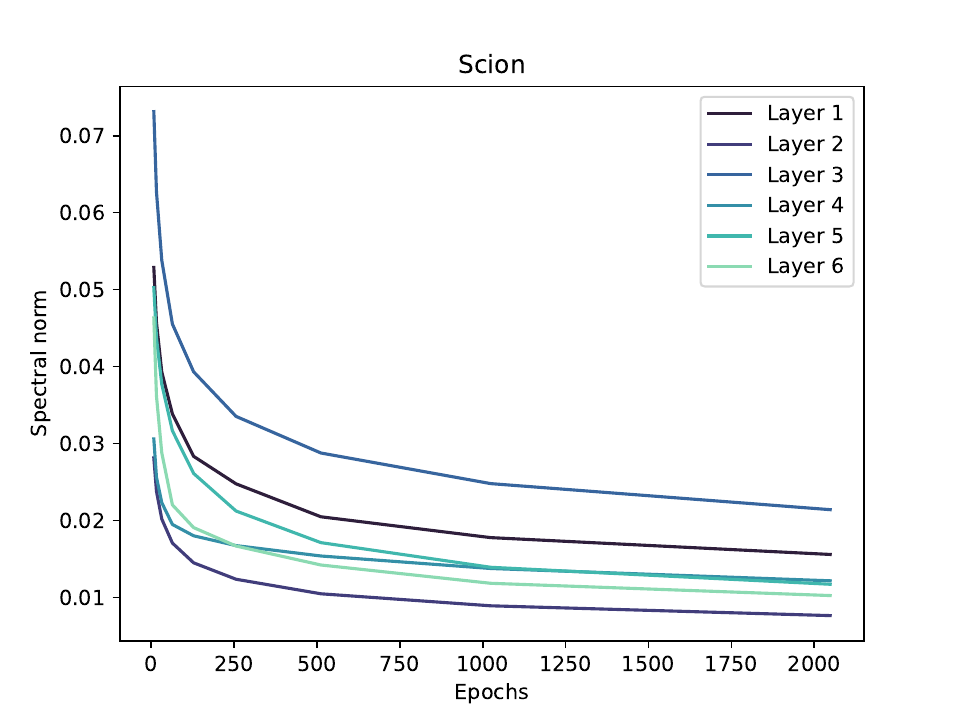}%
\includegraphics[width=0.33\textwidth]{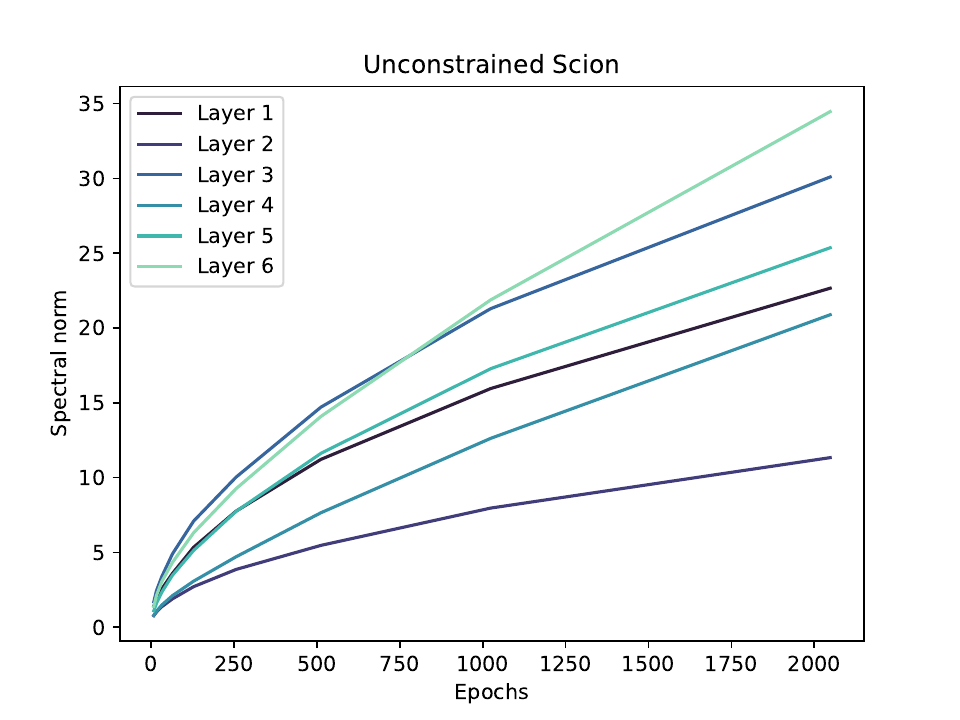}%
\includegraphics[width=0.33\textwidth]{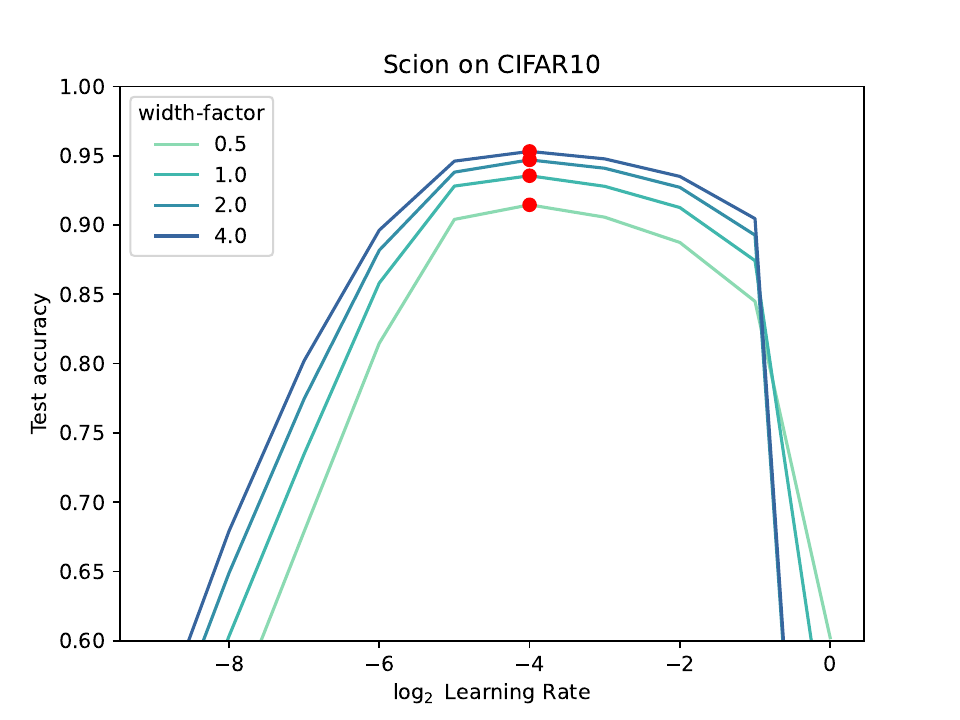}
\caption{(left,middle) Spectral norm of weight matrices on CIFAR10, while sweeping over total number of epochs. 
The spectral norm grows empirically as $\sqrt{n}$ for \uScion with a fixed stepsize $\gamma$, whereas the norm (provably) stays bounded for \Scion.
(right) The optimal stepsize transfers across width.}
\label{fig:epoch_sweep}
\label{fig:CIFAR}
\end{figure}

\begin{figure*}[!h]
\centering
\includegraphics[width=0.5\textwidth]{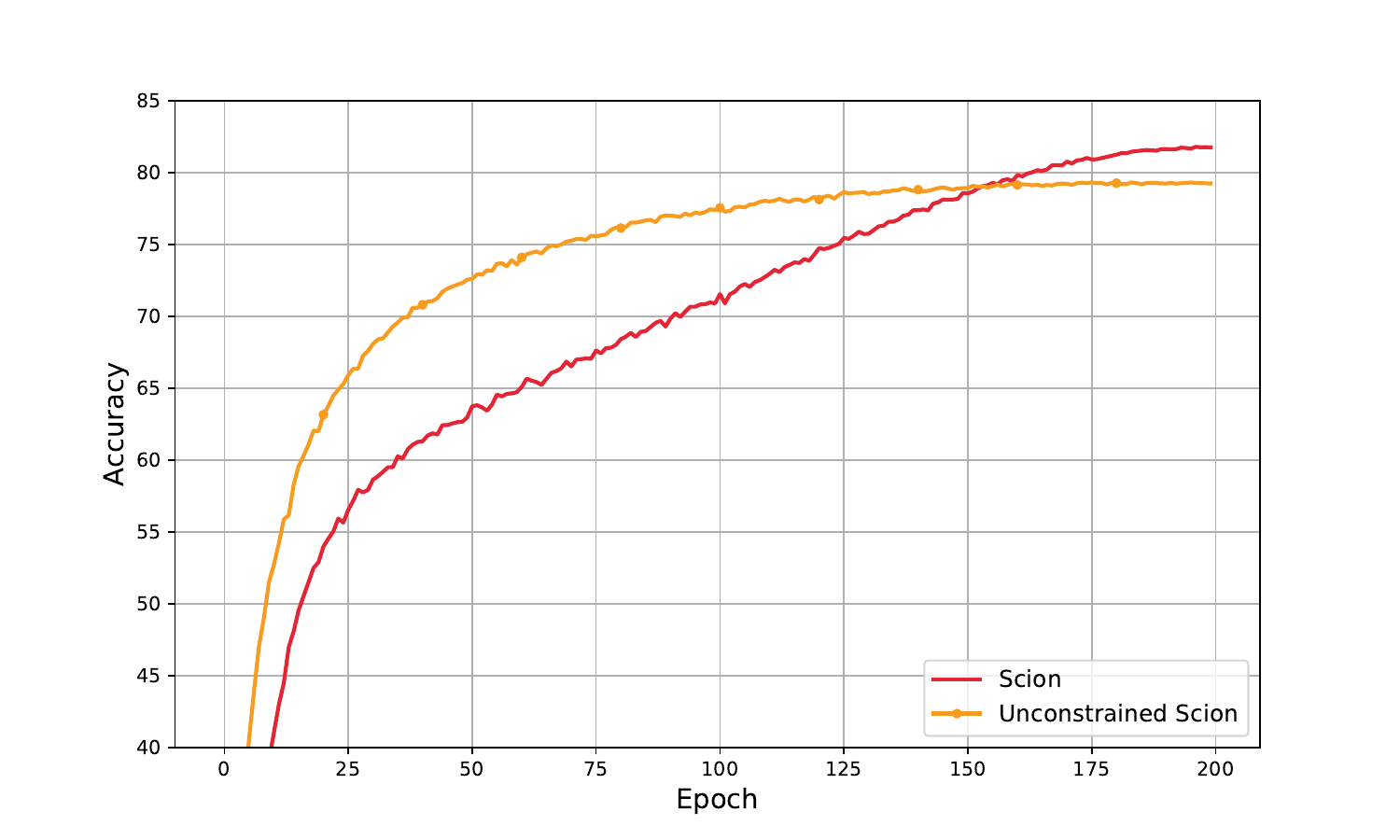}
\caption{Test accuracy curve on the DeiT-base model. The more stringent norm control of \Scion is beneficial, similar to what is observed for the CIFAR10 experiments.}
\label{fig:DeiT-base:loss-curve}
\end{figure*}

\begin{table}[!h]
    \centering
\caption{NanoGPT hyperparameters.}
\label{tbl:hyperparams:nanoGPT}
    \begin{tabular}{|c|c|c|c|c|c|} 
        \hline 
        Hyperparameter & AdamW & \ref{eq:Muon} & \uScion & \Scion \\
        \hline \hline 
        Layers &  \multicolumn{4}{c|}{12}   \\
        Head dim &  \multicolumn{4}{c|}{128}   \\
        \hline
        Activation function & \multicolumn{2}{c|}{ReLU$^2$} & \multicolumn{2}{c|}{ScaledReLU$^2$ (see \Cref{app:relu2})}   \\
        \hline
        Vocabulary size &  \multicolumn{4}{c|}{50304}   \\
        Dataset &  \multicolumn{4}{c|}{FineWeb}   \\
        batch size &  \multicolumn{4}{c|}{512}   \\
        block size &  \multicolumn{4}{c|}{1024}   \\
        Iterations $n$ & \multicolumn{4}{c|}{5100} \\
        Warmdown & \multicolumn{4}{c|}{28.5\%} \\
        Stepsize schedule & \multicolumn{4}{c|}{Constant then linear decay $\gamma_k = \begin{cases} \gamma & \text{if } k < n-m \\ \gamma \cdot (\frac{n - k}{m}) & \text{if } k \geq n-m \end{cases}$} \\
        \hline 
        Warmup & 5\% & \multicolumn{3}{c|}{0} \\
        Gradient clipping & Yes & \multicolumn{3}{c|}{No} \\
        Momentum $\beta_1$ / $\beta_2$ & 0.9 / 0.95 & \multicolumn{3}{c|}{-} \\
        Averaging parameter $\alpha$ & - & \multicolumn{3}{c|}{0.1} \\
        \hline 
        Muon stepsize multiplier{\color{blue}$^1$} & - & 0.1 & \multicolumn{2}{c|}{-} \\
        Nesterov & - & Yes & \multicolumn{2}{c|}{-} \\
        \hline 
        Boundary init. & \multicolumn{2}{c|}{-} & \multicolumn{2}{c|}{No} \\
        Radius $\rho_1$ / $\rho_\ell$ / $\rho_L$ & \multicolumn{2}{c|}{-} & \multicolumn{2}{c|}{- /50 / 3000} \\
        \hline
    \end{tabular}
    
    {\color{blue}$^1$} Muon uses Adam for the first and last layer. 
      The stepsize for the intermediary layers is multiplied by a constant. 
\end{table}

\begin{table}[!h]
\centering
\caption{Shallow GPT hyperparameters. We increase the batch size to 32, which is the maximum allowed for a model with an embedding size of 4096 on an A100.}
\label{tbl:hyperparams:ShallowGPT}
    \begin{tabular}{|c|c|c|c|}
        \hline 
        Hyperparameter & AdamW & \uScion & \Scion \\
        \hline \hline 
        Layers &  \multicolumn{3}{c|}{3}   \\
        Head dim &  \multicolumn{3}{c|}{64}   \\
        Activation function &  \multicolumn{3}{c|}{$\sqrt{2}\cdot$ GELU (scaled to preserve variance)}   \\
        Vocabulary size &  \multicolumn{3}{c|}{64}   \\
        Dataset &  \multicolumn{3}{c|}{Shakespeare}   \\
        batch size &  \multicolumn{3}{c|}{32}   \\
        block size &  \multicolumn{3}{c|}{1024}   \\
        Iterations $n$ & \multicolumn{3}{c|}{122} \\
        Stepsize schedule & \multicolumn{3}{c|}{Linear decay $\gamma_k = \gamma \cdot (1-k/n)$} \\
        \hline 
        Gradient clipping & Yes & \multicolumn{2}{c|}{No} \\
        $\beta_1$ / $\beta_2$ & 0.9 / 0.95 & \multicolumn{2}{c|}{-} \\
        Averaging parameter $\alpha$ & - & \multicolumn{2}{c|}{0.1} \\
        Boundary init. & \multicolumn{1}{c|}{-} & \multicolumn{2}{c|}{Yes} \\
        Radius $\rho_1$ / $\rho_\ell$ / $\rho_L$ & \multicolumn{1}{c|}{-} & \multicolumn{2}{c|}{1 / 3 / 10} \\
        \hline
    \end{tabular}
\end{table}

\begin{table}[!h]
\centering
\caption{Shallow MLP hyperparameters.}
\label{tbl:hyperparams:MLP}
    \begin{tabular}{|c|c|}
        \hline 
        Hyperparameter & \Scion \\
        \hline \hline 
        Layers &  \multicolumn{1}{c|}{3}   \\
        Activation function &  \multicolumn{1}{c|}{ReLU}   \\
        Dataset &  \multicolumn{1}{c|}{CIFAR10 (50000 training examples)}  \\
        batch size &  \multicolumn{1}{c|}{2048}   \\
        Epochs & \multicolumn{1}{c|}{20} \\
        Stepsize schedule & \multicolumn{1}{c|}{Linear decay $\gamma_k = \gamma \cdot (1-k/n)$} \\
        \hline 
        Averaging parameter $\alpha$ & \multicolumn{1}{c|}{0.1} \\
        Boundary init. & \multicolumn{1}{c|}{Yes} \\
        Radius $\rho_1$ / $\rho_\ell$ / $\rho_L$ & \multicolumn{1}{c|}{1 / 1 / 1024} \\
        \hline
    \end{tabular}
\end{table}

\begin{table}[h!]
\centering
\caption{Hyperparameters for the CNN experiments building on the airbench codebase \citep{airbench_2024}. Batch norm parameters use the Euclidean $\ell_2$ norm and shares scaling factor $\rho_\ell$ with intermediary layers. 
A further optimized configuration can be found in the associated Github repository, where we also conduct a speedrun matching Muon (with Frobenious normalization) with \Scion.}
\label{tbl:hyperparams:airbench}
    \begin{tabular}{|c|c|c|}
        \hline 
        Hyperparameter & \uScion & \Scion \\
        \hline \hline 
        Block size (block 1, block 2, block 3) & \multicolumn{2}{c|}{width factor $\times$ (64, 256, 256)} \\
        Activation function &  \multicolumn{2}{c|}{GELU}   \\
        Dataset &  \multicolumn{2}{c|}{CIFAR10 (50000 training examples)}  \\
        batch size &  \multicolumn{2}{c|}{2000}   \\
        Epochs & \multicolumn{2}{c|}{8} \\
        Stepsize schedule & \multicolumn{2}{c|}{Linear decay $\gamma_k = \gamma \cdot (1-k/n)$} \\
        \hline 
        Averaging parameter $\alpha$ & \multicolumn{2}{c|}{0.5} \\
        Boundary init. & \multicolumn{2}{c|}{Yes} \\
        \hline 
        Radius $\rho_1$ / $\rho_\ell$ / $\rho_L$ & \multicolumn{1}{c|}{1 / 1 / 20} & \multicolumn{1}{c|}{1 / 1 / 100} \\
        \hline
    \end{tabular}
\end{table}

\begin{table}[!h]
\centering
\caption{DeiT-base hyperparameters. \Scion and \uScion uses the configuration (Spectral $\rightarrow$ Spectral $\rightarrow$ Sign) with $\ell_\RMS$-norm constraints for the learnable positional embeddings and class tokens, sharing the scaling factor $\rho_\ell$ of the intermediary layers.}
\label{tbl:hyperparams:DeiT}
    \begin{tabular}{|c|c|c|c|}
        \hline 
        Hyperparameter & AdamW & \uScion & \Scion \\
        \hline \hline 
        Layers &  \multicolumn{3}{c|}{12}   \\
        Head dim &  \multicolumn{3}{c|}{64}   \\
        \hline
        Activation function & GELU & \multicolumn{2}{c|}{$\sqrt{2}\cdot$ GELU (scaled to preserve variance)}   \\
        Normalization function & LayerNorm & \multicolumn{2}{c|}{RMSNorm} \\
        \hline
        Sequence Length &  \multicolumn{3}{c|}{197}   \\
        Dataset &  \multicolumn{3}{c|}{ImageNet-1k}   \\
        \hline
        Stepsize schedule & \multicolumn{3}{c|}{Linear warmup and then cosine decay} \\
        \hline
        Max lr & $0.001$ & $0.00024$ & $0.0004$ \\
        \hline
        Warmup epochs & 5 & \multicolumn{2}{c|}{0} \\
        Start warmup lr & $10^{-6}$ & \multicolumn{2}{c|}{-} \\
        End lr & $10^{-5}$ & \multicolumn{2}{c|}{$10^{-7}$} \\
        Batch size & 1024 & \multicolumn{2}{c|}{4096}   \\
        Epochs & 300 &\multicolumn{2}{c|}{200} \\
        \hline 
        $\beta_1$ / $\beta_2$ & 0.9 / 0.999 & \multicolumn{2}{c|}{-} \\
        Averaging parameter $\alpha$ & - & \multicolumn{2}{c|}{0.1} \\
        Boundary init. & \multicolumn{1}{c|}{-} & \multicolumn{2}{c|}{No} \\
        Radius $\rho_1$ / $\rho_\ell$ / $\rho_L$ & \multicolumn{1}{c|}{-} & \multicolumn{2}{c|}{25 / 25 / 500} \\
        \hline
    \end{tabular}
\end{table}

\end{toappendix}
\section{Experiments}\label{sec:experiments}

For computing the $\lmo$ of layers using a spectral norm constraint, we use the efficient implementation provided in \citet{jordan2024muon} of the Newton-Schultz iteration proposed in \citet{bernstein2024old}.
In this section, Muon \citep{jordan2024muon} refers to the version used in practice, which uses AdamW for the first layer and last layer and Nesterov type momentum.

\vspace{-5pt}
\paragraph{GPT}
We build on the excellent modded-nanogpt codebase \citep{modded_nanogpt_2024}, which makes the following modernizations to \citet{karpathy2023nanogpt}: rotary embeddings is used instead of positional embeddings, RMS norm is used instead of LayerNorm, and linear decay schedule instead of a cosine stepsize, and the ReLU$^2$ instead of GELU activation function (scaled according to \Cref{app:relu2}).
\Scion and \uScion use the (Sign $\rightarrow$ Spectral $\rightarrow$ Sign) configuration with scaling factors in accordance with \Cref{tbl:parameter:lmo,tbl:parameter:lmo:1hot}.
We train for $5100$ iterations with a batchsize of $512$ on the FineWeb dataset (see \Cref{tbl:hyperparams:nanoGPT} regarding hyperparameters).
In comparison with Adam, both Muon and ({\sc Unconstrained}) \Scion do not require learning rate warmup.
We sweep over stepsizes and model width in \Cref{fig:GPT}.

\vspace{-2mm}
From \Cref{fig:GPT}, we observe that the optimal stepsize of \Scion and \uScion transfer across model width as oppose to Adam and Muon.
The 124M model size configuration in \Cref{fig:GPT} corresponds to one of the official speedrun entries of \citet{modded_nanogpt_2024}, for which \Scion has slightly lower validation loss (across 3 runs) than Muon.
Even when Muon is tuned on the largest model size it achieves a validation loss of 2.988 in comparison with 2.984 of \uScion.

Our methods completely remove the need for using Adam otherwise present in the Muon implementation, which permits an implementation that only requires storing one set of weights and one set of gradient (stored in half-precision) across all layers (see \Cref{app:impl}).
The experiments additionally demonstrates that our method works for weight sharing.

\vspace{-4mm}
\paragraph{3B model}
Using the optimal configuration of the 124M parameter proxy model, we perform a large model experiment on a 3B parameter model, which also increases the depth.
Specifically, we take the embedding dimension to be 2560 and the depth to be 36.
We observe in \Cref{tbl:GPT:3B} that \uScion outperforms all other methods.
The loss curve is provided in \Cref{fig:NanoGPT:3B:loss-curve} of \Cref{app:experiments}.
\vspace{-5mm}
\begin{table}[H]
\centering
\caption{Validation loss on a 3B parameter GPT model.}\label{tbl:GPT:3B}
    \begin{tabular}{|c|c|c|c|c|}
        \hline 
        Adam & \ref{eq:Muon} & \uScion & \Scion \\
        \hline
        3.024 & 2.909 & \textbf{2.882} & 2.890 \\
        \hline
    \end{tabular}
\end{table}

\vspace{-7mm}
\paragraph{Large batches}
To test the effect of large batches we fix the total number of tokens for the 124M parameter model and sweep over the batch sizes while rescaling the total number of steps accordingly.
The stepsize $\gamma$ is optimized over $\{2^{-17},2^{-16},...,2^{-5}\}$ for each combination of batch size and optimizer.
We observe that ({\sc Unconstrained}) \Scion is better at maintaining a low validation loss with increasing batch size than the baselines (\textit{cf.}, \Cref{fig:GPT:bz}).

For large batches, \Scion achieves a significantly better validation loss than Muon.
To assess the implication for training time, we perform an additional experiment for the large batch size of 6144, where we reduce the number of iterations until \Scion matches the larger validation loss of Muon. 
We find that \Scion can achieve the same validation loss as Muon with a 25\% smaller wallclock time.

We provide two possible explanations for why \Scion favors large batches:
The treatment of the noise is tied to the Euclidean geometry, 
in order to exploit the unbiasedness of the stochastic oracle, as apparent from the analysis.
Additionally, in the extreme case of a single sample, the gradients of the linear layers are rank-1 and all Schatten norms become equivalent (e.g., Frobenius and Spectral norm). %

\vspace{-2mm}
\paragraph{Image classification}
We additionally test on vision transformers (ViT) on ImageNet and convolutional neural networks (CNN) on the CIFAR10 dataset using the configuration (Spectral $\rightarrow$ Spectral $\rightarrow$ Sign).
The explicit control on the norm provided by \Scion circumvents the need for the Frobenius norm normalization of the weights present in the CIFAR10 implementation of Muon \citep{airbench_2024}.
The results regarding ImageNet are shown in \Cref{fig:ImageNet} (\textit{cf}. \Cref{app:hyperparams} for details and experiments on CIFAR10).

\section*{Impact Statement}\label{sec:impact}
This paper presents work whose goal is to advance the field of Machine Learning. There are many potential societal consequences of our work, none which we feel must be specifically highlighted here.

\section*{Acknowledgement}\label{sec:acknowledgements}
We thank Leena Chennuru Vankadara and Jeremy Bernstein for helpful discussion. 
This work was supported as part of the Swiss AI Initiative by a grant from the Swiss National Supercomputing Centre (CSCS) under project ID a06 on Alps.
This work was supported by the Swiss National Science Foundation (SNSF) under grant number 200021\_205011. 
This work was supported by Hasler Foundation Program: Hasler Responsible AI (project number 21043).
Research was sponsored by the Army Research Office and was accomplished under Grant Number W911NF-24-1-0048.
ASF was supported by a public grant from the Fondation Mathématique Jacques Hadamard.

\bibliographystyle{icml2025}
\bibliography{refs.bib}

\begin{thebibliography}{68}
\providecommand{\natexlab}[1]{#1}
\providecommand{\url}[1]{\texttt{#1}}
\expandafter\ifx\csname urlstyle\endcsname\relax
  \providecommand{\doi}[1]{doi: #1}\else
  \providecommand{\doi}{doi: \begingroup \urlstyle{rm}\Url}\fi

\bibitem[Amari(1998)]{amari1998natural}
Amari, S.-I.
\newblock Natural gradient works efficiently in learning.
\newblock \emph{Neural computation}, 10\penalty0 (2):\penalty0 251--276, 1998.

\bibitem[Arjevani et~al.(2022)Arjevani, Carmon, Duchi, Foster, Srebro, and
  Woodworth]{arjevani2022lowerboundsnonconvexstochastic}
Arjevani, Y., Carmon, Y., Duchi, J.~C., Foster, D.~J., Srebro, N., and
  Woodworth, B.
\newblock Lower bounds for non-convex stochastic optimization, 2022.
\newblock URL \url{https://arxiv.org/abs/1912.02365}.

\bibitem[Arjovsky et~al.(2017)Arjovsky, Chintala, and
  Bottou]{arjovsky2017wasserstein}
Arjovsky, M., Chintala, S., and Bottou, L.
\newblock Wasserstein generative adversarial networks.
\newblock In \emph{International conference on machine learning}, pp.\
  214--223. PMLR, 2017.

\bibitem[Balles et~al.(2020)Balles, Pedregosa, and Roux]{balles2020geometry}
Balles, L., Pedregosa, F., and Roux, N.~L.
\newblock The geometry of sign gradient descent.
\newblock \emph{arXiv preprint arXiv:2002.08056}, 2020.

\bibitem[Bartlett et~al.(2017)Bartlett, Foster, and Telgarsky]{GenBound17}
Bartlett, P.~L., Foster, D.~J., and Telgarsky, M.~J.
\newblock Spectrally-normalized margin bounds for neural networks.
\newblock In Guyon, I., Luxburg, U.~V., Bengio, S., Wallach, H., Fergus, R.,
  Vishwanathan, S., and Garnett, R. (eds.), \emph{Advances in Neural
  Information Processing Systems}, volume~30. Curran Associates, Inc., 2017.
\newblock URL
  \url{https://proceedings.neurips.cc/paper_files/paper/2017/file/b22b257ad0519d4500539da3c8bcf4dd-Paper.pdf}.

\bibitem[Bauschke \& Lucet(2012)Bauschke and Lucet]{bauschke2012fenchel}
Bauschke, H. and Lucet, Y.
\newblock What is a {Fenchel} conjugate.
\newblock \emph{Notices of the AMS}, 59\penalty0 (1):\penalty0 44--46, 2012.

\bibitem[Bernstein \& Newhouse(2024{\natexlab{a}})Bernstein and
  Newhouse]{bernstein2024modular}
Bernstein, J. and Newhouse, L.
\newblock Modular duality in deep learning.
\newblock \emph{arXiv preprint arXiv:2410.21265}, 2024{\natexlab{a}}.

\bibitem[Bernstein \& Newhouse(2024{\natexlab{b}})Bernstein and
  Newhouse]{bernstein2024old}
Bernstein, J. and Newhouse, L.
\newblock Old optimizer, new norm: An anthology.
\newblock \emph{arXiv preprint arXiv:2409.20325}, 2024{\natexlab{b}}.

\bibitem[Bernstein et~al.(2018)Bernstein, Wang, Azizzadenesheli, and
  Anandkumar]{bernstein2018signsgd}
Bernstein, J., Wang, Y.-X., Azizzadenesheli, K., and Anandkumar, A.
\newblock {signSGD}: Compressed optimisation for non-convex problems.
\newblock In \emph{International Conference on Machine Learning}, pp.\
  560--569. PMLR, 2018.

\bibitem[Carlson et~al.(2015{\natexlab{a}})Carlson, Cevher, and
  Carin]{carlson2015stochastic}
Carlson, D., Cevher, V., and Carin, L.
\newblock Stochastic spectral descent for restricted {Boltzmann} machines.
\newblock In \emph{Artificial Intelligence and Statistics}, pp.\  111--119.
  PMLR, 2015{\natexlab{a}}.

\bibitem[Carlson et~al.(2015{\natexlab{b}})Carlson, Hsieh, Collins, Carin, and
  Cevher]{carlson2015stochasticb}
Carlson, D., Hsieh, Y.-P., Collins, E., Carin, L., and Cevher, V.
\newblock Stochastic spectral descent for discrete graphical models.
\newblock \emph{IEEE Journal of Selected Topics in Signal Processing},
  10\penalty0 (2):\penalty0 296--311, 2015{\natexlab{b}}.

\bibitem[Carlson et~al.(2015{\natexlab{c}})Carlson, Collins, Hsieh, Carin, and
  Cevher]{carlson2015preconditioned}
Carlson, D.~E., Collins, E., Hsieh, Y.-P., Carin, L., and Cevher, V.
\newblock Preconditioned spectral descent for deep learning.
\newblock \emph{Advances in neural information processing systems}, 28,
  2015{\natexlab{c}}.

\bibitem[Chen et~al.(2023)Chen, Liu, Liang, and Liu]{chen2023lion}
Chen, L., Liu, B., Liang, K., and Liu, Q.
\newblock Lion secretly solves constrained optimization: As lyapunov predicts.
\newblock \emph{arXiv preprint arXiv:2310.05898}, 2023.

\bibitem[Cisse et~al.(2017)Cisse, Bojanowski, Grave, Dauphin, and
  Usunier]{cisse2017parseval}
Cisse, M., Bojanowski, P., Grave, E., Dauphin, Y., and Usunier, N.
\newblock Parseval networks: Improving robustness to adversarial examples.
\newblock In \emph{International conference on machine learning}, pp.\
  854--863. PMLR, 2017.

\bibitem[Clarkson(2010)]{ken-fw}
Clarkson, K.~L.
\newblock Coresets, sparse greedy approximation, and the frank-wolfe algorithm.
\newblock \emph{ACM Trans. Algorithms}, 6\penalty0 (4), September 2010.
\newblock ISSN 1549-6325.
\newblock \doi{10.1145/1824777.1824783}.
\newblock URL \url{https://doi.org/10.1145/1824777.1824783}.

\bibitem[Conn et~al.(2000)Conn, Gould, and Toint]{conn2000trust}
Conn, A.~R., Gould, N.~I., and Toint, P.~L.
\newblock \emph{Trust region methods}.
\newblock SIAM, 2000.

\bibitem[Cutkosky \& Mehta(2020)Cutkosky and Mehta]{cutkosky2020momentum}
Cutkosky, A. and Mehta, H.
\newblock Momentum improves normalized {SGD}.
\newblock In \emph{International conference on machine learning}, pp.\
  2260--2268. PMLR, 2020.

\bibitem[Cutkosky \& Orabona(2019)Cutkosky and Orabona]{cutkosky2019momentum}
Cutkosky, A. and Orabona, F.
\newblock Momentum-based variance reduction in non-convex sgd.
\newblock \emph{Advances in neural information processing systems}, 32, 2019.

\bibitem[Dahl et~al.(2023)Dahl, Schneider, Nado, Agarwal, Sastry, Hennig,
  Medapati, Eschenhagen, Kasimbeg, Suo, et~al.]{dahl2023benchmarking}
Dahl, G.~E., Schneider, F., Nado, Z., Agarwal, N., Sastry, C.~S., Hennig, P.,
  Medapati, S., Eschenhagen, R., Kasimbeg, P., Suo, D., et~al.
\newblock Benchmarking neural network training algorithms.
\newblock \emph{arXiv preprint arXiv:2306.07179}, 2023.

\bibitem[D'Angelo et~al.(2023)D'Angelo, Andriushchenko, Varre, and
  Flammarion]{d2023we}
D'Angelo, F., Andriushchenko, M., Varre, A., and Flammarion, N.
\newblock Why do we need weight decay in modern deep learning?
\newblock \emph{arXiv preprint arXiv:2310.04415}, 2023.

\bibitem[Defazio et~al.(2024)Defazio, Yang, Mehta, Mishchenko, Khaled, and
  Cutkosky]{defazio2024roadscheduled}
Defazio, A., Yang, X.~A., Mehta, H., Mishchenko, K., Khaled, A., and Cutkosky,
  A.
\newblock The road less scheduled, 2024.
\newblock URL \url{https://arxiv.org/abs/2405.15682}.

\bibitem[Duchi et~al.(2011)Duchi, Hazan, and Singer]{JMLR:v12:duchi11a}
Duchi, J., Hazan, E., and Singer, Y.
\newblock Adaptive subgradient methods for online learning and stochastic
  optimization.
\newblock \emph{Journal of Machine Learning Research}, 12\penalty0
  (61):\penalty0 2121--2159, 2011.
\newblock URL \url{http://jmlr.org/papers/v12/duchi11a.html}.

\bibitem[El~Halabi(2018)]{el2018learning}
El~Halabi, M.
\newblock Learning with structured sparsity: From discrete to convex and back.
\newblock Technical report, EPFL, 2018.

\bibitem[Flynn(2017)]{flynn2017duality}
Flynn, T.
\newblock The duality structure gradient descent algorithm: analysis and
  applications to neural networks.
\newblock \emph{arXiv preprint arXiv:1708.00523}, 2017.

\bibitem[Frank et~al.(1956)Frank, Wolfe, et~al.]{frank1956algorithm}
Frank, M., Wolfe, P., et~al.
\newblock An algorithm for quadratic programming.
\newblock \emph{Naval research logistics quarterly}, 3\penalty0 (1-2):\penalty0
  95--110, 1956.

\bibitem[Ginsburg et~al.(2019)Ginsburg, Castonguay, Hrinchuk, Kuchaiev,
  Lavrukhin, Leary, Li, Nguyen, Zhang, and Cohen]{ginsburg2019stochastic}
Ginsburg, B., Castonguay, P., Hrinchuk, O., Kuchaiev, O., Lavrukhin, V., Leary,
  R., Li, J., Nguyen, H., Zhang, Y., and Cohen, J.~M.
\newblock Stochastic gradient methods with layer-wise adaptive moments for
  training of deep networks.
\newblock \emph{arXiv preprint arXiv:1905.11286}, 2019.

\bibitem[Gupta et~al.(2017)Gupta, Koren, and Singer]{gupta2017unified}
Gupta, V., Koren, T., and Singer, Y.
\newblock A unified approach to adaptive regularization in online and
  stochastic optimization.
\newblock \emph{arXiv preprint arXiv:1706.06569}, 2017.

\bibitem[Hazan \& Kale(2012)Hazan and Kale]{hazan2012projection}
Hazan, E. and Kale, S.
\newblock Projection-free online learning.
\newblock \emph{arXiv preprint arXiv:1206.4657}, 2012.

\bibitem[Hazan et~al.(2015)Hazan, Levy, and Shalev-Shwartz]{hazan2015beyond}
Hazan, E., Levy, K., and Shalev-Shwartz, S.
\newblock Beyond convexity: Stochastic quasi-convex optimization.
\newblock \emph{Advances in neural information processing systems}, 28, 2015.

\bibitem[He et~al.(2015)He, Zhang, Ren, and Sun]{he2015delving}
He, K., Zhang, X., Ren, S., and Sun, J.
\newblock Delving deep into rectifiers: Surpassing human-level performance on
  imagenet classification.
\newblock In \emph{Proceedings of the IEEE international conference on computer
  vision}, pp.\  1026--1034, 2015.

\bibitem[Hinton et~al.(2012)Hinton, Srivastava, and Swersky]{hinton2012neural}
Hinton, G., Srivastava, N., and Swersky, K.
\newblock Neural networks for machine learning lecture 6a overview of
  mini-batch gradient descent.
\newblock \emph{Cited on}, 14\penalty0 (8):\penalty0 2, 2012.

\bibitem[Jaggi(2013)]{jaggi2013revisiting}
Jaggi, M.
\newblock Revisiting {Frank-Wolfe}: Projection-free sparse convex optimization.
\newblock In \emph{International conference on machine learning}, pp.\
  427--435. PMLR, 2013.

\bibitem[Jordan(2024)]{airbench_2024}
Jordan, K.
\newblock Cifar-10 airbench, 2024.
\newblock URL \url{https://github.com/KellerJordan/cifar10-airbench}.

\bibitem[Jordan et~al.(2024{\natexlab{a}})Jordan, Bernstein, Rappazzo,
  @fernbear.bsky.social, Vlado, Jiacheng, Cesista, Koszarsky, and
  @Grad62304977]{modded_nanogpt_2024}
Jordan, K., Bernstein, J., Rappazzo, B., @fernbear.bsky.social, Vlado, B.,
  Jiacheng, Y., Cesista, F., Koszarsky, B., and @Grad62304977.
\newblock modded-nanogpt: Speedrunning the nanogpt baseline,
  2024{\natexlab{a}}.
\newblock URL \url{https://github.com/KellerJordan/modded-nanogpt}.

\bibitem[Jordan et~al.(2024{\natexlab{b}})Jordan, Jin, Boza, Jiacheng, Cecista,
  Newhouse, and Bernstein]{jordan2024muon}
Jordan, K., Jin, Y., Boza, V., Jiacheng, Y., Cecista, F., Newhouse, L., and
  Bernstein, J.
\newblock Muon: An optimizer for hidden layers in neural networks,
  2024{\natexlab{b}}.
\newblock URL \url{https://kellerjordan.github.io/posts/muon/}.

\bibitem[Karpathy(2023)]{karpathy2023nanogpt}
Karpathy, A.
\newblock {nanoGPT}.
\newblock \url{https://github.com/karpathy/nanoGPT}, 2023.
\newblock Accessed: 2025-01-25.

\bibitem[Karras et~al.(2024)Karras, Aittala, Lehtinen, Hellsten, Aila, and
  Laine]{karras2024analyzing}
Karras, T., Aittala, M., Lehtinen, J., Hellsten, J., Aila, T., and Laine, S.
\newblock Analyzing and improving the training dynamics of diffusion models.
\newblock In \emph{Proceedings of the IEEE/CVF Conference on Computer Vision
  and Pattern Recognition}, pp.\  24174--24184, 2024.

\bibitem[Kelner et~al.(2014)Kelner, Lee, Orecchia, and
  Sidford]{kelner2014almost}
Kelner, J.~A., Lee, Y.~T., Orecchia, L., and Sidford, A.
\newblock An almost-linear-time algorithm for approximate max flow in
  undirected graphs, and its multicommodity generalizations.
\newblock In \emph{Proceedings of the twenty-fifth annual ACM-SIAM symposium on
  Discrete algorithms}, pp.\  217--226. SIAM, 2014.

\bibitem[Kingma(2014)]{kingma2014adam}
Kingma, D.~P.
\newblock Adam: A method for stochastic optimization.
\newblock \emph{arXiv preprint arXiv:1412.6980}, 2014.

\bibitem[Lange(2016)]{lange2016mm}
Lange, K.
\newblock \emph{MM optimization algorithms}.
\newblock SIAM, 2016.

\bibitem[Large et~al.(2024)Large, Liu, Huh, Bahng, Isola, and
  Bernstein]{large2024scalable}
Large, T., Liu, Y., Huh, M., Bahng, H., Isola, P., and Bernstein, J.
\newblock Scalable optimization in the modular norm.
\newblock \emph{arXiv preprint arXiv:2405.14813}, 2024.

\bibitem[Li(2017)]{li2017preconditioned}
Li, X.-L.
\newblock Preconditioned stochastic gradient descent.
\newblock \emph{IEEE transactions on neural networks and learning systems},
  29\penalty0 (5):\penalty0 1454--1466, 2017.

\bibitem[Liu et~al.(2025)Liu, Su, Yao, Jiang, Lai, Du, Qin, Xu, Lu, Yan,
  et~al.]{liu2025muon}
Liu, J., Su, J., Yao, X., Jiang, Z., Lai, G., Du, Y., Qin, Y., Xu, W., Lu, E.,
  Yan, J., et~al.
\newblock Muon is scalable for llm training.
\newblock \emph{arXiv preprint arXiv:2502.16982}, 2025.

\bibitem[Lu et~al.(2022)Lu, Luo, Chen, Chen, Liu, and Wang]{lu2022learning}
Lu, M., Luo, X., Chen, T., Chen, W., Liu, D., and Wang, Z.
\newblock Learning pruning-friendly networks via {Frank-Wolfe}: One-shot,
  any-sparsity, and no retraining.
\newblock In \emph{International Conference on Learning Representations}, 2022.

\bibitem[McMahan \& Streeter(2010)McMahan and Streeter]{mcmahan2010}
McMahan, H.~B. and Streeter, M.
\newblock Adaptive bound optimization for online convex optimization, 2010.
\newblock URL \url{https://arxiv.org/abs/1002.4908}.

\bibitem[Miyato et~al.(2018)Miyato, Kataoka, Koyama, and
  Yoshida]{miyato2018spectral}
Miyato, T., Kataoka, T., Koyama, M., and Yoshida, Y.
\newblock Spectral normalization for generative adversarial networks.
\newblock \emph{arXiv preprint arXiv:1802.05957}, 2018.

\bibitem[Mokhtari et~al.(2020)Mokhtari, Hassani, and
  Karbasi]{mokhtari2020stochastic}
Mokhtari, A., Hassani, H., and Karbasi, A.
\newblock Stochastic conditional gradient methods: From convex minimization to
  submodular maximization.
\newblock \emph{Journal of machine learning research}, 21\penalty0
  (105):\penalty0 1--49, 2020.

\bibitem[Nesterov(2012)]{nesterov2012efficiency}
Nesterov, Y.
\newblock Efficiency of coordinate descent methods on huge-scale optimization
  problems.
\newblock \emph{SIAM Journal on Optimization}, 22\penalty0 (2):\penalty0
  341--362, 2012.

\bibitem[Orabona(2023)]{orabona2023normalized}
Orabona, F.
\newblock Normalized gradients for all.
\newblock \emph{arXiv preprint arXiv:2308.05621}, 2023.

\bibitem[Pethick et~al.(2025)Pethick, Raman, Minorics, Hong, Sabach, and
  Cevher]{pethicknusam}
Pethick, T., Raman, P., Minorics, L., Hong, M., Sabach, S., and Cevher, V.
\newblock $\nu$sam: Memory-efficient sharpness-aware minimization via nuclear
  norm constraints.
\newblock \emph{Transactions on Machine Learning Research}, 2025.

\bibitem[Pokutta et~al.(2020)Pokutta, Spiegel, and Zimmer]{pokutta2020deep}
Pokutta, S., Spiegel, C., and Zimmer, M.
\newblock Deep neural network training with {Frank-Wolfe}.
\newblock \emph{arXiv preprint arXiv:2010.07243}, 2020.

\bibitem[Pooladzandi \& Li(2024)Pooladzandi and Li]{pooladzandi2024curvature}
Pooladzandi, O. and Li, X.-L.
\newblock Curvature-informed sgd via general purpose lie-group preconditioners.
\newblock \emph{arXiv preprint arXiv:2402.04553}, 2024.

\bibitem[Saxe et~al.(2013)Saxe, McClelland, and Ganguli]{saxe2013exact}
Saxe, A.~M., McClelland, J.~L., and Ganguli, S.
\newblock Exact solutions to the nonlinear dynamics of learning in deep linear
  neural networks.
\newblock \emph{arXiv preprint arXiv:1312.6120}, 2013.

\bibitem[Schulman et~al.(2015)Schulman, Levine, Abbeel, Jordan, and
  Moritz]{schulman2015trust}
Schulman, J., Levine, S., Abbeel, P., Jordan, M., and Moritz, P.
\newblock Trust region policy optimization.
\newblock In \emph{International conference on machine learning}, pp.\
  1889--1897. PMLR, 2015.

\bibitem[Touvron et~al.(2021)Touvron, Cord, Douze, Massa, Sablayrolles, and
  J{\'e}gou]{touvron2021training}
Touvron, H., Cord, M., Douze, M., Massa, F., Sablayrolles, A., and J{\'e}gou,
  H.
\newblock Training data-efficient image transformers \& distillation through
  attention.
\newblock In \emph{International conference on machine learning}, pp.\
  10347--10357. PMLR, 2021.

\bibitem[Vershynin(2018)]{vershynin2018high}
Vershynin, R.
\newblock \emph{High-dimensional probability: An introduction with applications
  in data science}, volume~47.
\newblock Cambridge university press, 2018.

\bibitem[Vondr{\'a}k(2008)]{vondrak2008optimal}
Vondr{\'a}k, J.
\newblock Optimal approximation for the submodular welfare problem in the value
  oracle model.
\newblock In \emph{Proceedings of the fortieth annual ACM symposium on Theory
  of computing}, pp.\  67--74, 2008.

\bibitem[Wright(2006)]{wright2006numerical}
Wright, S.~J.
\newblock Numerical optimization, 2006.

\bibitem[Xie \& Li(2024)Xie and Li]{xie2024implicit}
Xie, S. and Li, Z.
\newblock Implicit bias of {AdamW}: $\ell_\infty$ norm constrained
  optimization.
\newblock \emph{arXiv preprint arXiv:2404.04454}, 2024.

\bibitem[Yang \& Hu(2021)Yang and Hu]{yang2021tensor}
Yang, G. and Hu, E.~J.
\newblock Tensor programs iv: Feature learning in infinite-width neural
  networks.
\newblock In \emph{International Conference on Machine Learning}, pp.\
  11727--11737. PMLR, 2021.

\bibitem[Yang et~al.(2022)Yang, Hu, Babuschkin, Sidor, Liu, Farhi, Ryder,
  Pachocki, Chen, and Gao]{yang2022tensor}
Yang, G., Hu, E.~J., Babuschkin, I., Sidor, S., Liu, X., Farhi, D., Ryder, N.,
  Pachocki, J., Chen, W., and Gao, J.
\newblock Tensor programs v: Tuning large neural networks via zero-shot
  hyperparameter transfer.
\newblock \emph{arXiv preprint arXiv:2203.03466}, 2022.

\bibitem[Yang et~al.(2023)Yang, Simon, and Bernstein]{yang2023spectral}
Yang, G., Simon, J.~B., and Bernstein, J.
\newblock A spectral condition for feature learning.
\newblock \emph{arXiv preprint arXiv:2310.17813}, 2023.

\bibitem[You et~al.(2017)You, Gitman, and Ginsburg]{you2017large}
You, Y., Gitman, I., and Ginsburg, B.
\newblock Large batch training of convolutional networks.
\newblock \emph{arXiv preprint arXiv:1708.03888}, 2017.

\bibitem[You et~al.(2019)You, Li, Reddi, Hseu, Kumar, Bhojanapalli, Song,
  Demmel, Keutzer, and Hsieh]{you2019large}
You, Y., Li, J., Reddi, S., Hseu, J., Kumar, S., Bhojanapalli, S., Song, X.,
  Demmel, J., Keutzer, K., and Hsieh, C.-J.
\newblock Large batch optimization for deep learning: Training bert in 76
  minutes.
\newblock \emph{arXiv preprint arXiv:1904.00962}, 2019.

\bibitem[Yu et~al.(2017)Yu, Huang, Lin, Salakhutdinov, and
  Carbonell]{yu2017block}
Yu, A.~W., Huang, L., Lin, Q., Salakhutdinov, R., and Carbonell, J.
\newblock Block-normalized gradient method: An empirical study for training
  deep neural network.
\newblock \emph{arXiv preprint arXiv:1707.04822}, 2017.

\bibitem[Yuan et~al.(2024)Yuan, Liu, Wu, Zhou, and Gu]{yuan2024mars}
Yuan, H., Liu, Y., Wu, S., Zhou, X., and Gu, Q.
\newblock Mars: Unleashing the power of variance reduction for training large
  models.
\newblock \emph{arXiv preprint arXiv:2411.10438}, 2024.

\bibitem[Zamani \& Glineur(2023)Zamani and Glineur]{zamani2023exact}
Zamani, M. and Glineur, F.
\newblock Exact convergence rate of the last iterate in subgradient methods.
\newblock \emph{arXiv preprint arXiv:2307.11134}, 2023.

\bibitem[Zhao et~al.(2020)Zhao, Xie, and Li]{zhao2020stochastic}
Zhao, S.-Y., Xie, Y.-P., and Li, W.-J.
\newblock Stochastic normalized gradient descent with momentum for large batch
  training.
\newblock \emph{arXiv preprint arXiv:2007.13985}, 2020.

\end{thebibliography}

\newpage
\appendix
\onecolumn

\begin{center}
\vspace{7pt}
{\Large \fontseries{bx}\selectfont Appendix}
\end{center}

\renewcommand{\contentsname}{Table of Contents}
\etocdepthtag.toc{mtappendix}
\etocsettagdepth{mtchapter}{none}
\etocsettagdepth{mtappendix}{subsection}
\tableofcontents

\newpage

\end{document}